\newtheorem{theorem}{Theorem}
\newtheorem{proposition}[theorem]{Proposition}
\newtheorem{lemma}[theorem]{Lemma}
\newtheorem{corollary}[theorem]{Corollary}
\theoremstyle{definition}
\newtheorem{definition}[theorem]{Definition}
\newtheorem{example}[theorem]{Example}
\newcommand{\BlackBox}{\rule{1.5ex}{1.5ex}}
\renewenvironment{proof}[1][\proofname]{\par\noindent{\bf #1.~}}{\hfill\BlackBox\\[2mm]}
\def\oh{{\rm o}}
\def\OH{{\rm O}}
\def\indicator#1{{\mathbf 1}\parens{#1}}
\def\Reg{{\rm Reg}}
\def\SliReg{{\rm SliReg}}
\def\RegExp{{\rm RegExp}}
\def\ASSUMPTION#1{({\bf A#1})}
\def\kl{{\rm kl}}
    \def\STEP#1{(\textbf{STEP~#1})}
\def\ASSUMPTION#1{(\textbf{A#1})}
\begin{document}

\title{\bf The Sliding Regret in Stochastic Bandits:
    \\
    \bf Discriminating Index and Randomized Policies
}

\author{%
    \bf Victor Boone\\
    \small \texttt{victor.boone@univ-grenoble-alpes.fr}\\
    \small Univ.~Grenoble Alpes, Inria, CNRS, Grenoble INP, LIG, 38000 Grenoble, France
}
\date{}

% \editor{My editor}

\maketitle

\begin{abstract}%   <- trailing '%' for backward compatibility of .sty file
    This paper studies the one-shot behavior of no-regret algorithms for stochastic bandits.
    Although many algorithms are known to be asymptotically optimal with respect to the expected regret, over a single run, their \emph{pseudo-regret} seems to follow one of two tendencies: it is either smooth or bumpy. 
    To measure this tendency, we introduce a new notion: the \emph{sliding regret}, that measures the worst pseudo-regret over a time-window of fixed length sliding to infinity. 
    We show that randomized methods (e.g.~Thompson Sampling and MED) have optimal sliding regret, while index policies, although possibly asymptotically optimal for the expected regret, have the worst possible sliding regret under regularity conditions on their index (e.g.~UCB, UCB-V, KL-UCB, MOSS, IMED etc.). 
    We further analyze the average bumpiness of the pseudo-regret of index policies via the \emph{regret of exploration}, that we show to be suboptimal as well. 
\end{abstract}

\section{Introduction}

In the stochastic multi-armed bandit problem, an agent picks actions $A_1, A_2, \ldots,$ sequentially and receives rewards accordingly, where each reward $R_t$ is generated by an underlying fixed (but unknown) probability distribution ${\rm F}(A_t)$ associated to the picked action $A_t \in \set{1 ,\ldots, K}$. 
Her goal is to maximize her expected aggregate reward over time. 
This task is equivalent to minimizing the \emph{regret}, given by the expected cumulated reward obtained by only pulling the optimal arm against the actually achieved rewards $T\mu^* - \sum_{t=1}^T R_t$, where $\mu_a$ is the expectation of ${\rm F}_a$ and $\mu^* := \max_a \mu_a$ is maximal achievable expected reward. 
An action (or arm) $a$ such that $\mu_a = \mu^*$ will be said optimal, and suboptimal otherwise. 
A related and key quantity is the \emph{pseudo-regret} which is the partial expectation taken with respect to the rewards, that is, 
\begin{equation}
    \label{equation:regret}
    T\mu^* - \sum_{t=1}^T \mu_{A_t}.
\end{equation}
This regret minimization problem is now very well understood.
Lower bounds of achievable expected regret are known (see \citealt{lai_asymptotically_1985,auer_gambling_1995}) and achieved by multiple methods, for example Thompson Sampling (\citealt{kaufmann_thompson_2012}), MED (\citealt{honda_asymptotically_2010}), IMED (\citealt{honda_non-asymptotic_2015}), KL-UCB (\citealt{garivier2011kl,maillard2011finite}) or MOSS (\citealt{audibert2009minimax}). 

When applied to real-world tasks, what usually matters is the performance over a single run. 
Yet, although many of the previously mentioned methods are asymptotically optimal in expectation, their trajectory behaviors differ significatively. 
This is illustrated in \cref{figure:typical}, plotting the typical pseudo-regrets of two popular algorithms: UCB by \cite{auer_using_2002} and Thompson Sampling (TS) by \cite{thompson_likelihood_1933}.

\begin{figure}
    \centering
    \includegraphics[width=.49\linewidth]{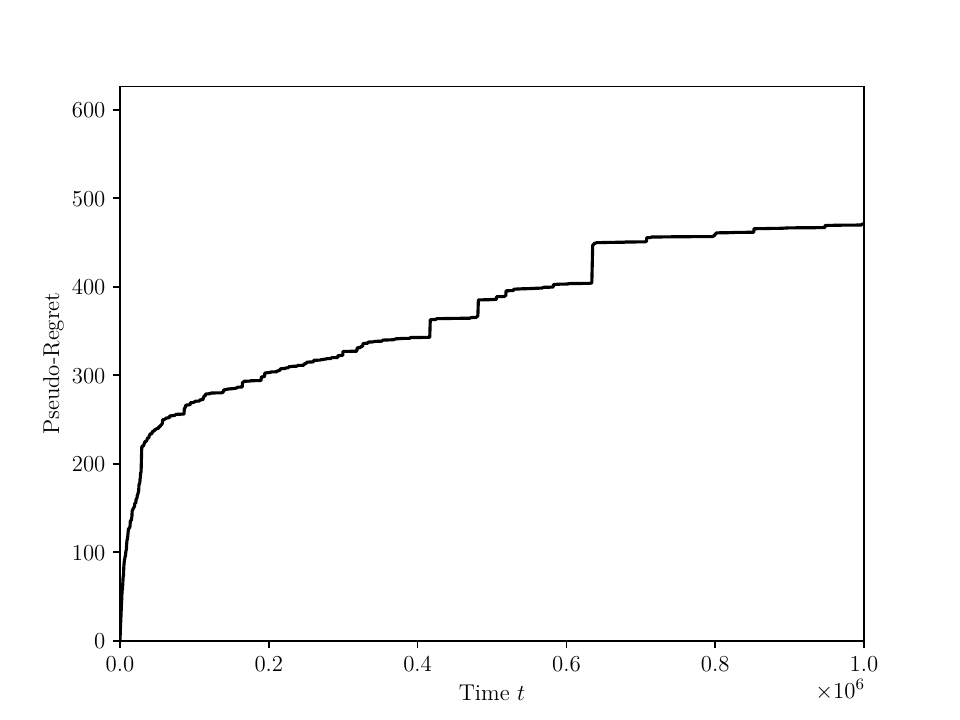}
    \hfill
    \includegraphics[width=.49\linewidth]{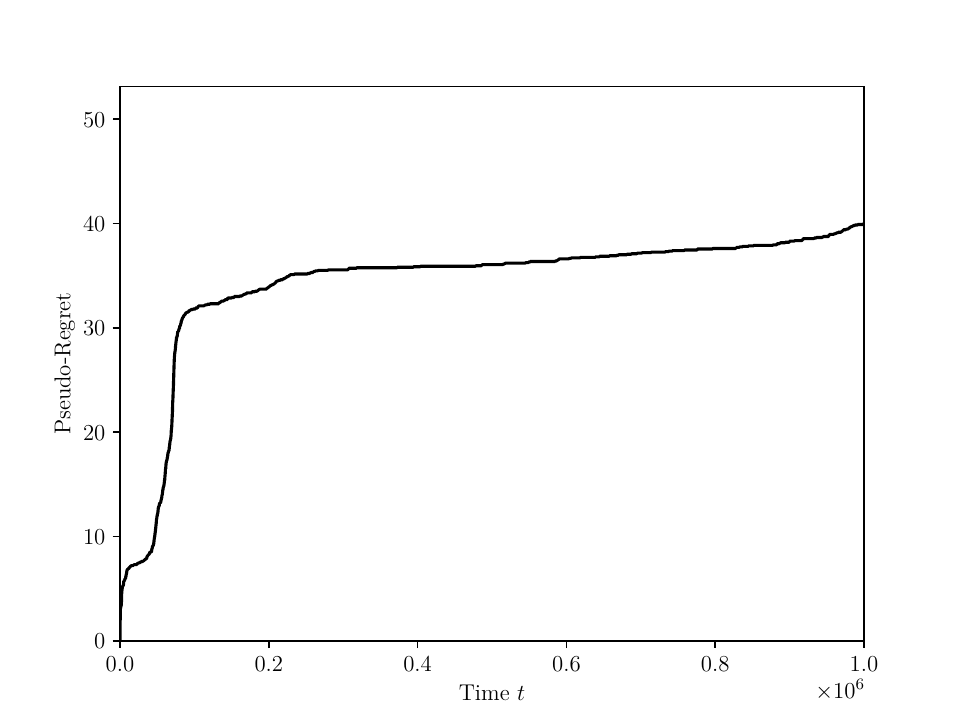}
    \caption{
        \label{figure:typical}
        Typical one-shot pseudo-regret of UCB (left) and Thompson Sampling (right).
        The model is a two-armed bandit with Bernoulli rewards ${\rm B}(0.85), {\rm B}(0.8)$. 
    }
\end{figure}
% This paper is dedicated to a study of this phenomenon.

The difference is striking. 
UCB has bumpy pseudo-regret and alternates between periods of time when it pulls the best and a suboptimal arm, meaning that it repeatedly pulls a bad arm several times in a row.
% ; Suboptimal play aggregates in time.
In opposition, the pseudo-regret of Thompson Sampling is smooth and the algorithm seems to pull suboptimal arms sporadically over time. 

These two trajectory portraits are in fact the two representatives of most existing algorithms for stochastic bandits. 
UCB showcases the typical one-shot pseudo-regret of index policies while TS illustrates the ones of randomized policies. 

\vspace{-1em}
\subsection{Our Contributions}

In this paper, we explain the phenomenon reported in \cref{figure:typical}. 
To simplify the discussion, the results are established for two-arm Bernoulli bandits.
% We show that almost surely and infinitely often, UCB picks the bad arm for arbitrarly long time-windows.`
% This is done by introducing a new learning metric that we call the \emph{sliding regret}, that measures the worst pseudo-regret for time-windows of fixed length sliding to infinity.
To measure the asymptotic bumpiness of the pseudo-regret, we introduce a new learning metric that we call the \emph{sliding regret}, given by the worst pseudo-regret on time-windows of fixed length sliding to infinity.
Our first result, \cref{theorem:agnosticity}, provides a general condition to guarantee that a given policy have small sliding regret, later used to show that Thompson Sampling and MED have optimal sliding regret. 
Our second result, \cref{theorem:general indexes}, states that all index policies have linear sliding regret provided that the index meets some regularity conditions. 
An \emph{index policy} (see \citealt[35.4]{lattimore2020bandit}) is an algorithm that, out of its current observations, associates a real-valued index to each arm then picks the arm with maximal index. 
Our result covers all classical index policies in the literature, such as UCB (\citealt{auer_using_2002}), UCB-V (\citealt{audibert2007tuning}), MOSS (\citealt{audibert2009minimax}), KL-UCB (\citealt{garivier2011kl,maillard2011finite}), IMED (\citealt{honda_non-asymptotic_2015}) as well as their variants. 
% This suggests that there may be a point in randomizing algorithms. 

The study of the sliding regret of index policies indicates that such algorithms have a tendency to pick suboptimal arms several times in a row at exploration episodes. 
An \emph{exploration episode} (see \citealt{boone_regret_2023}) is a time-instant $t$ such that $A_{t-1}$ is optimal but $A_t$ is not. 
What happens at these critical time-instants is what makes the sliding regret of index policies linear, as the probability to pick a suboptimal arm $T$ times in a row starting from an exploration episode $t$ is positive and does not vanish with $t$. 
We go beyond this result by asking the following: 
What is the expected regret starting from an exploration episode?
This leads to the \emph{regret of exploration}, which is shown to be optimal for Thompson Sampling and MED but sub-optimal for classical index policies. 
% As a matter of fact, we provide a complete characterization of the regret of exploration for all generic index policies, thus including the classical ones. 

Why does this all matter?
When considering the historical motivation of the stochastic bandit problem (\citealt{thompson_likelihood_1933}) where pulling arm $a$ is providing drug $a$ to a patient, and $R_t$ is whether the $t$-th patient is cured or not, the bumps observed in the pseudo-regret are several patients in a row being provided the wrong medicine.
For this application, providing the drug which is known to be empirically worse than the other several times in succession is unfair to patients. 
This can even be exploited by them; If you know that the bad drug B has been provided recently, wouldn't you wait a little bit for the algorithm to stop providing drug B, so that you can get drug A for sure instead?
In such a scenario, small sliding regret guarantees appear to be important. 

\section{Preliminaries}

In this paper, we focus on stochastic bandits with two arms of Bernoulli distributions with respective means $\mu_1, \mu_2 \in (0, 1)$.
These two assumptions are made solely to simplify the discussion, as our proofs and methods are not specific to two-arm bandits nor Bernoulli distributions.
The distribution of arm $a$ is denoted ${\rm F}_a$ and we denote $\Pr_{\rm F}(-)$ and $\E_{\rm F}[-]$ the associated probability and expectation operators. 
Whenever the distribution is clear in the context, the subscript ${\rm F}$ is dropped. 
We further assume, up to a permutation of arms, that $\mu_2 < \mu_1$. 
Thus arm $1$ is optimal and arm $2$ is suboptimal.
A \emph{policy} (or algorithm) is a (possibly randomized) decision rule that, to each history of observations $(A_1, R_1, \ldots, A_{t-1}, R_{t-1})$, associates the (possibly random) next arm $A_t$ to be pulled; After pulling the arm $A_t$, the policy observes $R_t \sim {\rm B}(A_t)$, generated independently of the history and the inner randomness of the policy. 
The regret of a policy is given $T \mu_1 - \sum_{t=1}^T R_t$.
Its partial expectation with respect to the rewards is the \emph{pseudo-regret}, visually:
\begin{equation}
    {\Reg}(T) := T \mu_1 - \sum_{t=1}^T \mu_{A_t}.
\end{equation}
The indicator function is denoted $\indicator{-}$. 
During a run of a policy and for every arm $a = 1,2$, we keep track of the number of visits with $N_a(t) := \sum_{i=1}^{t-1} \indicator{A_i = a}$.
We will have $S_{a, n}$ and $\hat \mu_{a, n}$ denote the number of successes (when the reward equals one) and empirical mean of arm $a$ after $n$ draws of it.
$S_a(t) := S_{a, N_a(t)}$ and $\hat \mu_a(t) := \hat \mu_{a, N_a(t)}$ will denote the associated number of successes and empirical mean at time $t$. 

% \subsection{Two classical algorithms: UCB and Thompson Sampling}
% 
% Because of their popularity, UCB and TS are chosen as driving examples throughout the paper. 
% The first is an index algorithm rooted in the \emph{optimism-in-face-of-uncertainty} principle, that can be traced back at least to \cite{lai_asymptotically_1985}. 
% At time $t$, it picks the arm maximizing the index
% \begin{equation}
%     \tag{UCB index}
%     \hat \mu_a(t) + \sqrt{\frac{2\log(t)}{N_a(t)}}
% \end{equation}
% which is $+\infty$ if $N_a(t) = 0$ for convention. 
% Meanwhile, the second is a bayesian algorithm that, at each time $t$, sample estimates of the arms' values from the posterior:
% \begin{equation}
%     \tag{TS estimates}
%     \theta_a(t) \sim {\rm Beta}(1+S_a(t), 1+N_a(t)-S_a(t))
% \end{equation}
% and picks the arm with highest estimate. 
% The expected regrets of both algorithms are well-understood.
% They both scale as $\OH(\log(t))$ with explicit finite-time regret upper bounds, see (\cite{auer_finite-time_2002}) for UCB and (\cite{agrawal_analysis_2012,kaufmann_thompson_2012,pmlr-v31-agrawal13a}) for TS. 
% 
\section{The Sliding Regret and the Behavioral Robustness to Local Histories}

The presence of bumps observed in \cref{figure:typical} is related to the slope of the pseudo-regret, which is given by the pseudo-regret difference between two points in time.
This consists in its truncation to a given time-window. 
Accordingly, to study the local behavior of the pseudo-regret, we study its truncation to time-windows of fixed length sliding to infinity. 

\begin{definition}
    The asymptotic sliding regret (or \emph{sliding regret} for short) is given by
    $$
        \SliReg(T) := \limsup_{t \to \infty} \parens{T\mu^* - \sum_{i=1}^T \mu_{A_{t+i}} }.
    $$
\end{definition}
The sliding regret is a non-negative quantity that measures the presence and the amplitude of the local changes of the pseudo-regret in the asymptotic regime. 
It is a new learning metric that lives independently from the regret as, as we will see, no-regret algorithms in the literature present two tendencies: those with small sliding regret and high sliding regret, embodied by Thompson Sampling and UCB respectively. 

As a starting point of our analysis, observe that an algorithm that visits all arms infinitely often must have sliding regret at most $\mu_1 - \mu_2$, as stated by the proposition below. 
\begin{proposition}
    Consider an algorithm that, for all distribution on arms, have sublinear expected regret. 
    Then it has sliding regret bounded as:
    \begin{equation}
        \notag
        % \label{equation:slireg bounds}
        \mu_1 - \mu_2 \le \SliReg(T) \le T(\mu_1 - \mu_2). 
    \end{equation}
\end{proposition}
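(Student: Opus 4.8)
The plan is to prove the two inequalities separately; the upper bound is deterministic and immediate, while the lower bound is where the no-regret hypothesis enters, through the fact that the suboptimal arm must be pulled infinitely often. For the upper bound I would simply rewrite the windowed quantity as $\sum_{i=1}^{T}\parens{\mu_1-\mu_{A_{t+i}}}$; since $A_{t+i}\in\set{1,2}$, each summand equals either $0$ (when $A_{t+i}=1$) or $\mu_1-\mu_2$ (when $A_{t+i}=2$), so for every $t$ and every sample path the window sum lies in $[0,\,T(\mu_1-\mu_2)]$, and taking $\limsup_{t\to\infty}$ yields $\SliReg(T)\le T(\mu_1-\mu_2)$.

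For the lower bound I would first reduce it to the claim that arm $2$ is pulled infinitely often $\Pr_{\rm F}$-almost surely: granting this, on the probability-one event that there is a sequence $t_k\uparrow\infty$ with $A_{t_k}=2$, the windows starting at $t=t_k-1$ have $i=1$ term equal to $\mu_1-\mu_{A_{t_k}}=\mu_1-\mu_2$ and all other $T-1$ terms nonnegative, so $T\mu_1-\sum_{i=1}^{T}\mu_{A_{t+i}}\ge\mu_1-\mu_2$ along this subsequence, whence $\SliReg(T)\ge\mu_1-\mu_2$ almost surely. To prove the infinitely-often claim I would use a standard change-of-measure/imitation argument: suppose, for contradiction, that with positive probability under ${\rm F}$ arm $2$ is pulled only finitely often; then there is an integer $n_0$ with $\Pr_{\rm F}(E_{n_0})>0$, where $E_{n_0}:=\set{N_2(t)\le n_0 \text{ for all } t}$. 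Comparing with the alternative instance ${\rm F}'=({\rm B}(\mu_1),{\rm B}(\mu_2'))$ for a fixed $\mu_2'\in(\mu_1,1)$, on $E_{n_0}$ truncated to any finite horizon $m$ the likelihood ratio of observed histories is a product of at most $n_0$ factors, each equal to $\tfrac{\mu_2'}{\mu_2}$ or $\tfrac{1-\mu_2'}{1-\mu_2}$ and hence lying in $(0,\infty)$, so it is bounded below by a constant $c=c(n_0,\mu_2,\mu_2')>0$; a truncate-then-let-$m\to\infty$ argument then gives $\Pr_{{\rm F}'}(E_{n_0})\ge c\,\Pr_{\rm F}(E_{n_0})>0$. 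But under ${\rm F}'$ arm $2$ is the unique optimal arm, and on $E_{n_0}$ one has $N_1(t)\ge t-1-n_0$, hence $\Reg(T)\ge(\mu_2'-\mu_1)(T-n_0)$ on $E_{n_0}$; since the pseudo-regret is nonnegative under ${\rm F}'$, this forces $\E_{{\rm F}'}[\Reg(T)]\ge(\mu_2'-\mu_1)(T-n_0)\,\Pr_{{\rm F}'}(E_{n_0})$ to grow linearly in $T$, contradicting the assumed sublinearity of the expected regret on the instance ${\rm F}'$. Hence arm $2$ is pulled infinitely often a.s., and the lower bound follows.

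The two-valued decomposition of the window sum and the linear lower bound on the regret under ${\rm F}'$ are routine. The one delicate point — and the main obstacle — is the infinitely-often statement: because $E_{n_0}$ is not a finite-horizon event, one must pass from horizon-$m$ likelihood-ratio estimates to a bound on $\Pr_{{\rm F}'}(E_{n_0})$ through continuity of measure along the decreasing finite-horizon events, and one must use that the Bernoulli means lie in the \emph{open} interval $(0,1)$ so that the per-step likelihood ratios stay bounded away from $0$ and $\infty$, which is precisely what makes $c$ positive and finite.
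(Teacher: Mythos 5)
Your proposal is correct and follows essentially the same route as the paper: the upper bound is the trivial deterministic one, and the lower bound is reduced to the fact that the suboptimal arm is pulled infinitely often almost surely, which the paper establishes (Proposition~\ref{proposition:consistent policies}) by exactly your change-of-measure contradiction argument — switch to an alternative instance where the starved arm is uniquely optimal, bound the likelihood ratio on the low-visit event (at most $n_0$ factors, each bounded away from $0$ since the means lie in $(0,1)$), and derive linear expected regret there. Your truncate-then-limit handling of the infinite-horizon event matches the paper's passage via $\lim_{t}\Pr_{{\rm F}'}(N_a(t)<n)$, so there is no gap.
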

This is a direct consequence of Proposition~\ref{proposition:consistent policies}, established in \cref{section:consistent algorithms}. 

There is a world between the lower and the upper bound and yet, the lower bound is usually achieved with randomized methods (such as TS) while the upper bound is reached with index policies (such as UCB). 
But associating small sliding regret guarantees with randomization is slightly misleading; this is rather a question of how the policy behaves depending on its recent history. 

\subsection{Behavioral Robustness to Local Histories}

The main result of this section is \cref{theorem:agnosticity}. 
This theorem states that if, regardless of the recent history (e.g., a bad arm has been pulled), the probability of picking a suboptimal action remains small, then the sliding regret is small.
This is the property that we refer to as the behavioral robustness to local histories. 
It is precisely the property that UCB does not satisfy and that will lead to suboptimal sliding regret later on. 

% \textbf{Bruno didn't like this paragraph:}
% Most existing algorithms have a convex expected regret in experiments.
% In the case of two-arm bandits, the regret is directly related to the expected number of visits of suboptimal arms by Wald's equation, hence to the probability of picking suboptimal arms: $\E[\Reg(T)] = (\mu_1 - \mu_2) \sum_{t=1}^T \Pr(A_t=2)$, hence the one step difference of the expected regret are directly related to the probability of picking a suboptimal action.
% We thus expect that this probability decreases to zero, meaning that suboptimal pulls become rare as time goes on. 
% This does not prevent suboptimal pulls to locally aggregate, resulting in a bumpy pseudo-regret as observed for UCB in \cref{figure:typical}.
% For UCB, we see that following a visit of a suboptimal arm, the probability of picking it again in the near future is significant. 
% In other words, the time-steps when suboptimal arms are picked are correlated. 
% Our first result is about the converse: If, regardless of the recent history (e.g., a bad arm has been pulled), the probability of picking a suboptimal action remains small, then the sliding regret is small. 
% % Denote $\llceil x \rrceil := \min\set{n \in \Z: n > x}$.
% 
% \textbf{Bruno:} \emph{
%     Pas assez straight forward. 
% }

\begin{theorem}
    \label{theorem:agnosticity}
    Let $\pi$ a policy such that there exists a sequence of events $(E_t : t \ge 1)$ with $\Pr(\exists t, \forall s \ge T: E_s) = 1$, that satisfies:
    \begin{equation}
        \label{equation:agnosticity}
        \exists d > 0, \forall i \ge 1:
        \quad
        \Pr (A_{t+i} \ne 1 |H_{t:t+i}, E_t) = \OH\parens{\frac 1{t^d}}
    \end{equation}
    where $H_{t:t+i}$ is the truncated history $(A_{t+j}, R_{t+j})_{0 \le j < i}$. 
    Then $\SliReg(\pi; T) \le \lfloor\frac 1d\rfloor(\mu_1-\mu_2)$. 
\end{theorem}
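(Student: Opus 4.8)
The plan is to control the sliding regret by bounding, for each window length $T$, the probability that the window $(A_{t+1}, \ldots, A_{t+T})$ contains any suboptimal pull, and then to upgrade this to an almost-sure statement about $\limsup_t$. Fix the window length $T$. On the event $E_t$, condition \eqref{equation:agnosticity} says that at each step $i = 1, \ldots, T$ inside the window, the chance of pulling arm $2$ is $\OH(t^{-d})$ \emph{uniformly over the truncated history} $H_{t:t+i}$. The uniformity is the crucial feature: it lets me union-bound over the $T$ steps of the window without worrying about how the conditioning accumulates. Concretely, writing $B_{t,T} := \{\exists\, 1 \le i \le T : A_{t+i} \ne 1\}$, a telescoping/union bound over $i$ gives $\Pr(B_{t,T} \mid E_t) \le \sum_{i=1}^T \Pr(A_{t+i} \ne 1 \mid H_{t:t+i}, E_t) \le T \cdot \OH(t^{-d}) = \OH(t^{-d})$ (with the constant depending on $T$). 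On $B_{t,T}^c \cap E_t$ the window is entirely optimal pulls, so the truncated regret $T\mu^* - \sum_{i=1}^T \mu_{A_{t+i}}$ is exactly $0$; in general it is at most $T(\mu_1 - \mu_2)$.

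Next I would turn this into an almost-sure bound on $\limsup_t$. Since $\Pr(\exists t, \forall s \ge t : E_s) = 1$, it suffices to work on the event that $E_s$ holds for all $s \ge t_0$ for some (random, a.s.~finite) $t_0$. For $t \ge t_0$ the estimate above applies, and $\sum_t \Pr(B_{t,T} \mid E_t) \le \sum_t \OH(t^{-d})$. If $d > 1$ this series converges and Borel–Cantelli immediately gives that only finitely many $B_{t,T}$ occur, hence $\limsup_t (T\mu^* - \sum_i \mu_{A_{t+i}}) = 0$ almost surely — consistent with $\lfloor 1/d \rfloor = 0$. If $d \le 1$ the naive series diverges, so Borel–Cantelli does not apply directly to windows of length $T$; this is where the floor $\lfloor 1/d \rfloor$ enters, and it is the main obstacle. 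The fix is to \emph{subsample}: instead of asking that one length-$T$ window be clean, I partition each window into $\lfloor 1/d \rfloor$ (plus possibly a short remainder) consecutive sub-blocks, or equivalently I only need that among a suitable family of shifted sub-windows, at most $\lfloor 1/d \rfloor$ of them are "dirty" (contain a suboptimal pull) simultaneously. A window in which at most $k := \lfloor 1/d \rfloor$ positions are suboptimal has truncated regret at most $k(\mu_1 - \mu_2)$, which is exactly the claimed bound.

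To make the subsampling rigorous I would argue along the subsequence of window start-times, say $t \in \{m^{\lceil 1/d \rceil}\}$ or more simply observe: for the $\limsup$, it is enough to show that for any fixed $\varepsilon > 0$ the window $(A_{t+1}, \ldots, A_{t+T})$ contains more than $k$ suboptimal pulls only finitely often. The event "more than $k$ suboptimal pulls in the window starting at $t$" forces at least $k+1$ of the indicators $\indicator{A_{t+i} \ne 1}$, $1 \le i \le T$, to be one; by \eqref{equation:agnosticity} and the chain-rule uniformity, conditioning successively, the probability of any fixed pattern of $k+1$ suboptimal positions is $\OH(t^{-(k+1)d})$, and summing over the $\binom{T}{k+1}$ patterns still gives $\OH(t^{-(k+1)d})$. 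Since $k + 1 = \lfloor 1/d \rfloor + 1 > 1/d$, we have $(k+1)d > 1$, so $\sum_t t^{-(k+1)d} < \infty$, and Borel–Cantelli finishes the argument on the a.s.~event $\{E_s \text{ for all } s \ge t_0\}$. Hence almost surely $T\mu^* - \sum_{i=1}^T \mu_{A_{t+i}} \le k(\mu_1 - \mu_2)$ for all large $t$, i.e.~$\SliReg(\pi; T) \le \lfloor 1/d \rfloor (\mu_1 - \mu_2)$. The only real subtlety to get right is the successive conditioning: one must check that $\Pr(A_{t+i_1} \ne 1, \ldots, A_{t+i_{k+1}} \ne 1 \mid E_t)$ genuinely factorizes into a product of $\OH(t^{-d})$ terms, which follows by iterating \eqref{equation:agnosticity} with the conditioning history $H_{t:t+i}$ — each application absorbs one factor regardless of the realized rewards in between.
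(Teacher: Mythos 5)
Your proposal is correct and follows essentially the same route as the paper: show that a window with $\lfloor 1/d\rfloor+1$ suboptimal pulls has probability $\OH(t^{-(\lfloor 1/d\rfloor+1)d})$ by iterating \eqref{equation:agnosticity} through the chain rule (uniformity over $H_{t:t+i}$ being the key), union over the at most $\binom{T}{k+1}$ patterns and over window start times, and dispose of the conditioning on $E_t$ via $\Pr(\liminf_s E_s)=1$. The brief detour about partitioning the window into sub-blocks is unnecessary, but the argument you actually carry out (more than $\lfloor 1/d\rfloor$ dirty positions only finitely often, by Borel--Cantelli) is exactly the paper's.
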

\begin{proof}
    Let $n > \frac 1d$ an integer. 
    We show that $\Pr(\forall t, \exists s \ge t: \Reg(s;s+T) \ge n(\mu_1 - \mu_2)) = 0$. 
    Remark that if $\Reg(s; s+T) \ge n(\mu_1 - \mu_2)$, there exists a set $I \subseteq \set{0, \ldots, T-1}$ of size $n$ such that, for all $i \in I$, $A_{s+i} = 2$. 
    Denote $\Lambda_n$ the collection of subsets of $\set{0, \ldots, T-1}$ of size $n$ and fix $I \in \Lambda_n$ whose elements are denoted $i_1 < i_2 < \ldots < i_n$. 
    We have:
    \begin{align*}
        & \Pr (\forall i \in I, A_{t+i} = 2; E_t)
        \\
        & = 
        \Pr(A_{t+i_n} = 2 \mid E_t, (\forall i \in I\setminus\set{i_n}, A_{t+i} = 2))
        \Pr(\forall i \in I\setminus\set{i_n}, A_{t+i} = 2; E_t)
        \\
        & =
        \OH\parens{\frac 1{t^d}}
        \cdot \Pr(\forall i \in I\setminus\set{i_n}, A_{t+i} = 2; E_t)
        \\
        & = \ldots
        \\
        & = \OH\parens{\frac 1{t^{nd}}} 
        \Pr(E_t) = \OH\parens{\frac 1{t^{nd}}} .
    \end{align*}
    % The cardinal of $\Lambda_n$ is upper bounded by $T^n$.
    Because $E_s$ satisfies $\Pr(\liminf E_s) = 1$, check that for all sequence of events $(F_s)$, $\Pr(\forall t, \exists s \ge t: F_s) = \Pr(\forall t, \exists s \ge t: E_s, F_s)$.
    We complete the proof with:
    \begin{align*}
        \Pr(\forall t, \exists s \ge t: \Reg(s; s+T) \ge n(\mu_1-\mu_2))
        & = 
        \lim_{t \to \infty} \Pr \parens{
            \exists s \ge t:
            \Reg(s; s+T) \ge n(\mu_1-\mu_2)
        }
        \\
        & =        
        \lim_{t \to \infty} \Pr \parens{
            \exists s \ge t:
            \Reg(s; s+T) \ge n(\mu_1-\mu_2), E_s
        }
        \\
        & \le 
        \lim_{t \to \infty} 
        \sum\nolimits_{s \ge t}
        \Pr \parens{
            \Reg(s; s+T) \ge n(\mu_1-\mu_2), E_s
        }
        \\
        & \le 
        \lim_{t \to \infty} 
        \sum\nolimits_{s \ge t}
        \Pr \parens{
            \exists I \in \Lambda_n, \forall i \in I : A_{s+i}=2; 
            E_s
        }
        \\
        & \le 
        \lim_{t \to \infty} 
        \sum\nolimits_{s \ge t}
        T^n \OH\parens{ \frac1{s^{nd}} }
        = 0
    \end{align*}
    because $nd > 1$. 
    So $\SliReg(T) \le (n-1)(\mu_1 - \mu_2)$. 
\end{proof}

In order to apply the theorem, one has to find the right sequence of events $E_t$ such that \eqref{equation:agnosticity} is satisfied. 
This event usually characterizes what we will refer to as the \emph{asymptotic regime} of an algorithm, consisting in concentration guarantees for the empirical data of the algorithm as well as convergence of the visit rate of the suboptimal arm. 
% Of course, the design of such an event is usually algorithm specific.
A complete example is provided with Thompson Sampling. 

\subsection{Application: Thompson Sampling}

Thompson sampling (TS) is a Bayesian policy that, at time $t$, sample estimates of the arms' values from its posterior distribution and picks the arm with highest estimate. 
In the chosen Bernoulli setting, when the initial prior of TS is a tensor product of uniform distributions over $[0, 1]$, the posteriors are Beta distributions and TS's estimates are sampled as:
\begin{equation}
    % \tag{TS estimates}
    \theta_a(t) \sim {\rm Beta}(1+S_a(t), 1+N_a(t)-S_a(t)).
\end{equation}
% and picks the arm with highest estimate. 
The expected regret of TS is pretty well understood.
In the frequentist formulation of the multi-armed bandit problem, the regret is $\OH(\log(T))$ (\citealt{agrawal_analysis_2012}) and the multiplicative coefficient is the best possible, making TS an asymptotically optimal algorithm, see \cite{kaufmann_thompson_2012,pmlr-v31-agrawal13a}. 
We additionally show that its sliding regret is optimal. 

\begin{theorem}
    \label{theorem:ts sliding regret}
    Thompson Sampling has optimal sliding regret $\SliReg({\rm TS}; T) = \mu_1 - \mu_2$. 
\end{theorem}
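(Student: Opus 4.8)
The strategy is to apply \cref{theorem:agnosticity} with $d = 1$, so that $\lfloor 1/d \rfloor = 1$ gives the upper bound $\SliReg(\mathrm{TS}; T) \le \mu_1 - \mu_2$; the matching lower bound is already guaranteed by the preliminary proposition since TS has sublinear regret. So the entire task reduces to exhibiting a sequence of events $(E_t)$ with $\Pr(\exists t, \forall s \ge t : E_s) = 1$ such that, conditionally on $E_t$ and on any truncated history $H_{t:t+i}$, the probability that TS pulls the suboptimal arm at time $t+i$ is $\OH(1/t)$.

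\textbf{Choosing the asymptotic-regime event.} I would let $E_t$ be the event capturing that TS has entered its ``mature'' phase by time $t$: namely that $N_1(t) \ge c_1 t$ for a constant $c_1$ close to $1$ (the optimal arm has been pulled a linear number of times), that $N_2(t) \ge c_2 \log t$ (the suboptimal arm has been sampled enough for its posterior to concentrate), and that the empirical means are well-concentrated, $|\hat\mu_a(t) - \mu_a| \le \varepsilon$ for both arms with $\varepsilon$ small enough that $\mu_2 + \varepsilon < \mu_1 - \varepsilon$. The fact that $\Pr(\liminf_t E_t) = 1$ follows from the standard analysis of TS: the regret bound $\E[\Reg(T)] = \OH(\log T)$ forces $N_2(t) = \OH(\log t)$ hence $N_1(t) = t - \OH(\log t)$ almost surely along a full-probability set (via Borel--Cantelli on the $\OH(1/t^2)$-type deviation bounds), and the empirical means concentrate by Hoeffding plus Borel--Cantelli once $N_2(t) \to \infty$. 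One must be slightly careful that $E_t$ should be a ``permanent'' regime — increasing in an appropriate sense — so that $\{E_t\}$ genuinely has $\liminf$ probability one; using $N_1(t) \ge t - K\log t$ rather than $N_1(t)\ge c_1 t$, and quantifying over all $n \ge N_a(t)$ in the concentration clause, makes this clean.

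\textbf{The per-step bound.} Fix a truncated history $H_{t:t+i}$ consistent with $E_t$. On this event, at time $t+i$ we have $N_1(t+i) \ge N_1(t) \ge t - K\log t$, while $N_2(t+i) \le N_2(t) + i$. TS pulls arm $2$ only if $\theta_2(t+i) \ge \theta_1(t+i)$. I would bound $\Pr(A_{t+i} = 2 \mid H_{t:t+i}, E_t)$ by $\Pr(\theta_1(t+i) \le \mu_1 - \delta) + \Pr(\theta_2(t+i) \ge \mu_1 - \delta)$ for a suitable gap $\delta$. The first term is an anti-concentration-from-below estimate for $\mathrm{Beta}(1 + S_1, 1 + N_1 - S_1)$ with $N_1 \gtrsim t$ and $S_1/N_1 \approx \mu_1$: this is exponentially small in $N_1$, hence $\OH(e^{-\Omega(t)}) = \OH(1/t)$. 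The second term requires that the posterior of arm $2$ does not overshoot past $\mu_1 - \delta$; here $N_2$ may be as small as $\Theta(\log t)$, and the Beta-tail bound gives something like $\exp(-N_2 \cdot \kl(\text{something}))$, which with $N_2 \ge c_2\log t$ and $c_2$ chosen large is $\OH(1/t)$. This is where the argument is genuinely delicate: the conditioning on $H_{t:t+i}$ can push $\hat\mu_2$ up (a run of successes on arm $2$), so the clean move is to have baked into $E_t$ a uniform concentration statement $\hat\mu_{2,n} \le \mu_2 + \varepsilon$ for \emph{all} $n \ge N_2(t)$ — but that is false, since for small $n$ the empirical mean fluctuates. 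The correct fix, which I expect to be the crux, is to split on whether the extra pulls of arm $2$ between time $t$ and $t+i$ stay bounded (they do, since $i < T$ is fixed and each extra pull of arm $2$ is itself a rare event on $E_t$), and to use the standard TS fact that after $\Omega(\log t)$ pulls the arm-$2$ posterior is below $\mu_1$ with probability $1 - \OH(1/t^2)$, absorbing the finitely many history-dependent corrections into the constant.

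\textbf{Main obstacle.} The hard part is not the Beta tail estimates per se but making the conditioning on an \emph{arbitrary} truncated history $H_{t:t+i}$ harmless. Unlike a one-shot regret bound, here the adversarially-chosen local history may include up to $T$ fresh pulls of the bad arm with arbitrary reward patterns, so one cannot simply invoke ``$\hat\mu_2 \approx \mu_2$.'' The resolution is to observe that $E_t$ already forces $N_2(t)$ large \emph{and} $\hat\mu_2$ close to $\mu_2$ at the \emph{start} time $t$; adding $i \le T-1$ further observations moves the posterior mean by at most $\OH(T/\log t) \to 0$ and cannot move $\theta_2$ above $\mu_1 - \delta$ except with probability decaying like a power of $t$ — precisely the $\OH(1/t^d)$ required. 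Once this uniformity over $H_{t:t+i}$ is pinned down, \cref{theorem:agnosticity} applies verbatim and yields $\SliReg(\mathrm{TS};T) \le \mu_1 - \mu_2$, completing the proof together with the lower bound.
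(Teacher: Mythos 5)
Your overall architecture matches the paper's: both proofs reduce the theorem to \cref{theorem:agnosticity} by exhibiting an ``asymptotic regime'' event $E_t$ on which the per-step probability of pulling arm $2$ is $\OH(t^{-d})$ with $d$ close enough to $1$, and both dispose of the optimal arm's posterior and of the $H_{t:t+i}$-conditioning in essentially the same way (the $i<T$ extra observations are negligible once $N_2(t)\to\infty$). The problem is the suboptimal arm's term, and there your argument has a genuine gap. You bake into $E_t$ the clause $N_2(t)\ge c_2\log t$ and then claim the Beta tail $\Pr(\theta_2(t+i)\ge \mu_1-\delta)\approx e^{-N_2\,\kl(\cdot,\mu_1-\delta)}$ is $\OH(1/t)$ ``with $c_2$ chosen large.'' First, you give no argument that such a lower bound on $N_2(t)$ holds eventually almost surely: the facts you invoke (the $\OH(\log T)$ regret bound, Hoeffding plus Borel--Cantelli, and the $N_1(t)\ge t^b$ estimate of Kaufmann et al.) yield an \emph{upper} bound on $N_2(t)$ and a lower bound on $N_1(t)$, never a lower bound on $N_2(t)$. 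Establishing that TS explores \emph{at least} $(1-\epsilon)\log(t)/\kl(\mu_2,\mu_1)$ times almost surely is exactly the crux of the paper's proof (\cref{lemma:ts almost sure rates}, STEP 1), and it requires a bootstrap: one first bounds the sampling rate $\Pr(A_t=2\mid\cdot)$ from \emph{below} as well as above via the Beta--Binomial trick and Sanov's theorem (\cref{lemma:ts sampling rates}), then shows that if $N_2(s)$ lagged behind $\lambda_s$ the optimal arm would have to be chosen for $\Theta(s)$ consecutive steps, an event whose probability $\bigl(1-e^{-(1+c)\log(s)/(1+2c)}\bigr)^{\Theta(s)}$ is summable. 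Nothing of this kind appears in your proposal.

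Second, even granting an almost-sure lower bound, the constant $c_2$ is \emph{not} free: TS's visit rate satisfies $N_2(t)\le(1+\epsilon)\log(t)/\kl(\mu_2,\mu_1)$ eventually a.s.\ (the matching upper bound in \cref{lemma:ts almost sure rates}), so you cannot take $c_2$ large and conclude an $\OH(1/t)$ bound outright; the best achievable exponent is $d=1-\oh(1)$ as $\epsilon,\delta\to0$. This second point is repairable --- any $d>1/2$ already gives $\lfloor 1/d\rfloor=1$ in \cref{theorem:agnosticity}, which is how the paper concludes --- but it shows the mechanism you describe (cranking up $c_2$) is not the one that makes the proof work, and the first point, the almost-sure logarithmic lower bound on $N_2(t)$, is a missing key lemma rather than a routine detail.
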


% \begin{proofsketch}
The complete proof is pretty tedious and deferred to the appendix. 
Also we sketch the main lines of it. 
    The goal is to invoke \cref{theorem:agnosticity}, and this is achieved by characterizing the asymptotic behavior of TS, consisting in estimates of the sampling rates $\Pr(A_t=a)$ of the algorithm, estimates of the visit rates $N_a(t)$ as well as convergence of its empirical data. 

Because the expected regret is sublinear, all arms are visited infinitely often, hence posteriors concentrate around the true means $\mu_1, \mu_2$, meaning that $\hat \mu_a(t)$ eventually converges to $\mu_a$ for $a = 1,2$.
A second known property of the asymptotic regime is that for some $b > 0$, $\sum \Pr(N_1(t) \le t^b) < \infty$, see \cite[Proposition 1]{kaufmann_thompson_2012}.
So by Borel-Cantelli's lemma, $\Pr(\liminf(N_1(t) > t^b))=1$. 
Together with a combination of the Beta-Bernoulli trick (\cite{agrawal_analysis_2012}) and Sanov' Theorem, the sampling rates of Thompson Sampling are bounded as follows:
For all $\epsilon>0$, there exists a sequence of events $(F_t^\epsilon)$ with $\Pr(\liminf F_t^\epsilon) = 1$ such that 
\begin{equation}
    \label{equation:ts sample rates}
    e^{-(1+c(\epsilon))N_2(t)\kl(\mu_2, \mu_1)}
    \le 
    \Pr (A_t = 2 \mid F_t^\epsilon)
    \le 
    e^{-(1-c(\epsilon))N_2(t)\kl(\mu_2, \mu_1)},
\end{equation}
    where $c(\epsilon)$ is a $o(1)$ when $\epsilon$ vanishes. 
% and the known high almost-sure visit rate of the optimal arm \cite[Proposition 1]{kaufmann_thompson_2012}, we show there is a positive definite function 
% we essentially show that $N_2(t) \sim \log(t)/\kl(\mu_2,\mu_1)$.
    We use \eqref{equation:ts sample rates} to show that $N_2(t) \sim \log(t)/\kl(\mu_2,\mu_1)$.
More precisely, we show that for all $\epsilon > 0$, the event
$$
    E_t^\epsilon := 
    \parens{
        \forall a, \abs{\hat \mu_a(t) - \mu_a} < \epsilon
    }
    \cap 
    \parens{
        \abs{N_2(t) - \tfrac{\log(t)}{\kl(\mu_2,\mu_1)}} < \epsilon \cdot \tfrac{\log(t)}{\kl(\mu_2,\mu_1)}
    }
$$
    holds eventually (the limit inferior is almost-sure). 
    On this event and relying on \eqref{equation:ts sample rates}, we establish that
$$
    \Pr(A_t=2\mid E^\epsilon_t, H_{t:t+i}) = \OH\parens{t^{-1+\underset{\epsilon \to 0}{\oh}(1)}}.
$$
    Applying \cref{theorem:agnosticity}, we obtain $\SliReg(T) \le \lfloor \frac 1{1+\oh(1)}\rfloor(\mu_1 - \mu_2)$.
    Making $\epsilon$ go to zero, we obtain optimal sliding regret guarantees for TS. 

% \end{proofsketch}

\subsection{Application: MED}

% The method sketched in the previous paragraph is generic. 
% The sliding regret is a property on the almost-sure asymptotic regime of the algorithm. 
% It usually starts by showing that all arms are visited infinitely often, then that empirical estimates converge. 
% This is used to show that the optimal arm is visited with a high rate. 
% From there, it is important to derive a precise lower bound of $N_2(t)$ to properly upper bound $\Pr(A_t\ne 1)$ in the asymptotic regime, then understand how much the behavior of the algorithm is correlated to its local history. 
% % This same strategy is used to analyze randomized algorithms such as TS, MED but also indexed ones such as UCB and its variants. 
% The details are, of course, usually specific to each algorithm. 
% This strategy can also be used to analyze MED. 

MED (\cite{honda_asymptotically_2010}) is a randomized algorithm that, at time $t$, samples the arm $a$ with probability proportional to $\exp(-N_a(t)\kl(\hat \mu_a(t), \hat \mu^*(t)))$. 
MED is known to have asymptotically optimal expected regret (refer to the original paper).
As the empirical estimates converge, the sampling rate of the arm $a = 2$ is approximately $\exp(-N_2(t) \kl(\mu_2, \mu_1))$, which is essentially the same as Thompson Sampling's in the asymptotic regime. 
Therefore, the analysis of its sliding regret is similar to Thompson Sampling's. 

\begin{theorem}
    MED has optimal sliding regret
    $\SliReg({\rm MED}; T) = \mu_1 - \mu_2$.
\end{theorem}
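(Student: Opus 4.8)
\begin{proofsketch}
The plan is to run, for MED, the same scheme as for Thompson Sampling: exhibit a sequence of events $(E_t^\epsilon)$ describing the asymptotic regime of the algorithm, with almost-sure limit inferior, on which the conditional probability of pulling arm $2$ is $\OH(t^{-1+\oh(1)})$ uniformly over short local histories, and then invoke \cref{theorem:agnosticity}. The task is in fact more transparent than for Thompson Sampling: MED's sampling probabilities are explicit deterministic functions of the empirical data $(N_a(t),\hat\mu_a(t))$, so no Beta--Bernoulli comparison nor Sanov-type estimate is needed to control them, and \eqref{equation:ts sample rates} has a direct analogue obtained straight from the definition.

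I would first record the asymptotic regime of MED. As its expected regret is sublinear, both arms are pulled infinitely often, so $\hat\mu_a(t)\to\mu_a$ almost surely for $a=1,2$; in particular $\hat\mu^*(t)=\hat\mu_1(t)$ eventually, so $\kl(\hat\mu_1(t),\hat\mu^*(t))=0$ and the sampling rule reads
\[
    \Pr(A_t=2\mid \mathcal F_t)
    = \frac{\exp\parens{-N_2(t)\kl(\hat\mu_2(t),\hat\mu_1(t))}}{1+\exp\parens{-N_2(t)\kl(\hat\mu_2(t),\hat\mu_1(t))}}
    \le \exp\parens{-N_2(t)\kl(\hat\mu_2(t),\hat\mu_1(t))}
\]
for $t$ large. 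Next, imitating the Thompson Sampling argument, I would establish the almost-sure asymptotics $N_2(t)\sim\log(t)/\kl(\mu_2,\mu_1)$. Combining the lower bound $N_2(t)\ge(1-\epsilon)\log(t)/\kl(\mu_2,\mu_1)$ with $\hat\mu_a(t)\to\mu_a$ and the continuity of $\kl$, the event
$$
    E_t^\epsilon := \parens{\forall a,\ \abs{\hat\mu_a(t)-\mu_a}<\epsilon}\cap\parens{N_2(t)\ge(1-\epsilon)\tfrac{\log t}{\kl(\mu_2,\mu_1)}}
$$
holds eventually almost surely, and on it $\Pr(A_t=2\mid E_t^\epsilon)=\OH(t^{-(1-\epsilon)^2})$. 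Robustness to local histories is then immediate: along a window $H_{t:t+i}$ of fixed length, $N_2(t+i)\ge N_2(t)$ and $\hat\mu_a$ moves by at most $\OH(i/N_a(t))=\oh(1)$, so $\Pr(A_{t+i}=2\mid E_t^\epsilon,H_{t:t+i})=\OH(t^{-(1-\epsilon)^2})$ with a constant not depending on $i$. Applying \cref{theorem:agnosticity} with $d=(1-\epsilon)^2$ and choosing $\epsilon$ small enough that $d>\tfrac12$ gives $\SliReg({\rm MED};T)\le\mu_1-\mu_2$, which together with the universal lower bound $\mu_1-\mu_2\le\SliReg({\rm MED};T)$ yields the claim.

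The main obstacle is the almost-sure statement $N_2(t)\sim\log(t)/\kl(\mu_2,\mu_1)$, and especially its lower half, which is exactly what turns the exponential bound above into a genuine $t^{-1+\oh(1)}$ decay; unlike for Thompson Sampling there is no off-the-shelf analogue of \cite[Proposition~1]{kaufmann_thompson_2012} to cite, so it must be produced by hand. The natural route is a renewal / Borel--Cantelli argument driven by the explicit lower bound $\Pr(A_t=2\mid\mathcal F_t)\ge\tfrac12\exp(-N_2(t)\kl(\hat\mu_2(t),\hat\mu_1(t)))$: whenever $N_2(t)$ lags behind $(1-\epsilon)\log(t)/\kl(\mu_2,\mu_1)$ this probability is at least of order $t^{-(1-\epsilon')}$, which forces arm $2$ to be pulled often enough to catch up, while the matching upper bound $N_2(t)\le(1+\epsilon)\log(t)/\kl(\mu_2,\mu_1)$ follows from the expected-regret bound together with concentration --- though, as noted, only the lower bound is actually needed for the sliding-regret estimate. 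Everything else (concentration of $\hat\mu_a$, the passage to local histories, and the bookkeeping inside \cref{theorem:agnosticity}) is routine and proceeds essentially verbatim as in the Thompson Sampling case.
\end{proofsketch}
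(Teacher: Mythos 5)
Your proposal is correct and follows essentially the same route as the paper, which treats MED by observing that its asymptotic sampling rate $\exp(-N_2(t)\kl(\hat\mu_2(t),\hat\mu_1(t)))$ matches Thompson Sampling's and then reuses the TS scheme (concentration of $\hat\mu_a$, the almost-sure visit-rate asymptotics for $N_2(t)$, robustness over fixed windows, and \cref{theorem:agnosticity}). Your added observations --- that the explicit form of MED's probabilities removes the need for the Beta--Binomial/Sanov machinery, and that only the lower bound $N_2(t)\ge(1-\epsilon)\log(t)/\kl(\mu_2,\mu_1)$ is actually needed --- are accurate refinements of that same argument.
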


% This example was too "blog-fashionned", and is only about something that is not working, even less than a conjecture. Thus it is removed from the paper.
% 
% \begin{example}[EXP3]
%     EXP3 (\cite{auer_gambling_1995}) is a troublesome algorithm and is much harder to analyze.
%     This algorithm maintains for each arm $a=1,2$ a quantity $\hat U_a(t)$ and pulls an arm $a=1,2$ with probability $\pi_a(t) \propto \exp(\eta(t) \hat U_a(t))$ where $\eta(t) := \sqrt{\ln(2)/(2t)}$. 
%     The quantities $\hat U_a(t)$ are initialized to $\hat U_a(1) = 0$ and updated with the loss-based importance weighted estimator:
%     $$
%         \hat U_a(t+1) = \hat U_a(t) + 1 - \frac{\indicator{A_t=a}(1 - R_t)}{\pi_a(t)}.
%     $$
%     It is by design that $\E[\frac 1t \hat U_a(t)] = \mu_a$, yet experimentally one can check that $\frac 1t \hat U_a(t) \to \mu^*$. 
%     This is only but foreshadowing that the asymptotic regime of EXP3 is more intricate than TS, MED, UCB and its variants. 
%     We nonetheless conjecture that the conditions of \cref{theorem:agnosticity} are met and that there is a constant $C >0$ such that $\SliReg({\rm EXP3}; T) = C(\mu_1 - \mu_2)$. 
% \end{example}

% This section is dedicated to UCB.

\section{The Bumpy Regret of UCB}

In this section, we show that unlike TS and MED, UCB does not have good sliding regret guarantees. 
In fact, the sliding regret of UCB is the worst possible as shown with \cref{theorem:ucb sliding}.

The UCB algorithm from \cite{auer_using_2002} is an index algorithm rooted in the \emph{optimism-in-face-of-uncertainty} principle, that can be traced back at least to \cite{lai_asymptotically_1985}. 
At time $t$, it picks the arm maximizing the index
\begin{equation}
    % \tag{UCB index}
    \label{equation:ucb index}
    \hat \mu_a(t) + \sqrt{\frac{2\log(t)}{N_a(t)}}
\end{equation}
which is $+\infty$ if $N_a(t) = 0$ by convention. 
Expected regret guarantees in $\OH(\log(T))$ can be found in the original paper.
This algorithm is arguably the basis of many algorithms and has been thoroughly investigated. 
This is why, to build intuition on how index algorithms typically behave, we dedicate this section to the analysis of the almost sure regime of UCB. 

Thankfully, the almost-sure behavior of UCB at infinity is well-behaved.
Eventually $\hat \mu_a(t)$ converges to $\mu_a$ and the visit rates of arms are such that the index of both arms \eqref{equation:ucb index} are approximately equal. 
In fact, $N_1(t) \sim t$ and $N_2(t) \sim \frac 2{(\mu_1-\mu_2)^2}\log(t)$ when time goes to infinity, see Proposition~\ref{proposition:asymptotic ucb}.
% These statements are made more precise with the following statement. 

\begin{proposition}
    \label{proposition:asymptotic ucb}
    For all $\epsilon > 0$ and when running UCB, both of the following hold:
    \begin{itemize}
        \item[(1)] 
            $\Pr(\exists t, \forall s \ge t: \forall a, \abs{\mu_a(s) - \mu_a(s)} < \epsilon) = 1$;

        \item[(2)]
            $\Pr\parens{
                \exists t, \forall s \ge t:
                \abs{N_2(t) - 2\parens{\tfrac 1{\mu_1-\mu_2}}^2\log(t)}
                < \epsilon \cdot 2\parens{\tfrac 1{\mu_1-\mu_2}}^2\log(t)
            } = 1.$
    \end{itemize}
\end{proposition}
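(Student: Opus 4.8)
The statement combines a law-of-large-numbers fact, item (1), with a precise first-order expansion of $N_2(t)$, item (2). The plan is to establish (1) first, then deduce (2) from it through purely deterministic index comparisons along a fixed trajectory. To begin I would show, on \emph{every} trajectory, that $N_1(t) \to \infty$ and $N_2(t) \to \infty$: if $N_2(t)$ stayed bounded by some $M$, then for $t$ large the index of arm $2$ would be at least $\sqrt{2\log(t)/M} \to \infty$, while $N_1(t) = (t-1) - N_2(t) \to \infty$ forces the index of arm $1$ to stay below $1 + \sqrt{2\log(t)/N_1(t)} \to 1$, so arm $2$ would eventually be pulled, a contradiction; the symmetric argument handles $N_1$. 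Since $N_a(t) \to \infty$ and $\hat\mu_{a,n} \to \mu_a$ almost surely by the strong law of large numbers, we get $\hat\mu_a(t) = \hat\mu_{a,N_a(t)} \to \mu_a$ almost surely for $a = 1, 2$, which is item (1). From here on, fix a trajectory in the almost-sure event $\{\hat\mu_a(t)\to\mu_a : a=1,2\}$ and, given $\delta > 0$, a (trajectory-dependent) time $T_\delta$ with $|\hat\mu_a(t) - \mu_a| < \delta$ for all $t \ge T_\delta$ and $a = 1,2$.

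For the upper bound $\limsup_t N_2(t)/\log(t) \le 2/(\mu_1-\mu_2)^2$: whenever arm $2$ is pulled at a time $t \ge T_\delta$, its index beats that of arm $1$, which is itself at least $\hat\mu_1(t)$, so $\hat\mu_2(t) + \sqrt{2\log(t)/N_2(t)} \ge \hat\mu_1(t)$; combined with $\delta$-concentration this yields $\sqrt{2\log(t)/N_2(t)} > \mu_1 - \mu_2 - 2\delta$, i.e. $N_2(t) < g_\delta(t) := \frac{2\log(t)}{(\mu_1-\mu_2-2\delta)^2}$. Since $N_2$ increases by exactly one at such times, never decreases otherwise, and $g_\delta$ is increasing, a short induction (started at the first time after $T_\delta$ where arm $2$ is pulled) gives $N_2(t) \le g_\delta(t) + 1$ for all $t$ large enough. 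Letting $\delta \to 0$ gives the claimed $\limsup$.

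For the lower bound $\liminf_t N_2(t)/\log(t) \ge 2/(\mu_1-\mu_2)^2$: the previous step gives $N_2(t) = \oh(t)$, hence $N_1(t) = (t-1) - N_2(t) \ge t/2$ eventually, so the bonus of arm $1$ is at most $\beta_t := \sqrt{4\log(t)/t} = \oh(1)$. Therefore, at a time $t \ge T_\delta$ with $N_2(t) < \frac{2\log(t)}{(\mu_1-\mu_2+2\delta+\beta_t)^2}$, the index of arm $2$ exceeds $\mu_1 + \delta + \beta_t$, which is at least the index of arm $1$, so arm $2$ is pulled and $N_2$ increases. Fixing any $c < 2/(\mu_1-\mu_2+2\delta)^2$, so that $c\log(t) \le \frac{2\log(t)}{(\mu_1-\mu_2+2\delta+\beta_t)^2}$ eventually, the same flavor of induction — together with the observation that if $N_2(t)$ stayed below $c\log(t)$ forever it would be incremented at every step and thus grow linearly, a contradiction — shows $N_2(t) \ge \lfloor c\log(t)\rfloor$ eventually. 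Hence $\liminf_t N_2(t)/\log(t) \ge c$; taking $c \uparrow 2/(\mu_1-\mu_2+2\delta)^2$ and then $\delta \to 0$ finishes the lower bound.

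Combining the last two steps gives $N_2(t) \sim \frac{2\log(t)}{(\mu_1-\mu_2)^2}$ almost surely, which is item (2). The technical heart, and the only place requiring real care, is the bookkeeping in the two inductions: because $N_2$ moves in unit steps while the comparison thresholds $c\log(t)$ vary by $\oh(1)$ per step, one must handle the harmless instants where $c\log(t)$ is an integer, and absorb the additive term $\beta_t$ and the slack $\delta$ without circular reasoning. The chain ``concentration $\Rightarrow$ upper bound on $N_2$ $\Rightarrow$ $N_1$ grows linearly $\Rightarrow$ lower bound on $N_2$'' is precisely what breaks the potential circularity.
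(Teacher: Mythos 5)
Your proposal is correct. For item (2) it rests on the same two index comparisons as the paper: at any late pull of arm $2$ the bonus $\sqrt{2\log t/N_2(t)}$ must exceed roughly $\mu_1-\mu_2$ (giving the upper bound on $N_2$), and whenever $N_2(t)$ falls below $\approx \frac{2\log t}{(\mu_1-\mu_2)^2}$ while $N_1(t)\sim t$, arm $2$'s index strictly dominates so it is pulled (giving the lower bound); you organize this as a pathwise sandwich with unit-step induction, whereas the paper phrases it probabilistically, bounding $\Pr(\forall t,\exists s\ge t: N_2(s)\le(1-\epsilon)\lambda_s)$ and its counterpart by locating a recent pull of the ``wrong'' arm and showing the index inequality at that pull is deterministically impossible on the concentration event --- the same contradiction, just in contrapositive and in event form (which is the form the paper reuses downstream as $E_t^\epsilon$). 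The genuine difference is item (1): the paper invokes its general Proposition~\ref{proposition:consistent policies} (sublinear expected regret implies infinitely many visits, proved by a change-of-measure argument) plus the SLLN, while you give a direct, UCB-specific deterministic argument --- a bounded $N_a$ makes that arm's bonus diverge while the other arm's index stays near $1$, forcing a pull. Your route is more elementary and self-contained (it needs no regret guarantee for UCB), while the paper's buys generality, since the same consistency proposition serves all the other algorithms it analyzes; your bookkeeping caveats (ties, the $\pm 1$ slack against the slowly growing threshold, and avoiding circularity via upper bound $\Rightarrow N_1(t)\gtrsim t/2 \Rightarrow$ lower bound) are exactly the right points of care and are handled soundly.
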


The proof of Proposition~\ref{proposition:asymptotic ucb} is provided in \cref{section:ucb analysis}. 

\subsection{The Sliding Regret of UCB}

The analysis is driven by the behavior observed in \cref{figure:typical}. 
UCB pulls every arm infinitely often, and every time it does pick the suboptimal arm, the probability that it picks it again in the next round is high. 
Intuitively speaking, this happens because when UCB picks the suboptimal arm $a=2$ and receives full reward $R_t = 1$, the empirical estimate $\hat \mu_2(t)$ increases enough so that UCB ``thinks'' that it has been sub-sampled. 
In other words, in the asymptotic regime of UCB, if a suboptimal arm provides promising rewards, UCB will keep pulling it to ``make sure'' that this arm's estimate is not misestimated. 
This means that the central condition of \cref{theorem:agnosticity} is not met by UCB. 
The time instants when UCB starts pulling suboptimal arms are called \emph{exploration episodes}, and are formally given by the increasing sequence of stopping times:
\begin{equation}
    \label{equation:exploration episodes}
    \tau_1 := \inf\set{t : A_t=2},
    \quad
    \tau_{k+1} := \inf\set{t > \tau_k: A_t = 2 \wedge A_{t-1} = 1}.
\end{equation}
Since all arms are pulled infinitely often, all these are almost surely finite.

\begin{lemma}
    \label{lemma:ucb non agnosticity}
    Consider running UCB, and fix $T > 0$. 
    There exists a sequence of events indexed by exploration episodes $(E_{\tau_k})$ with $\Pr(\liminf_k E_{\tau_k}) = 1$, such that, for all sequence $(U_t: t\ge1)$ of $\sigma(H_t)$-measurable events:
$$
    \Pr \parens{\forall i < T: A_{\tau_k+i} = 2 \mid E_{\tau_k}, U_{\tau_k}}
    \ge 
    \mu_2^T.
$$
\end{lemma}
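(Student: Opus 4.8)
The plan is to exhibit explicitly how UCB fails the hypothesis of \cref{theorem:agnosticity}: at an exploration episode, if the next rewards drawn from arm~$2$ all equal~$1$ then UCB keeps pulling arm~$2$, and such a favourable streak occurs with probability at least $\mu_2^{T-1}\ge\mu_2^T$ irrespective of the past. For the events $E_{\tau_k}$ I would take the asymptotic regime of UCB: fix a small $\epsilon>0$ and let $G_t$ be the $\sigma(H_t)$-measurable event $\set{\forall a:\abs{\hat\mu_a(t)-\mu_a}<\epsilon}\cap\set{\abs{N_2(t)-\tfrac{2\log t}{(\mu_1-\mu_2)^2}}<\epsilon\,\tfrac{2\log t}{(\mu_1-\mu_2)^2}}$, which by \cref{proposition:asymptotic ucb} satisfies $\Pr(\liminf_s G_s)=1$. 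I would further fix a deterministic threshold $t_1$ (large, depending only on $\epsilon,T,\mu_1,\mu_2$) and set $E_{\tau_k}:=G_{\tau_k}\cap\set{\tau_k\ge t_1}$. Since both arms are pulled infinitely often there are infinitely many exploration episodes and $\tau_k\to\infty$ almost surely, so together with $\Pr(\liminf_s G_s)=1$ this gives $\Pr(\liminf_k E_{\tau_k})=1$.

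The core of the proof is a deterministic \emph{locking claim}: on $E_{\tau_k}$, for every $j\in\set{1,\ldots,T-1}$, if $A_{\tau_k+i}=2$ and $R_{\tau_k+i}=1$ for all $i<j$, then $A_{\tau_k+j}=2$. To see this, note that $A_{\tau_k}=2$ forces $\hat\mu_2(\tau_k)+\sqrt{2\log\tau_k/N_2(\tau_k)}\ge\hat\mu_1(\tau_k)+\sqrt{2\log\tau_k/N_1(\tau_k)}$; under the hypothesis arm~$1$ is untouched on $[\tau_k,\tau_k+j)$ while arm~$2$ gets $j$ extra pulls all returning~$1$, so $N_1,\hat\mu_1$ are unchanged and $N_2,S_2$ both increase by~$j$. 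Writing $n_0:=N_2(\tau_k)$, a short computation shows that the index of arm~$2$ minus the index of arm~$1$ at time $\tau_k+j$ is at least
\[
    \frac{j\parens{1-\hat\mu_2(\tau_k)}}{n_0+j}\;-\;\frac{j\sqrt{2\log\tau_k}}{2\,n_0^{3/2}}\;-\;\OH\parens{\tau_k^{-3/2}},
\]
the three terms being respectively the empirical-mean bump of arm~$2$, an upper bound on the shrinkage of its exploration bonus, and the negligible effect of $\log t$ on arm~$1$'s bonus. On $G_{\tau_k}$ one has $\hat\mu_2(\tau_k)<\mu_2+\epsilon$ and $n_0\in(1\pm\epsilon)\tfrac{2\log\tau_k}{(\mu_1-\mu_2)^2}$, so for $\tau_k\ge t_1$ the above is bounded below by $\tfrac{j(\mu_1-\mu_2)^2}{2\log\tau_k}\bigl(\tfrac{1-\mu_2-\epsilon}{1+2\epsilon}-\tfrac{\mu_1-\mu_2}{2(1-\epsilon)^{3/2}}-\oh(1)\bigr)$, and the parenthesised factor tends to $1-\tfrac{\mu_1+\mu_2}{2}>0$ as $\epsilon\to0$ and $\tau_k\to\infty$ (here $\mu_1,\mu_2<1$ is used). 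Choosing $\epsilon$ small and $t_1$ large makes this strictly positive, so arm~$2$ strictly dominates and $A_{\tau_k+j}=2$.

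To finish, by induction on $j$ the locking claim shows that, on $E_{\tau_k}$, the reward event $\bigcap_{0\le i\le T-2}\set{R_{\tau_k+i}=1}$ forces $A_{\tau_k+i}=2$ for all $i<T$. Now $E_{\tau_k}$ and $U_{\tau_k}$ are $\sigma(H_{\tau_k})$-measurable, and since $A_{\tau_k}=2$ at an exploration episode, conditionally on the history up to $\tau_k$ one has $R_{\tau_k}\sim{\rm B}(\mu_2)$; inductively, once $A_{\tau_k+j}=2$ has been forced, $R_{\tau_k+j}\sim{\rm B}(\mu_2)$ independently of everything before. Peeling off these rewards one at a time with the tower rule gives $\Pr\parens{\forall i\le T-2:R_{\tau_k+i}=1\mid E_{\tau_k},U_{\tau_k}}=\mu_2^{T-1}\ge\mu_2^T$, hence $\Pr\parens{\forall i<T:A_{\tau_k+i}=2\mid E_{\tau_k},U_{\tau_k}}\ge\mu_2^T$. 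The one genuinely delicate point is the locking claim: one must verify that the $\hat\mu_2$-bump created by a reward of~$1$ really outweighs the loss of exploration bonus over all $T$ steps, uniformly in~$k$, which is precisely where the sharp asymptotics $N_2(\tau_k)\sim\tfrac{2\log\tau_k}{(\mu_1-\mu_2)^2}$ of \cref{proposition:asymptotic ucb} and the inequality $\mu_1+\mu_2<2$ come into play.
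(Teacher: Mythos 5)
Your proposal is correct and follows essentially the same route as the paper: you take for $E_{\tau_k}$ the asymptotic-regime event of Proposition~\ref{proposition:asymptotic ucb} intersected with $\{\tau_k \ge t_1\}$, prove the same ``locking'' estimate on $I_2-I_1$ (reward bump $\approx j(1-\mu_2)/N_2$ beating the bonus shrinkage $\approx j(\mu_1-\mu_2)/(2N_2)$, positive since $1-\tfrac{\mu_1+\mu_2}{2}>0$), and peel off the forced Bernoulli rewards to get $\mu_2^{T-1}\ge\mu_2^T$. The only difference is cosmetic: the paper isolates the index-deviation computation as a separate two-sided lemma (Lemma~\ref{lemma:ucb key deviations}) reused later, whereas you carry out the one-sided bound inline.
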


The additional sequence $(U_t)$ informs that the above lower bound is resilient to pollution of the history. 
The event $E_{\tau_k}$ is mostly about the concentrations of empirical means $\hat \mu_a(t)$ and of visit rates, given by Proposition~\ref{proposition:asymptotic ucb}.
The main line of the proof is to estimate the evolution of UCB's index \eqref{equation:ucb index} with respect to $\hat \mu_a(t)$, $t$ and $N_a(t)$ for $a = 1, 2$, and to show that UCB keeps picking the suboptimal arm while it provides optimal reward. 
It follows that the sliding regret is linear.

\begin{theorem}
    \label{theorem:ucb sliding}
    UCB has the worst possible sliding regret $\SliReg(T) = T(\mu_1-\mu_2)$. 
\end{theorem}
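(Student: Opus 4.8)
The plan is to combine \cref{lemma:ucb non agnosticity} with a conditional Borel--Cantelli argument run along a sparsified subsequence of the exploration episodes. The upper bound $\SliReg(T) \le T(\mu_1 - \mu_2)$ holds for any policy since, in a two-arm bandit, every summand of the window satisfies $\mu_1 - \mu_{A_{t+i}} \le \mu_1 - \mu_2$ deterministically; so it suffices to prove the matching lower bound, namely that almost surely there are arbitrarily large $t$ with $A_{t+1} = A_{t+2} = \cdots = A_{t+T} = 2$, on which the windowed pseudo-regret equals exactly $T(\mu_1-\mu_2)$.

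First I would fix the subsequence. Because all arms are pulled infinitely often, the exploration episodes $(\tau_k)$ of \eqref{equation:exploration episodes} are a.s.\ all finite, and from them I extract greedily an increasing sequence of stopping times $\sigma_1 < \sigma_2 < \cdots$ subject to the separation $\sigma_{j+1} \ge \sigma_j + T$; thus $\sigma_j \to \infty$ and the blocks $W_j := \set{\sigma_j, \ldots, \sigma_j + T - 1}$ are pairwise disjoint. Write $B_j := \parens{\forall i < T:\, A_{\sigma_j + i} = 2}$; the separation gives $B_j \in \sigma(H_{\sigma_{j+1}})$, so $(B_j)_j$ is adapted to the filtration $\mathcal{G}_j := \sigma(H_{\sigma_{j+1}})$, with $\Pr(B_j \mid \mathcal{G}_{j-1}) = \Pr(B_j \mid \sigma(H_{\sigma_j}))$ (using the convention $\mathcal{G}_0 := \sigma(H_{\sigma_1})$). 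Next I would invoke \cref{lemma:ucb non agnosticity} at the exploration episode $\sigma_j$: its ``robustness to $\sigma(H_t)$-measurable events $U_t$'' is exactly the inequality $\Pr(B_j \cap E_{\sigma_j} \cap U) \ge \mu_2^T\,\Pr(E_{\sigma_j} \cap U)$ for every $U \in \sigma(H_{\sigma_j})$, which — since $E_{\sigma_j}$ is itself $\sigma(H_{\sigma_j})$-measurable — characterizes the conditional expectation and yields $\Pr(B_j \mid \sigma(H_{\sigma_j})) \ge \mu_2^T$ almost surely on $E_{\sigma_j}$. As $(\sigma_j)$ is a subsequence of $(\tau_k)$, the hypothesis $\Pr(\liminf_k E_{\tau_k}) = 1$ passes to $\Pr(\liminf_j E_{\sigma_j}) = 1$, hence $\sum_j \Pr(B_j \mid \sigma(H_{\sigma_j})) \ge \mu_2^T \sum_j \indicator{E_{\sigma_j}} = +\infty$ almost surely.

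Then L\'evy's extension of the second Borel--Cantelli lemma, applied to $(B_j)$ adapted to $(\mathcal{G}_j)$, gives $\Pr(B_j \text{ infinitely often}) = 1$. On $B_j$, choosing $t = \sigma_j - 1$ in the definition of $\SliReg$ makes the window $A_{t+1}, \ldots, A_{t+T}$ equal to $2, \ldots, 2$, so $T\mu^* - \sum_{i=1}^T \mu_{A_{t+i}} = T(\mu_1 - \mu_2)$; letting $\sigma_j \to \infty$ shows $\SliReg(T) \ge T(\mu_1 - \mu_2)$ almost surely, and with the trivial upper bound this gives $\SliReg(T) = T(\mu_1 - \mu_2)$.

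The main obstacle is the dependence between the blocks: \cref{lemma:ucb non agnosticity} only controls the probability of a bad block conditionally on the history at its own exploration episode, so a naive Borel--Cantelli over all episodes $(\tau_k)$ fails because the corresponding events overlap and the conditioning $\sigma$-algebras do not nest. Imposing $\sigma_{j+1} \ge \sigma_j + T$ is precisely what restores adaptedness and makes the conditional Borel--Cantelli lemma applicable; a secondary, more routine point is the careful passage from the ``$U_t$-robust'' formulation of \cref{lemma:ucb non agnosticity} to a genuine pointwise-a.s.\ lower bound on the conditional probability restricted to $E_{\sigma_j}$.
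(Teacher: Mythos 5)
Your proposal is correct and takes essentially the same route as the paper's proof: sparsify the exploration episodes so that the $T$-length blocks are disjoint, apply \cref{lemma:ucb non agnosticity} with the outcomes of earlier blocks absorbed into the $\sigma(H_t)$-measurable event $U_t$, and conclude via $\Pr(\liminf E_{\tau_k})=1$ that fully suboptimal blocks occur infinitely often almost surely. The only cosmetic difference is that you invoke L\'evy's conditional Borel--Cantelli lemma as a black box, whereas the paper unrolls the same estimate by hand as a telescoping product of conditional probabilities each bounded by $1-\mu_2^T$ (and the paper fixes the sparsified subsequence deterministically as $\tau_{\ell T}$ rather than by your greedy, history-measurable selection, which is equally valid).
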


\begin{proof}
    Since $\tau_{k+T} > \tau_{k} + 2T$, the events $(\exists i < T: A_{\tau_k+i}\ne 2)$ and $(\exists i < T: A_{\tau_{k+T}} \ne 2)$ do not overlap. 
    Denote $F_{\tau_{\ell T}} := (\exists i < T: A_{\tau_{\ell T}+i} \ne 2)$ and let $E_{\tau_k}$ the event given by Lemma~\ref{lemma:ucb non agnosticity} for a fixed $\epsilon < \mu_2$. 
    Observe that $\Pr(\forall \ell \ge k: E_{\tau_{\ell T}} \cap F_{\tau_{\ell T}})$ can be put in the form: 
    $$
        % \Pr \parens{E_{\tau_{kT}}}^{-1}
        \prod_{\ell \ge k} 
        \Pr\parens{
            \exists i < T: A_{\tau_{\ell T} +i} \ne 2 
            \mid 
            E_{\tau_{\ell T}}, 
            U_{\tau_{(\ell-1)T}+T}
        }
        \Pr \parens{
            E_{\tau_{\ell T}}
            \mid
            U_{\tau_{(\ell-1)T}+T}
        }
    $$
    where $U_{\tau_{(\ell-1)T}+T} := \bigcap_{m \le \ell-1} (E_{\tau_{mT}} \cap F_{\tau_{mT}})$ is a $\sigma(H_{\tau_{(\ell-1)T}+T})$-measurable event. 
    Applying Lemma~\ref{lemma:ucb non agnosticity}, we obtain:
    $$
        \Pr(\forall \ell \ge k: E_{\tau_{\ell T}} \cap F_{\tau_{\ell T}})
        \le 
        \prod\nolimits_{\ell \ge k} \parens{1 - (\mu_2-\epsilon)^T} = 0.
    $$
    It follows that:
    $$
        \Pr(\forall l \ge k: F_{\tau_{\ell T}})
        \le
        \Pr\parens{\forall l \ge k: (E_{\tau_{\ell T}})^\complement}
        +
        \Pr(\forall \ell \ge k: E_{\tau_{\ell T}} \cap F_{\tau_{\ell T}})
        = 
        \Pr\parens{\forall l \ge k: (E_{\tau_{\ell T}})^\complement}.
    $$
    But since $\Pr(\liminf_k E_{\tau_{k T}}) = 1$, the above RHS goes to zero as $k \to \infty$. 
    Therefore, we obtain $\Pr(\forall k, \exists \ell \ge k: \forall i < T, A_{\tau_\ell+i} = 2) = 1$, proving $\SliReg({\rm UCB}; T) = T(\mu_1-\mu_2)$. 
\end{proof}

With the same proof techniques than Lemma~\ref{lemma:ucb non agnosticity}, the above result can be further refined.
When UCB receives full reward from the suboptimal arm, the associated index increases significatively so that, not only UCB will pick the suboptimal arm again in the next round, but it will also pick it in the next round, independently of the observed reward. 
Roughly speaking, if UCB receives many promising rewards in a row for the suboptimal arm, the associated index is polluted and UCB will blindly pick it again many times in succession, independently of the feedback.

\begin{proposition}
    \label{proposition:ucb monkey}
    Fix $T > 0$ and assume that we are running UCB.
    There exists an increasing sequence of almost-surely finite stopping times $(\sigma_k : k \ge 1)$ s.t.,
    $$
        \Pr(\Reg(\sigma_k; \sigma_k+T) \ge (\mu_1-\mu_2)T) = 1.
    $$
\end{proposition}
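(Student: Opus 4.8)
The plan is to push the index-evolution analysis behind Lemma~\ref{lemma:ucb non agnosticity} one step further. At an exploration episode $\tau$ one has $\hat\mu_2(\tau)+\sqrt{2\log\tau/N_2(\tau)}\ge\hat\mu_1(\tau)+\sqrt{2\log\tau/N_1(\tau)}$ by definition. In the asymptotic regime of Proposition~\ref{proposition:asymptotic ucb} (so $\hat\mu_a\approx\mu_a$, $N_2(t)\approx\tfrac{2\log t}{(\mu_1-\mu_2)^2}$, $N_1(t)\sim t$), each time UCB pulls arm $2$ and sees reward $1$ its index rises by a positive amount $\tfrac{1-\hat\mu_2(t)}{N_2(t)+1}-\bigl(\sqrt{\tfrac{2\log t}{N_2(t)}}-\sqrt{\tfrac{2\log(t{+}1)}{N_2(t)+1}}\bigr)\approx\bigl(1-\tfrac{\mu_1+\mu_2}{2}+\OH(\epsilon)\bigr)\tfrac1{N_2(t)}$ (the empirical-mean rise beats the bonus loss), whereas one further round — worst case a reward $0$ — costs it at most $\approx\bigl(\tfrac{\mu_1+\mu_2}{2}+\OH(\epsilon)\bigr)\tfrac1{N_2(t)}$, while arm $1$'s index barely moves, up to the negligible logarithmic drift of its bonus, since arm $1$ is not pulled. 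Fix once and for all an integer $m$ with $m\bigl(1-\tfrac{\mu_1+\mu_2}{2}\bigr)>\tfrac{\mu_1+\mu_2}{2}\,T$ — possible since $\mu_1,\mu_2<1$ — then choose $\epsilon$ small enough that this strict inequality survives the $\OH(\epsilon)$ and $\oh(1)$ perturbations over a window of $m+T$ rounds. Thus, after a streak of $m$ full rewards on arm $2$, its index stands about $\tfrac{m}{N_2(t)}\bigl(1-\tfrac{\mu_1+\mu_2}{2}\bigr)$ above arm $1$'s — strictly more than the $\approx\tfrac{T}{N_2(t)}\cdot\tfrac{\mu_1+\mu_2}{2}$ that $T$ additional arbitrary rounds can erode.

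To turn this into stopping times I restrict, as in Lemma~\ref{lemma:ucb non agnosticity}, to a ``good start'' event at the episode — but one that only constrains the history up to the episode itself, since the ``$\forall s\ge t$'' events of Proposition~\ref{proposition:asymptotic ucb} are not $\sigma(H_t)$-measurable and using them would break the stopping-time property. Let $\tilde E_\tau$ be the $\sigma(H_\tau)$-measurable event that, at time $\tau$: $|\hat\mu_a(\tau)-\mu_a|<\epsilon$ for $a=1,2$; $\bigl|N_2(\tau)-\tfrac{2\log\tau}{(\mu_1-\mu_2)^2}\bigr|<\epsilon\,\tfrac{2\log\tau}{(\mu_1-\mu_2)^2}$ (which already forces $N_1(\tau)\ge\tau/2$); and $\tau$ is large enough that every remainder in the previous paragraph lies below the slack $m\bigl(1-\tfrac{\mu_1+\mu_2}{2}\bigr)-\tfrac{\mu_1+\mu_2}{2}T$. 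By Proposition~\ref{proposition:asymptotic ucb}, $\Pr(\liminf_k\tilde E_{\tau_k})=1$. On $\tilde E_\tau$, a short induction on the two index inequalities above shows: if the $m$ rewards received right after $\tau$ all equal $1$, then UCB pulls arm $2$ at rounds $\tau,\dots,\tau+m-1$ (so those are genuine arm-$2$ rewards, each equal to $1$ with probability $\mu_2$ given the past), and then UCB pulls arm $2$ at every round $\tau+m,\dots,\tau+m+T-1$, whatever the feedback in between.

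Now set $G_j:=\tilde E_{\tau_j}\cap\{A_{\tau_j+i}=2,\ R_{\tau_j+i}=1:\ 0\le i<m\}$, enumerate the $j$ with $G_j$ true as $j_1<j_2<\cdots$, and put $\sigma_k:=\tau_{j_k}+m$ — an a.s.\ finite stopping time provided the events $G_j$ occur infinitely often. For the latter: consecutive exploration episodes are at least two rounds apart, so the windows $[\tau_{mi},\tau_{mi}+m-1]$, $i\ge1$, are pairwise disjoint; on $\tilde E_{\tau_{mi}}$ one has $\Pr(G_{mi}\mid\sigma(H_{\tau_{mi}}))=\mu_2^m>0$ by the induction above; and $\tilde E_{\tau_{mi}}$ holds for all large $i$ almost surely. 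Exactly as in the proof of Theorem~\ref{theorem:ucb sliding}, this yields $\Pr(\forall i\ge i_0:G_{mi}^\complement)\le\Pr(\forall i\ge i_0:\tilde E_{\tau_{mi}}^\complement)+\prod_{i\ge i_0}(1-\mu_2^m)$, which tends to $0$ as $i_0\to\infty$, so $\Pr(G_j\text{ infinitely often})=1$. Finally, on $G_{j_k}$ — which holds by the very definition of $\sigma_k$ — the second half of the induction forces $A_{\sigma_k+i}=2$ for $0\le i<T$, so $\Reg(\sigma_k;\sigma_k+T)=(\mu_1-\mu_2)T$, and therefore $\Pr\bigl(\Reg(\sigma_k;\sigma_k+T)\ge(\mu_1-\mu_2)T\bigr)=1$.

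I expect the main obstacle to be the quantitative bookkeeping in the first paragraph: one has to check, uniformly over exploration episodes in the asymptotic regime, that the leading index increments $\pm\Theta(1/N_2(t))$ carry exactly the claimed constants and that everything discarded — the $\OH((m+T)/N_2(t)^2)$ second-order corrections to the empirical means and bonuses along the window, and the $\OH(\sqrt{\log t/t})$ drift of arm $1$'s bonus over the $m+T$ rounds — is genuinely dominated by the slack, which is exactly why the ``$\tau$ large enough'' clause is built into $\tilde E_\tau$. The only other point requiring care is keeping $\tilde E_\tau$ measurable with respect to $\sigma(H_\tau)$, without which the $\sigma_k$ would fail to be stopping times.
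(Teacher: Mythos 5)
Your proposal is correct and follows essentially the same route as the paper's proof: poison the index with a streak of $m$ full rewards on arm~2 right after an exploration episode (the paper's $\ell > \lceil \mu_1 T/(1-\mu_1)\rceil$, you with the sharper drift constants $1-\tfrac{\mu_1+\mu_2}{2}$ versus $\tfrac{\mu_1+\mu_2}{2}$ from Lemma~\ref{lemma:ucb key deviations}), use the Lemma~\ref{lemma:ucb non agnosticity}-type event and the $\mu_2^m$ lower bound together with the disjoint-window product argument of \cref{theorem:ucb sliding} to get such streaks infinitely often, and define $\sigma_k$ as the streak ends. The only differences are cosmetic: you spell out the infinitely-often step that the paper compresses into a Borel--Cantelli citation, and you make the $\sigma(H_\tau)$-measurability of the good event explicit, which the paper handles the same way.
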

For the construction of $(\sigma_k)$, refer to \cref{section:ucb fails}.

% Proposition~\ref{proposition:ucb monkey} is mostly qualitative.
Regarding Lemma~\ref{lemma:ucb non agnosticity}, the lower bound of for $\Pr(\forall i < T: A_{\tau_k+i} = 2)$ is decreasing exponentially fast with $T$. 
Even though \cref{theorem:ucb sliding} states that the pseudo-regret of UCB makes arbitrarily large jumps infinitely often, how rare are these large jumps? 
While it is known that the infinite monkey eventually writes the complete works of William Shakespeare, the expected time that the animal requires to eventually write the first sentence of \emph{Romeo and Juliet} is stupidly large. 

\subsection{The Regret of Exploration}

If UCB has a tendency to pick the suboptimal arm many times at exploration episodes $(\tau_k: k \ge 1)$, see \eqref{equation:exploration episodes}, how significant is this tendency?
We now investigate the expected regret starting from $\tau_k$ using the regret of exploration, first introduced in the work of \cite{boone_regret_2023}.

\begin{definition}
    The \emph{regret of exploration} of an algorithm is the quantity:
    \begin{equation}
        \RegExp(T) := \limsup_{k \to \infty} \E[\Reg(\tau_k; \tau_k+T)].
    \end{equation}
\end{definition}

As soon as an algorithm visits every arm infinitely often, the regret of exploration is well-defined, although the notion of \emph{exploration episode} is less natural for randomized algorithms such as TS or MED than it is for index algorithms like UCB. 
The regret of exploration is an alternative measure to the sliding regret, also quantifying the tendency of an algorithm to aggregate suboptimal play. 
The two are linked as follows.

\begin{proposition}
    For every algorithm, $\RegExp(T) \le \E[\SliReg(T)]$. 
\end{proposition}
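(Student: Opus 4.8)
The plan is to realise $\RegExp(T)$ as the $\limsup$ of expectations of window-regrets taken along the subsequence of exploration times, to bound those window-regrets pathwise by the sliding regret of the same sample path, and finally to exchange $\limsup$ with $\E$ via reverse Fatou.

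First I would record two elementary facts about the exploration episodes. Since $\tau_1 < \tau_2 < \cdots$ are strictly increasing positive integers, $\tau_k \ge k$; and under the standing assumption that makes $\RegExp(T)$ meaningful, namely that the policy pulls every arm infinitely often, it switches from arm $1$ to arm $2$ infinitely often, so there are infinitely many $\tau_k$, all almost surely finite. In particular $\tau_k \to \infty$ almost surely, so on almost every sample path $(\tau_k)_{k \ge 1}$ is a subsequence of $(t)_{t \ge 1}$ tending to infinity. Passing to a subsequence can only decrease a $\limsup$, hence almost surely
\[
    \limsup_{k \to \infty} \Reg(\tau_k; \tau_k + T)
    \;\le\;
    \limsup_{t \to \infty} \Reg(t; t + T)
    \;=\;
    \SliReg(T).
\]
This is the only place the structure of exploration episodes enters: nothing beyond $\tau_k \to \infty$ is used.

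Next I would invoke reverse Fatou's lemma. For every $k$ the window-regret obeys the deterministic bound $0 \le \Reg(\tau_k; \tau_k + T) \le T(\mu_1 - \mu_2)$, since at most $T$ of the pulls $A_{\tau_k}, \dots, A_{\tau_k + T - 1}$ can be suboptimal and each costs $\mu_1 - \mu_2$. Thus the family $\{\Reg(\tau_k; \tau_k + T) : k \ge 1\}$ is dominated by the integrable constant $T(\mu_1 - \mu_2)$, and reverse Fatou applies:
\[
    \RegExp(T)
    \;=\;
    \limsup_{k \to \infty} \E\bigl[\Reg(\tau_k; \tau_k + T)\bigr]
    \;\le\;
    \E\Bigl[\limsup_{k \to \infty} \Reg(\tau_k; \tau_k + T)\Bigr]
    \;\le\;
    \E\bigl[\SliReg(T)\bigr],
\]
where the last inequality is the almost-sure bound from the previous paragraph. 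This proves the claim.

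There is essentially no obstacle here. The two points needing a line of care are (i) that the exploration episodes genuinely form an infinite sequence diverging to $\infty$ — which is exactly guaranteed by, and is the natural hypothesis for, an algorithm visiting every arm infinitely often — and (ii) that swapping $\limsup$ and $\E$ is legitimate, for which the uniform upper bound $T(\mu_1 - \mu_2)$ on the window-regret supplies precisely the domination required by reverse Fatou.
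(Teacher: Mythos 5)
Your proof is correct and takes essentially the same route as the paper: bound the window regret at the exploration times pathwise by the sliding regret (you use $\tau_k \to \infty$, the paper uses $\tau_k \ge k$) and then exchange $\limsup$ with the expectation thanks to the uniform bound $\Reg(s;s+T) \le T(\mu_1-\mu_2)$ --- your reverse Fatou step and the paper's Bounded Convergence step are the same mechanism. If anything, your fully pathwise handling of the random times $\tau_k$ is slightly cleaner than the paper's first displayed inequality, which compares expectations at random times to suprema of expectations at deterministic times.
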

\begin{proof}
    Since $\tau_k < \tau_{k+1}$, we have $\tau_k \ge k$. 
    Therefore:
    \begin{align*}
        \RegExp(T) 
        := \inf_k \sup_{\ell \ge k} \E[\Reg(\tau_\ell;\tau_\ell+T)]
        &\le \inf_t \sup_{s \ge t} \E[\Reg(s; s+T)]
        \\
        &\le \inf_t \E\brackets{\sup_{s \ge t} \Reg(s; s+T)}.
    \end{align*}
    By definition, $\Reg(s; s+T) \in [0, T(\mu_1-\mu_2)]$ almost surely, so is bounded.
    By the Bounded Convergence Theorem, $\inf_t \E\brackets{\sup_{s \ge t} \Reg(s; s+T)}=\E[\inf_t \sup_{s \ge t}\Reg(s; s+T)]$.
    We readily obtain: $\RegExp(T) \le \E[\SliReg(T)]$.
\end{proof}

Combined with \cref{theorem:ts sliding regret}, this shows that Thompson Sampling has optimal regret of exploration. 
The same goes for MED, since MED also has sliding regret $\mu_1 - \mu_2$. 

\begin{corollary}
    Thompson Sampling and MED have optimal regret of exploration, that is, for $\pi = {\rm TS}$ or ${\rm MED}$, $\RegExp(\pi; T) = \mu_1 - \mu_2$. 
\end{corollary}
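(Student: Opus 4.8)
The plan is to establish the two-sided bound $\mu_1 - \mu_2 \le \RegExp(\pi; T) \le \mu_1 - \mu_2$ for $\pi = {\rm TS}$ and $\pi = {\rm MED}$, where the right-hand inequality is just a plug-in of the sliding regret theorems into the preceding proposition, and the left-hand inequality holds for \emph{any} algorithm that visits both arms infinitely often.

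First I would record that, since ${\rm TS}$ and ${\rm MED}$ have sublinear expected regret, both pull each arm infinitely often; in particular $N_1(t)\to\infty$ and $N_2(t)\to\infty$ almost surely, so there are infinitely many transitions with $A_{t-1}=1$ and $A_t=2$, which makes the exploration episodes $(\tau_k)$ of \eqref{equation:exploration episodes} almost surely all finite and $\RegExp(\pi;T)$ well-defined. For the upper bound, I would invoke the preceding proposition, $\RegExp(\pi;T)\le\E[\SliReg(\pi;T)]$, together with \cref{theorem:ts sliding regret} for $\pi={\rm TS}$ (resp.\ its ${\rm MED}$ counterpart), read as the almost-sure identity $\SliReg(\pi;T)=\mu_1-\mu_2$. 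Taking expectations gives $\E[\SliReg(\pi;T)]=\mu_1-\mu_2$, hence $\RegExp(\pi;T)\le\mu_1-\mu_2$.

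For the matching lower bound I would use the definition of an exploration episode directly: by construction $A_{\tau_k}=2$, so
\[
    \Reg(\tau_k;\tau_k+T) = T\mu_1 - \sum_{0 \le i < T}\mu_{A_{\tau_k+i}} = (\mu_1-\mu_2) + \sum_{1 \le i < T}(\mu_1-\mu_{A_{\tau_k+i}}) \ge \mu_1-\mu_2
\]
almost surely, since every remaining summand is non-negative. Taking expectations and then the $\limsup$ over $k$ yields $\RegExp(\pi;T)\ge\mu_1-\mu_2$, and combining the two bounds gives the claimed equality.

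I do not expect a genuine obstacle: the statement is a direct corollary of results already established in the paper. The only points requiring a little care are (i) reading the sliding regret theorems as almost-sure identities, so that taking expectations leaves $\mu_1-\mu_2$ unchanged and the proposition applies, and (ii) checking that the exploration episodes are almost surely infinite in number, which is needed merely so that $\RegExp(\pi;T)$ is meaningful.
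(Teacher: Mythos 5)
Your proposal is correct and follows essentially the paper's own route: the upper bound is exactly the paper's argument (the proposition $\RegExp(\pi;T)\le\E[\SliReg(\pi;T)]$ combined with the sliding regret theorems for TS and MED), and the lower bound $\Reg(\tau_k;\tau_k+T)\ge\mu_1-\mu_2$ from $A_{\tau_k}=2$ is the trivial step the paper leaves implicit. Your extra care about well-definedness of $(\tau_k)$ and about taking expectations of the almost-sure identity is fine and does not change the argument.
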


% The regret of exploration of UCB is shown to be of the form $C(T)(\mu_1-\mu_2)$, where $C(T)$ is $\OH(1)$ yet bounded away from $1$.
The regret of exploration of UCB is shown to be lower bounded by $C(T)(\mu_1-\mu_2)$ where $C(T)$ is bounded away from $1$.
We show that at exploration episodes and in the asymptotic regime, UCB behaves like a random walk with a negative drift, and that the regret exploration is related to the reaching time to $\R_-$. 
The proof is found in \cref{section:ucb exploration}.

\begin{theorem}
    \label{theorem:ucb regret of exploration}
    Let $(X_t:t\ge 1)$ a sequence of i.i.d.~random variables with distribution ${\rm B}(\mu_2)$. 
    Let $\sigma_T$ the stopping time
    $
        T \wedge
        \inf\set{ 
            t \ge 1: 
            -\frac{\mu_1 - \mu_2}2 + \frac 1t \sum_{i=1}^t (X_t - \mu_2) \le 0
        }.
    $
    % For all $T\ge 1$, we have $\RegExp({\rm UCB}; T) = \lim_{k \to \infty} \E[\Reg(\tau_k;\tau_k+T)] = (\mu_1-\mu_2)\E[\sigma_T]$. 
    For all $T\ge 1$, we have $\RegExp({\rm UCB}; T) \ge (\mu_1-\mu_2)\E[\sigma_T]$. 
\end{theorem}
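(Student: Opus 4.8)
The plan is to show that, at each exploration episode $\tau_k$ in the asymptotic regime, UCB behaves like a biased random walk, and that the regret accumulated over the window $[\tau_k, \tau_k+T)$ dominates $(\mu_1-\mu_2)$ times the number of consecutive pulls of arm $2$, which in turn stochastically dominates the stopping time $\sigma_T$. First I would recall the asymptotic regime events from \cref{proposition:asymptotic ucb}: eventually $\hat\mu_a(t) \to \mu_a$ and $N_2(t) \sim 2(\mu_1-\mu_2)^{-2}\log t$, hence $N_1(t) \sim t$. On this almost-sure event, I would analyze the dynamics of UCB's index \eqref{equation:ucb index} starting from $\tau_k$, following the same computations underlying \cref{lemma:ucb non agnosticity}: when UCB has just switched to arm $2$ at time $\tau_k$ and pulls it $j$ times in a row receiving rewards $X_1, \ldots, X_j \sim {\rm B}(\mu_2)$, the empirical mean of arm $2$ after these pulls is a convex combination of the old $\hat\mu_2(\tau_k) \approx \mu_2$ and the new sample mean $\frac1j\sum_{i=1}^j X_i$, while its bonus $\sqrt{2\log(\tau_k+j)/(N_2(\tau_k)+j)}$ stays close to $\sqrt{2\log\tau_k/N_2(\tau_k)} \approx (\mu_1-\mu_2)$ since $N_2(\tau_k) = \Theta(\log\tau_k)$ and $j \le T$ is negligible. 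Arm $1$'s index, meanwhile, stays near $\mu_1$ (its mean converges and its bonus vanishes as $N_1 \sim t$). So UCB continues pulling arm $2$ as long as the arm-$2$ index exceeds $\mu_1$, i.e.\ as long as roughly $\hat\mu_{2,\text{running}} + (\mu_1-\mu_2) > \mu_1$, which up to $o(1)$ corrections is the condition $-\frac{\mu_1-\mu_2}{2}$ being beaten — precisely the event defining $\sigma_T$ once the running mean deviation is accounted for. The factor $\tfrac12$ in the drift threshold is the slack that absorbs the $o(1)$ errors in the bonus and the empirical means on the asymptotic-regime event.

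The second step is the stochastic-domination argument. Conditioned on the asymptotic-regime event and on being at an exploration episode $\tau_k$, the number of consecutive pulls of arm $2$ in $[\tau_k, \tau_k+T)$ is at least $\sigma_T' := T \wedge \inf\{t \ge 1 : -\frac{\mu_1-\mu_2}{2} + \frac1t\sum_{i=1}^t (X_i - \mu_2) \le 0\}$ where $X_i$ are the i.i.d.\ ${\rm B}(\mu_2)$ rewards obtained from arm $2$ at those steps (these are genuinely i.i.d.\ and independent of the past, by the bandit feedback model). Since $\Reg(\tau_k; \tau_k+T) \ge (\mu_1-\mu_2) \cdot \#\{i < T : A_{\tau_k+i} = 2\} \ge (\mu_1-\mu_2)\sigma_T'$ on this event, and $\sigma_T'$ has the same law as the $\sigma_T$ in the statement, taking conditional expectations gives $\E[\Reg(\tau_k;\tau_k+T) \mid E_{\tau_k}, \text{hist}] \ge (\mu_1-\mu_2)\E[\sigma_T]$ up to an $o(1)$ error coming from the $o(1)$ corrections in the index comparison. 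Then I would pass to the $\limsup$ over $k$: since the asymptotic-regime event has probability $1$ and the per-episode lower bound is uniform (the $o(1)$ error is uniform once $k$ is large, because $\tau_k \to \infty$), we get $\RegExp({\rm UCB};T) = \limsup_k \E[\Reg(\tau_k;\tau_k+T)] \ge (\mu_1-\mu_2)\E[\sigma_T]$.

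The main obstacle I expect is making the index-comparison step rigorous and quantitative enough that the $o(1)$ errors really are controlled uniformly: one must show that on the asymptotic-regime event, for all sufficiently large $k$, the decision ``pull arm $2$ again'' holds whenever the idealized drifted random walk is still positive, and that the finitely many ($\le T$) steps of the window do not perturb $N_2(\tau_k)$, $t = \tau_k+j$, or the bonus enough to flip any of these comparisons. This is exactly the kind of careful bookkeeping done in \cref{lemma:ucb non agnosticity}, so I would reuse that machinery; the new ingredient is keeping track of the \emph{running empirical mean} of arm $2$ over the window (rather than just lower-bounding the probability of an all-$2$ block), so that the walk $\frac1t\sum(X_i-\mu_2)$ appears explicitly and the constant $\tfrac{\mu_1-\mu_2}{2}$ threshold is seen to be a safe underestimate. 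A secondary, more routine point is handling the boundary behaviour at $t=\sigma_T$ and the truncation at $T$, and verifying $\E[\sigma_T] < \infty$ (trivially true as $\sigma_T \le T$) so there is no integrability issue.
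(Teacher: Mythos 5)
Your overall skeleton matches the paper's: lower-bound $\Reg(\tau_k;\tau_k+T)$ by $(\mu_1-\mu_2)$ times the length of the initial block of arm-$2$ pulls, identify that block length with the hitting time of a drifted random walk built from the i.i.d.\ ${\rm B}(\mu_2)$ rewards, condition on the asymptotic-regime event of \cref{proposition:asymptotic ucb}, and pass to the limit in $k$ (plus an $\epsilon\to0$ step). However, the mechanism you describe for the key index-comparison step is wrong, and the error is not cosmetic. First, the drift $-\tfrac{\mu_1-\mu_2}{2}$ is \emph{not} ``slack that absorbs the $o(1)$ errors'': it is the first-order decrease of the exploration bonus. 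Per pull of arm $2$, $\sqrt{2\log t/N_2}$ drops by roughly $\tfrac{1}{2N_2(t)}\sqrt{2\log t/N_2(t)}\approx\tfrac{\mu_1-\mu_2}{2N_2(t)}$, which is of exactly the same order $1/N_2$ as the change $\tfrac{R_{t+i}-\mu_2}{N_2(t)}$ of the empirical mean; this is precisely where the threshold $\tfrac{\mu_1-\mu_2}{2}$ in $\sigma_T$ comes from (\cref{lemma:ucb key deviations}). By declaring the bonus ``stays close'' to $\mu_1-\mu_2$ and treating the $\tfrac12$ factor as a safety margin, you are implicitly claiming the continuation condition is the undrifted walk staying positive, which is a strictly stronger (and false) statement, and your route would not recover the correct constant.

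Second, the comparison cannot be run in absolute terms (``arm-$2$ index exceeds $\mu_1$''). The asymptotic-regime event only gives accuracy $\epsilon$ on $\hat\mu_2(\tau_k)$, on $N_2(\tau_k)$ (hence on the bonus), and on $I_1$, while the signal you need to detect --- the motion of $I_2$ over at most $T$ steps --- is of order $T/N_2(\tau_k)=O(T/\log\tau_k)$, which is eventually negligible compared with $\epsilon$; so the inequality ``$\hat\mu_{2,\mathrm{running}}+(\mu_1-\mu_2)>\mu_1$ up to $o(1)$'' can never be certified for fixed $j\le T$. The paper instead tracks the \emph{gap} $I_1-I_2$ relative to its value at $\tau_k$, with an error $2h\epsilon/N_2(t)$ that is small \emph{relative to the increments} (\cref{lemma:ucb key deviations}), and --- the ingredient entirely missing from your plan --- controls the \emph{initial} gap, $0\le I_2(\tau_k)-I_1(\tau_k)\le 2/\tau_k$, by analyzing the exploitation phase just before $\tau_k$ (\cref{lemma:ucb auxilary key deviations}): since arm $1$ was chosen at $\tau_k-1$ and indexes move by $O(1/\tau_k)$ per exploitation step, the gap at the switch is $o(1/N_2(\tau_k))$. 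Without this, ``switch back at $\tau_k+\ell$'' only tells you the drifted walk has fallen below $N_2(\tau_k)\,(I_2(\tau_k)-I_1(\tau_k))+2\ell\epsilon$, a possibly large level, and the comparison with $\sigma_T$ (after the $\epsilon\to0$, finitely-many-values argument) breaks down. So as written the proposal has a genuine gap at its central step, even though the surrounding reduction and limiting arguments are sound and coincide with the paper's.
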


% This theorem says two things: (1) The regret of exploration is a limit, so $\E[\Reg(\tau_k;\tau_k+T)]$ converges as $\tau_k$ grows, and (2) it can be written as a hitting time of a random walk. 
% The fact (1) that it is a limit ensures, in opposition to the sliding regret which is a limsup, that the regret of exploration can be estimated experimentally. 
How tight is this result? 
How close is $\E[\Reg(\tau_k;\tau_k+T)]$ to $(\mu_1-\mu_2)\E[\sigma_T]$ in practice?
To proceed, we estimate as a function of $t$ the expected regret at exploration episodes near $t$, consisting in $\RegExp'(t;T) := \E[\Reg(t; t+T)| \exists k, t = \tau_k]$. 
To estimate this function, we repeatedly run the algorithm to obtain a family $S$ of samples of $(\tau_k, \Reg(\tau_k;\tau_k+T))$.
Then, we approximate $\RegExp'(t; T)$ as the averages of $y$ for $(x,y) \in S$ such that $\abs{x - t} < W$ where $W$ is a parameter of the approximation. 
As shown in \cref{figure:regret of exploration}, we indeed confirm that in practice, the expected regret during an exploration episode seems to converge to the anticipated lower bound $(\mu_1- \mu_2) \E[\sigma_T]$ reasonably quickly. 

\begin{figure}
    \centering
    \includegraphics[width=.7\linewidth]{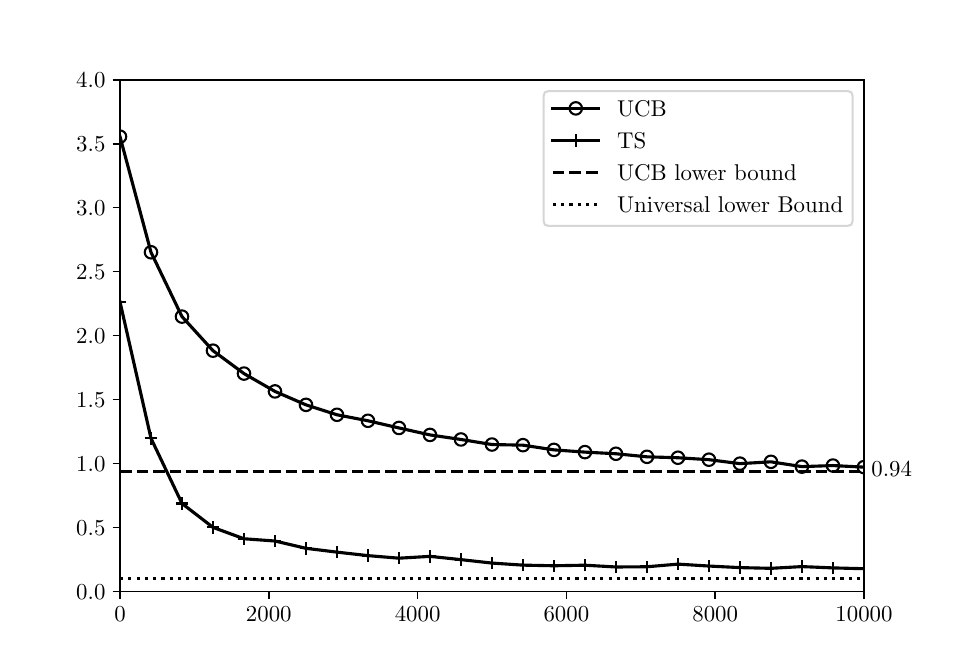}
    \caption{
        \label{figure:regret of exploration}
        The approximate of $t \mapsto \RegExp'(t; 100)$ for UCB and TS executed on the two-arm bandit $({\rm B}(0.9), {\rm B}(0.8))$, averaged over 10k runs, with $W = 200$ (see below).
    }
\end{figure}

\section{General Index Algorithms}

The behavior reported in \cref{figure:typical} and analyzed in the previous section is not specific to UCB.
In this section, we generalize the analysis of UCB to most index policies of the literature.
We provide a set of conditions under which an index policy has linear sliding regret, see \cref{theorem:general indexes}.

An \emph{index policy} is an algorithm that, out of past observations, associates to every arm a numerical value called the \emph{index} of the arm, and pulls the arm with maximal index. 
In the sequel, we consider indexes of the form
\begin{equation}
    \label{equation:index}
    I(\hat \mu_a(t), \hat \mu_{-a}(t), N_a(t), t) \in [0, I_{\max}]
\end{equation}
where $I_{\max} \in (0, +\infty]$ is the maximal value that the index can reach (possibly infinite), $\hat \mu_a(t)$ the empirical value of the considered arm, $\hat \mu_{-a}(t)$ the collection of the empirical values of other arms, $N_a(t)$ the current number of visits of the arm and $t$ the time. 
Accordingly, at time $t$, the algorithm picks $A_t \in \argmax_{a} I(\hat \mu_a(t), \hat \mu_{-a}(t), N_a(t), t)$. 
Remark that the ordering of $\hat \mu_a(t)$ and $\hat \mu_{-a}(t)$ is important because $I(\hat \mu_1(t), \hat \mu_2(t), N_1(t), t)$ refers to the index of arm $a=1$ while $I(\hat \mu_2(t), \hat \mu_1(t), N_2(t), t)$ refers to the index of arm $a=2$; we will write $I_a(t)$ and $I_a(\hat \mu(t), N_a(t), t)$ for simplicity. 

The goal of this section is to generalize \cref{theorem:ucb sliding} and \cref{theorem:ucb regret of exploration} to general index policies. 
Our final result is summarized with \cref{theorem:general indexes}. 
Of course, it is impossible to grasp all index policies within a single result, so the index has to meet regularity conditions for our result to be applicable. 
We design a set of nine conditions \ASSUMPTION{1-9}.
All of them are met by classical existing indexes. 

    \begin{table}
    \begin{center}
    \begin{tabular}{c|c|c|c}
        Algorithm & Original Index & Reworked index & $I_{\rm max}$
        \\ \hline
        UCB & $\hat \mu_a(t) + \sqrt{\frac{2\log(t)}{N_a(t)}}$ & - & $\infty$
        \\
        MOSS & $\hat \mu_a(t) + \sqrt{\frac{\log(t/KN_a(t))}{N_a(t)}}$ & - & $\infty$
        \\
        KL-UCB & $\max\set{\mu : N_a(t)\kl(\hat \mu_a(t), \mu) \le \log(t)}$ & - & $1$
        \\
        IMED & $ - N_a(t) \kl(\hat \mu_a(t), \hat \mu^*(t)) - \log N_a(t)$ & $\frac{\log(t)}{N_a(t) \kl(\hat \mu_a(t), \hat \mu^*(t)) + \log N_a(t)}$ & $\infty$
    \end{tabular}
    \end{center}
    \caption{
        \label{table:indexes}
        Examples of indexes.
    }
    \end{table}

The argument mostly follows the lines of UCB's; Hence the question is whether what are the properties that $I(-)$ must satisfy so that the ideas behind the local analysis of UCB still applies.
    The steps are as follows: (1) all arms are visited infinitely often; (2) visit rates converge; (3) at the asymptotic regime, if a draw of the bad arm yields maximal reward, it will be drawn again immediately; and (4) the third property is enough so that the index algorithm is subjected to poisoning. 
    By poisoning, we mean that if the bad arm provides maximal reward several times in a row, then whatever happens thereafter, the algorithm will keep picking the bad arm a few times in a row.

\subsection{Asymptotic Regimes}

% Not any index is well-behaved. 
Most of the regularity conditions that we require on $I(-)$ can be expressed in terms of continuity with respect to the topology of coordinate-wise equivalence of sequences. 
This topology appears naturally.
As times goes on, one may expect that $(\hat \mu_1(t), \hat \mu_2(t), N_1(t), N_2(t))$ gets closer an closer to $(\mu_1, \mu_2, n_1(t), n_2(t))$ where $n_1(t)$ and $n_2(t)$ are the deterministic visit rates of arms. 
In order to approximate $I(\hat \mu_2(t), \hat \mu_1(t), N_2(t), t)$ by $I(\mu_2, \mu_1, n_2(t), t)$ for instance, we need $I(-)$ to act continuously on equivalent sequences.  

    \begin{definition}[Asymptotic Topology]
    Consider a sequence $(x_1(n), \ldots, x_d(n))$ of $\R^d$. 
    A set $U \subseteq \N \to \R^d$ is said \emph{open at $x$} if there exists $\epsilon > 0$ such that it contains all $y : \N \to \R^d$ satisfying:
    $$
        \exists N, \forall n > N, \forall i:
        \quad
        \abs{x_i(n) - y_i(n)} < \epsilon \abs{x_i(n)}. 
    $$
    This topology that we obtain is the topology of coordinate-wise equivalence of sequences. 
    For instance, if $d = 1$, we have $x(n) \sim y(n)$ if, and only if $y$ belongs to all the neighborhoods of $x$; Hence we write $x \sim y$ if $y$ belongs to every neighborhood of $x$
    From now on, we endow the set of sequences of $\R^d$ with this topology. 
    \end{definition}

    Regarding the literature, it is fairly reasonable to have an index satisfying the following properties.
    (1) Monotonicity: the index is increasing in $\mu_a$, decreasing in $\mu_{-a}$, decreasing in $N_a$ and increasing in $t$. 
    (2) Unplayed arms see their index growing enough so that all arms are pulled infinitely often.
    (3) Convergence: an arm which is being pulled linearly often (e.g., an optimal arm) have converging index. 
    Most index algorithms in the literature can be reworked so that these properties are satisfied, see \cref{table:indexes}.

   \paragraph{Assumptions 1.}
    The first required set of assumptions is the following. 
    \begin{description}
        \item[\ASSUMPTION{1}] (Monotonicity)
            \textit{The index $I_a(-)$ is increasing in $\mu_a$, decreasing in $\mu_{-a}$, decreasing in $N_a$ and increasing in $t$. }

        \item[\ASSUMPTION{2}] (Diverging in $t$)
            \textit{For all fixed $n \ge 1$ and $\nu_2 \in [0, 1]$, in the neighborhood of $(\nu_2, \mu_1, n, t)$, $I_2(t) \to I_{\rm max}$.}

        \item[\ASSUMPTION{3}] (Convergence)
            \textit{ In the neighborhood of $(\mu_1, \mu_2, t, t)$, $I_1(t)$ converges to some positive $I(\mu_1, \mu_2) < I_{\rm max}$.}
    \end{description}

    \begin{lemma}
        \label{lemma:assumptions 1-3}
        Assume that $I(-)$ satisfies \ASSUMPTION{1-3}.
        Then, for $a =1,2$, $\hat \mu_a(t) \to \mu_a$ a.s. 
    \end{lemma}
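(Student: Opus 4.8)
The plan is to reduce the statement to the single claim that, almost surely, both arms are pulled infinitely often; once this is known the conclusion follows at once. Indeed $\hat\mu_a(t) = \hat\mu_{a,N_a(t)}$, and if $N_a(t)\to\infty$ then the strong law of large numbers applied to the i.i.d.\ rewards of arm $a$ gives $\hat\mu_{a,n}\to\mu_a$ as $n\to\infty$, hence $\hat\mu_a(t)\to\mu_a$.

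So everything rests on showing that $N_1(t)\to\infty$ and $N_2(t)\to\infty$ almost surely. I would argue by contradiction, on the event that some arm $a$ is pulled only finitely often. On that event there are a random time $T_0$ and an integer $m$ with $N_a(t)=m$ and $A_t=b$ for all $t>T_0$, where $b:=-a$; in particular $N_b(t)=t-(\text{constant})$, so $N_b(t)\to\infty$ with $N_b(t)\sim t$, the strong law gives $\hat\mu_b(t)\to\mu_b$, while $\hat\mu_a(t)$ is frozen at $\hat\mu_{a,m}$. The first step is to show that the index of the starved arm tends to $I_{\max}$: its arguments $(\hat\mu_{a,m},\hat\mu_b(t),m,t)$ have frozen first and third coordinates and a last coordinate going to infinity, and since $\hat\mu_b(t)\to\mu_b\le\mu_1$, monotonicity \ASSUMPTION{1} in the $\mu_{-a}$ slot gives $I_a(t)\ge I(\hat\mu_{a,m},\mu_1,m,t)$ (up to an arbitrarily small relative perturbation of $\mu_1$), whose limit is $I_{\max}$ by \ASSUMPTION{2}. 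Hence $I_a(t)\to I_{\max}$. Since $A_t=b$ for $t>T_0$ forces $I_b(t)\ge I_a(t)$, we also get $I_b(t)\to I_{\max}$.

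This is where the contradiction comes from: \ASSUMPTION{1} together with \ASSUMPTION{3} say that an arm pulled linearly often keeps its index bounded away from $I_{\max}$. The index $I_b(t)$ has arguments $(\hat\mu_b(t),\hat\mu_{a,m},N_b(t),t)$ with $\hat\mu_b(t)\to\mu_b$ and $N_b(t)\sim t$, so the idea is to use monotonicity to bound it above by the index in the configuration $(\mu_1,\mu_2,t,t)$ covered by \ASSUMPTION{3} — increasing in the own-mean slot (replacing $\hat\mu_b(t)$, which is $\le\mu_1$ up to a vanishing relative error, by $\mu_1$) and decreasing in the other-mean slot — and conclude $\limsup_t I_b(t)\le I(\mu_1,\mu_2)<I_{\max}$, the desired contradiction. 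The step I expect to be the main obstacle is exactly this last reduction to the regime of \ASSUMPTION{3}: the own-mean slot only converges to $\mu_b$ rather than equalling it, which the asymptotic topology absorbs, but the other-mean slot is the frozen empirical mean $\hat\mu_{a,m}$ of the starved arm, which need not be anywhere near $\mu_a$; one has to apply the monotonicity in $\mu_{-b}$ carefully, distinguishing whether $\hat\mu_{a,m}$ already lies on the side that lets \ASSUMPTION{3} be invoked or whether it must first be pushed to an extreme value in $\{0,1\}$. Once $N_1(t),N_2(t)\to\infty$ is established on a probability-one event, the rest is the routine application of the strong law described in the first paragraph.
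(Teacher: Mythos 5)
Your reduction (infinitely many visits of each arm, then the strong law) and your use of \ASSUMPTION{1}--\ASSUMPTION{2} to show that the starved arm's index tends to $I_{\max}$ are exactly the skeleton of the paper's proof, so up to that point you are on the right track. The genuine gap is the step you yourself flag, and your sketched remedy does not close it. To reach a contradiction you must show that the index of the arm $b$ that is pulled for all $t>T_0$ stays bounded away from $I_{\max}$, i.e.\ you must control $I_b(t)=I(\hat\mu_b(t),\hat\mu_{a,m},N_b(t),t)$ where the second slot is the \emph{frozen} empirical mean $\hat\mu_{a,m}\in[0,1]$ of the starved arm. By \ASSUMPTION{1} the index is \emph{decreasing} in that slot, so an upper bound on $I_b(t)$ only follows by replacing $\hat\mu_{a,m}$ with a \emph{smaller} value --- in the worst case $0$ --- whereas \ASSUMPTION{3} only controls configurations whose second slot lies in a (relative) neighborhood of $\mu_2$. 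Pushing $\hat\mu_{a,m}$ ``to an extreme value in $\{0,1\}$'' therefore does not help: pushing to $1$ gives a quantity that does not dominate $I_b(t)$, and pushing to $0$ takes you further from the regime covered by \ASSUMPTION{3}. Consequently, when $\hat\mu_{a,m}$ happens to be well below $\mu_2$ (a positive-probability configuration after finitely many pulls), nothing in your argument rules out that $I_b(t)\to I_{\max}$ as well, possibly faster than the starved arm's index, and then no contradiction with $I_b(t)\ge I_a(t)$ is obtained. (The secondary mismatch you mention for the case $b=2$ --- the own-mean slot sitting near $\mu_2$ rather than $\mu_1$ --- is the harmless one, since monotonicity in the own-mean slot goes the right way there.)

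For comparison, the paper's proof of this lemma follows the same outline (complementary event, \ASSUMPTION{1} to replace $N_a$ by $n$ and $N_{-a}$ by $s-n$, SLLN for the heavily pulled arm, \ASSUMPTION{2} for divergence, \ASSUMPTION{3} for the finite limit), and it disposes of precisely this step by substituting worst-case values into the opposite-mean slot of the pulled arm's index and invoking \ASSUMPTION{3} with an ${\rm o}_{\delta\to 0}(1)$ correction --- that is, it reads \ASSUMPTION{3} as bounding the index of an arm pulled linearly often irrespective of the value occupying the other-mean slot. So you have correctly located the crux, but to make your proof complete you either need that stronger reading of \ASSUMPTION{3} (a bound on $I(\mu_1+\delta,\nu',t(1+{\rm o}(1)),t)$ uniform over $\nu'\in[0,1]$, or at least for $\nu'=0$), or an additional argument showing that the frozen $\hat\mu_{a,m}$ can be assumed to lie in the favorable range; as written, the decisive comparison $\limsup_t I_b(t)\le I(\mu_1,\mu_2)<I_{\max}$ is not justified by \ASSUMPTION{1}--\ASSUMPTION{3}.
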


    Equivalently, $t \mapsto (\hat \mu_1(t), \hat \mu_2(t))$ is in every neighborhood of $t \mapsto (\mu_1, \mu_2)$, i.e., the two sequences are topologically indistinguishable. 
    % Accordingly, all arguments of the indexes converge, excepted for the visit rates. 
    In practice, when running UCB, or KL-UCB or IMED, the arms' numbers of visits are such that all indexes are equal. 
    Because the index of the optimal arm converges to $I(\mu_1,\mu_2)$, $N_2(t)$ must be such that $I(\hat \mu_2(t), \hat \mu_1(t), N_2(t), t)$ is approximately $I(\mu_1,\mu_2)$, and the inverse must be continuous in $\hat \mu_2(t), \hat \mu_1(t), I(\mu_1,\mu_2)$. 
    This leads to the condition \ASSUMPTION{4}. 
    It is completed with \ASSUMPTION{5}, stating that the derivative of the inverse is not null. 
    The two combined guarantee that $N_2(t) \sim n_2(t)$ for some deterministic $n_2(t)$.
    The last condition \ASSUMPTION{6}, which is a formulation of the no-regret property, makes sure that $N_1(t) \sim t$ once $N_2(t) \sim n_2(t)$. 

    \paragraph{Assumptions 2.}
    Convergence of visit rates $N_a(t)$. 
    \begin{description}
        \item[\ASSUMPTION{4}] (Continuous inverse in $n$)
            \textit{Denote $f_{\nu_1, \nu_2, x}(t) := [I_2^{-1}(\nu_2, \nu_1, -, t)](x)$ the partial inverse in the number of visits for arm $a=2$ and let $n_2(t) := f_{\mu_1, \mu_2, I(\mu_1,\mu_2)}(t)$. The map $(t \mapsto (\nu_1(t), \nu_2(t), x(t))) \mapsto (t \mapsto f_{\nu_1(t), \nu_2(t), x(t)}(t))$ is continuous in a neighborhood of $(\mu_1, \mu_2, I(\mu_1,\mu_2))$.}

        \item[\ASSUMPTION{5}] (Asymptotic monotonicity in $n$)
            \textit{There is a non-negative definite function $\ell$ such that for $\epsilon > 0$ and in a neighborhood of $t \mapsto (\mu_1, \mu_2)$, 
            $$
                I(\nu_2, \nu_1, (1+\epsilon)n_2(t), t) \le (1 - \ell(\epsilon)) I(\nu_2, \nu_1, n_2(t), t),
            $$
            and similarly, $I(\nu_2, \nu_1, (1-\epsilon) n_2(t), t) \ge (1 + \ell(\epsilon)) I(\nu_2, \nu_1, n_2(t), t)$.
            % \footnote{This condition is trivial for UCB, KL-UCB, MOSS and IMED indexes. It is nicely interpreted within the asymptotic topology over $\N \to \R^d$ by saying that $I(-)$ changes the order. For $x, y \in (\R^d)^\N$, the notation $x \le y$ means that for each component, for all $\epsilon > 0$, $x_i(n) \le (1+\epsilon)y_i(n)$ for $n$ large enough. Say that $x < y$ if $x \le y$ and, for some component there is an $\epsilon > 0$ such that $x_i(n) < (1 - \epsilon) y_i(n)$ provided that $n$ is large enough. Then \ASSUMPTION{5} can be rewritten like this: \textit{if $t \mapsto n'_2(t) < t \mapsto n_2(t)$ for the asymptotic (pre-)order, then for $t\mapsto (\nu_1(t), \nu_2(t))$ in a neighborhood of $(\mu_1, \mu_2)$, we have $I(\nu_2, \nu_1, n_2', t) > I(\nu_2, \nu_1, n_2, t)$}. That is, $I$ is decreasing in $n$ for the asymptotic order. }
            }
        \item[\ASSUMPTION{6}] (No-Regret) \textit{$n_2(t)$ is sublinear in $t$, $n_2(t) \to \infty$ and $n_2(t) \sim n_2(at)$ (for all $a > 0$) when $t \to \infty$}.
    \end{description}

    \begin{lemma}
        \label{lemma:index asymptotic regime}
        If $I(-)$ satisfies \ASSUMPTION{1-6}, then $(\hat \mu_1(t), \hat \mu_2(t), N_1(t), N_2(t)) \sim (\mu_1, \mu_2, t, n_2(t))$ a.s. 
        The sequence $t \mapsto (\mu_1, \mu_2, t, n_2(t))$ will be called the \emph{asymptotic regime}.
    \end{lemma}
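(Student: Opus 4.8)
The plan is to combine the empirical-mean convergence already supplied by Lemma~\ref{lemma:assumptions 1-3} with two elementary control arguments on the suboptimal visit count $N_2(t)$, each driven by the defining property of an index policy: at every step, $A_t = 2$ forces $I_2(t) \ge I_1(t)$, and $A_t = 1$ forces $I_1(t) \ge I_2(t)$. I would work on the almost-sure event of Lemma~\ref{lemma:assumptions 1-3} on which $\hat\mu_a(t) \to \mu_a$; there the input sequence $t \mapsto (\hat\mu_1(t),\hat\mu_2(t))$ lies in every asymptotic-topology neighborhood of $t \mapsto (\mu_1,\mu_2)$, so \ASSUMPTION{3}, \ASSUMPTION{4} and \ASSUMPTION{5} may all be invoked with the empirical means substituted for the true means. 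It then remains to prove $N_1(t) \sim t$ and $N_2(t) \sim n_2(t)$, and since $N_1(t) = t-1-N_2(t)$ this amounts to showing that for every $\epsilon > 0$, eventually $(1-\epsilon)n_2(t) \le N_2(t) \le (1+\epsilon)n_2(t)$. Here $n_2$ is well defined, single-valued and eventually non-decreasing by \ASSUMPTION{1} and \ASSUMPTION{4}, with $n_2(t) \to \infty$ and $n_2(t) = \oh(t)$ by \ASSUMPTION{6}.

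First I would prove the upper bound. Fix $\epsilon > 0$, let $t$ be large, and let $s < t$ be the last time arm $2$ is played (finite and large, since both arms are played infinitely often). From $A_s = 2$ we get $I_2(\hat\mu_2(s),\hat\mu_1(s),N_2(s),s) \ge I_1(\hat\mu_1(s),\hat\mu_2(s),N_1(s),s) \ge I_1(\hat\mu_1(s),\hat\mu_2(s),s,s)$, the last step by \ASSUMPTION{1} (decreasing in the own visit count) and $N_1(s) \le s$; by \ASSUMPTION{3} the right-hand side converges to $I(\mu_1,\mu_2) > 0$. If instead $N_2(s) > (1+\epsilon)n_2(s)$ then, by \ASSUMPTION{1} and \ASSUMPTION{5}, $I_2(\hat\mu_2(s),\hat\mu_1(s),N_2(s),s) \le (1-\ell(\epsilon))\, I_2(\hat\mu_2(s),\hat\mu_1(s),n_2(s),s)$, and the trailing factor converges to $I(\mu_1,\mu_2)$ because the sequence $r \mapsto (\hat\mu_2(r),\hat\mu_1(r),n_2(r),r)$ is asymptotically equivalent to $r \mapsto (\mu_2,\mu_1,n_2(r),r)$, on which $I_2 \equiv I(\mu_1,\mu_2)$ by definition of $n_2$ (continuity of $I$ on equivalent sequences, cf.\ \ASSUMPTION{4}). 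Since $\ell(\epsilon) > 0$, the two estimates are incompatible for $s$ large, so $N_2(s) \le (1+\epsilon)n_2(s)$ along all late plays of arm $2$, whence $N_2(t) = N_2(s)+1 \le (1+2\epsilon)n_2(t)$ for $t$ large. In particular $N_2(t) = \oh(t)$, so $N_1(t) \sim t$.

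Then I would prove the lower bound by reversing the roles of the two inequalities. If $A_t = 1$ then $I_1(t) \ge I_2(t)$, where now $I_1(\hat\mu_1(t),\hat\mu_2(t),N_1(t),t) \to I(\mu_1,\mu_2)$ by \ASSUMPTION{3} (legitimate because $N_1(t) \sim t$), while $N_2(t) < (1-\epsilon)n_2(t)$ would give $I_2(\hat\mu_2(t),\hat\mu_1(t),N_2(t),t) \ge (1+\ell(\epsilon))\, I_2(\hat\mu_2(t),\hat\mu_1(t),n_2(t),t) \to (1+\ell(\epsilon))I(\mu_1,\mu_2) > I(\mu_1,\mu_2)$ by \ASSUMPTION{1}, \ASSUMPTION{5} and \ASSUMPTION{4} --- a contradiction. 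Hence for $t$ large, $N_2(t) < (1-\epsilon)n_2(t)$ forces $A_t = 2$, i.e.\ $N_2(t+1) = N_2(t)+1$. Using that $n_2(t+1)-n_2(t) \to 0$ (true for every index of \cref{table:indexes} and a mild consequence of \ASSUMPTION{6}), the quantity $g(t) := N_2(t) - (1-\epsilon)n_2(t)$ gains at least $1-\oh(1)$ at every step where it is negative and loses at most $\oh(1)$ at every step where it is non-negative; consequently, past a finite transient, $g(t) \ge -\oh(1)$, and since $n_2(t) \to \infty$ this yields $N_2(t) \ge (1-2\epsilon)n_2(t)$ eventually. Letting $\epsilon \downarrow 0$ in the two bounds gives $N_2(t) \sim n_2(t)$, and together with $N_1(t) \sim t$ and Lemma~\ref{lemma:assumptions 1-3} this is the claim.

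The combinatorial core --- the index comparisons trapping $N_2$ inside a $(1\pm\epsilon)$-tube around $n_2$, and the observation that while $N_2$ lags it is replenished one unit per step whereas $n_2$ drifts by $\oh(1)$ per step --- is short. The part that will require real care is the asymptotic-topology bookkeeping: verifying that $n_2$ is well defined, single-valued and monotone; checking that each ``$\to I(\mu_1,\mu_2)$'' above is licensed by the correct assumption --- \ASSUMPTION{3} for arm $1$ once $N_1(t)\sim t$, and \ASSUMPTION{4} together with continuity of $I$ for arm $2$ evaluated at $n_2(t)$ --- rather than silently replacing true means by empirical ones; and confirming that \ASSUMPTION{5} applies verbatim with $(\hat\mu_1(t),\hat\mu_2(t))$ in place of $(\mu_1,\mu_2)$, which it does precisely because the two sequences are topologically indistinguishable by Lemma~\ref{lemma:assumptions 1-3}. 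That substitution-of-means step is where the potential pitfalls concentrate.
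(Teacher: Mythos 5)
Your argument is correct in substance and relies on the same ingredients as the paper's proof: the index comparison $I_2(t)\ge I_1(t)$ (resp.\ $\le$) at times when arm $2$ (resp.\ arm $1$) is pulled, \ASSUMPTION{1} to replace $N_1(s)$ by $s$ and to pass between empirical and true means, \ASSUMPTION{3} to pin the level $I(\mu_1,\mu_2)$, \ASSUMPTION{4}--\ASSUMPTION{5} to convert $N_2 \gtrless (1\pm\epsilon)n_2$ into a multiplicative gap $\ell(\epsilon)$, and \ASSUMPTION{6} for sublinearity. The organization, however, is genuinely different. The paper argues through probabilities of tail events: for each deviation it exhibits a time $u$ in an explicit window at which the index comparison becomes contradictory, and shows the probability that this occurs for some $s\ge t$ vanishes. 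You argue pathwise on the almost-sure event of \cref{lemma:assumptions 1-3}: the upper bound by inspecting the last pull of arm $2$ before $t$, the lower bound by showing that $N_2(t)<(1-\epsilon)n_2(t)$ forces $A_t=2$ and then running a deterministic catch-up (drift) argument. This is a legitimate, arguably more elementary rendering of the same mechanism, and your ordering (upper bound first, so that $N_1(t)\sim t$ is available when invoking \ASSUMPTION{3} in the lower bound) is consistent; note the paper's lower-bound step avoids this dependence by bounding $N_1(u)\ge u-(1-\epsilon)n_2(s)$ directly inside the window.

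Two points need tightening. First, $n_2(t+1)-n_2(t)\to 0$ is \emph{not} a consequence of \ASSUMPTION{6} (e.g.\ $n_2(t)=\lfloor\log_2 t\rfloor$ satisfies \ASSUMPTION{6} yet has unit jumps infinitely often). Fortunately your catch-up argument only needs the weaker facts that $n_2$ is non-decreasing (from \ASSUMPTION{1}) and that $n_2(t)-n_2(t/2)=\oh(n_2(t))$ and $n_2(t)=\oh(t)$ (from \ASSUMPTION{6}): while $N_2$ sits below the threshold it gains one unit per step, so the deficit at time $t$ accumulated since the last crossing time $u$ is at most $(1-\epsilon)\parens{n_2(t)-n_2(u)}-(t-u)+\OH(1)$, which is $\oh(n_2(t))$ when $u\ge t/2$ and negative for large $t$ when $u<t/2$; this still yields $N_2(t)\ge(1-2\epsilon)n_2(t)$ eventually. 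Second, ``$I_2(\hat\mu_2(s),\hat\mu_1(s),n_2(s),s)\to I(\mu_1,\mu_2)$ by continuity of $I$'' is not literally an assumption: it must be routed through \ASSUMPTION{4} (the inverse evaluated at the empirical means is $\sim n_2$) and then \ASSUMPTION{5} (closeness in the $n$-argument gives multiplicative closeness of the index), which is exactly the A3/A4/A5 chain the paper uses; since you cite both and flag this bookkeeping as the delicate step, this is presentational rather than a substantive gap.
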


\subsection{Local Behavior in the Asymptotic Regime}

To analyze the local evolution of indexes in the asymptotic regime, we assume that for every arm, $I_a(t+h) - I_a(t)$ can be approximated by its Taylor expansion, and that this Taylor expansion depends continuously on the parameters $\hat \mu_a(t), N_a(t)$ and $t$. 
This is expressed by \ASSUMPTION{7}.
\ASSUMPTION{9} states that not all terms vary at the same speed; Namely, that the partial derivatives of $I_1(t)$ are negligible, and that in the Taylor expansion of $I_2(t+h) - I_2(t)$, the term $\partial_t I_2(t)$ can be neglected.
Lastly, \ASSUMPTION{8} states that the evolution of $I_2(t)$ relatively to $N_2(t)$ and $\hat \mu_2(t)$ are comparable, and the evolution relatively to $\hat \mu_2(t)$ is large enough in front of the one relatively to $N_2(t)$.
This guarantees that if the suboptimal arm $a=2$ is pulled and yield maximal reward $R_t=1$, it will be pulled in the next round. 

\paragraph{Assumptions 3.}
    Local properties of $I_a(t)$ in the asymptotic regime. 
    \begin{description}
        \item[\ASSUMPTION{7}] (Taylor expansion)
            \textit{
                In a neighborhood of the asymptotic regime (say $(\nu_1, \nu_2, m_1, m_2)$ in a neighborhood of $(\mu_1, \mu_2, n_1, n_2)$), for all fixed $h \ge 1$ and all arm $a$, we have:
                \begin{align*}
                    I_a(t+h) - I_a(t) \sim {} 
                    & (\nu_a(t+h) - \nu_a(t)) \cdot \partial_{\mu_a} I_a(t) \\
                    & + (\nu_{-a}(t+h) - \nu_{-a}(t)) \cdot \partial_{\mu_{-a}} I_a(t) \\
                    & + (m_a(t+h) - m_a(t)) \cdot \partial_n I_a(t) \\
                    & + h \cdot \partial_t I_a(t).
                \end{align*}
            }
        \item[\ASSUMPTION{8}]
            ($\rho$-optimism condition)
            \textit{
                There is a constant $\rho \in [0, 1)$, such that in a neighborhood of the asymptotic regime, $\partial_n I_2(t) \sim - \frac{\rho(1-\mu_2)}{m_2(t)} \partial_{\mu_2} I_2(t)$.
            }
        \item[\ASSUMPTION{9}] 
            (Negligible derivatives)
            \textit{
                In a neighborhood of the asymptotic regime, both $\partial_t I_1(t)$ and $\partial_t I_2(t)$ are $\oh(\partial_n I_2(t))$; And $\partial_{\mu_2} I_1(t) = o(\partial_{\mu_2} I_2(t))$.
            }
    \end{description}

    \begin{lemma}
        Let $I(-)$ an index satisfying \ASSUMPTION{1-9}.
        Fix $T > 0$. 
        There exists a sequence of events indexed by exploration episodes $(E_{\tau_k})$ with $\Pr(\liminf_k E_{\tau_k}) = 1$, such that, for all sequence $(U_t: t\ge1)$ of $\sigma(H_t)$-measurable events:
        $$
            \Pr \parens{\forall i < T: A_{\tau_k+i} = 2 \mid E_{\tau_k}, U_{\tau_k}}
            \ge 
            \mu_2^T.
        $$
    \end{lemma}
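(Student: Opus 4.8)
The plan is to follow the same route as for \cref{lemma:ucb non agnosticity}, replacing the explicit computations on the UCB index by the qualitative information carried by \ASSUMPTION{1-9}. First I would fix the events. By \cref{lemma:index asymptotic regime}, $t \mapsto (\hat\mu_1(t), \hat\mu_2(t), N_1(t), N_2(t))$ is almost surely in every neighbourhood of the asymptotic regime $t \mapsto (\mu_1, \mu_2, t, n_2(t))$; so, for a suitably small $\epsilon > 0$, I let $E_t$ be the event that $(\hat\mu_1(s), \hat\mu_2(s), N_1(s), N_2(s))$ lies in the $\epsilon$-neighbourhood of $(\mu_1, \mu_2, s, n_2(s))$ for every $s \ge t$, and that $t$ is beyond the deterministic threshold past which the error terms of the Taylor expansion \ASSUMPTION{7} and of the order estimates \ASSUMPTION{8}, \ASSUMPTION{9} are controlled well enough on windows of length $T$. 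Since the asymptotic regime is almost surely reached and $\tau_k \to \infty$, we get $\Pr(\liminf_k E_{\tau_k}) = 1$.

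The heart of the argument is a deterministic claim: on $E_{\tau_k}$, if at some time $s \in \set{\tau_k, \ldots, \tau_k+T-2}$ one has $A_{\tau_k} = \cdots = A_s = 2$ and $R_{\tau_k} = \cdots = R_s = 1$, then $A_{s+1} = 2$. Here is why it should hold. The exploration episode gives $A_{\tau_k} = 2$, hence $I_2(\tau_k) \ge I_1(\tau_k)$. Since arm $1$ is not played on $\set{\tau_k, \ldots, s}$, its statistics $\hat\mu_1, N_1$ are frozen, so by \ASSUMPTION{7} and the negligibility of $\partial_t I_1$ in \ASSUMPTION{9}, the index of arm $1$ barely moves: $I_1(s+1) - I_1(\tau_k) = \oh(\partial_n I_2(\tau_k))$. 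Meanwhile, each maximal reward obtained from arm $2$ raises its empirical mean by $\frac{1-\hat\mu_2}{N_2+1} \sim \frac{1-\mu_2}{n_2(\tau_k)} > 0$, so \ASSUMPTION{7}, the $\rho$-optimism condition \ASSUMPTION{8}, and the sign information from the monotonicity \ASSUMPTION{1} give, for each of the at most $T$ successive draws,
$$
    I_2(\tau_k+j+1) - I_2(\tau_k+j) \ \sim\ \frac{(1-\rho)(1-\mu_2)}{n_2(\tau_k)}\,\partial_{\mu_2} I_2(\tau_k) \ >\ 0,
$$
the $\partial_t I_2$ contribution being negligible by \ASSUMPTION{9} and the $\partial_{\mu_1}$ one vanishing because $\hat\mu_1$ is frozen. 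Summing these positive increments and comparing with the negligible drift of $I_1$ yields $I_2(s+1) > I_1(s+1)$, hence $A_{s+1} = 2$. Because $T$ is fixed while $\tau_k$ and $n_2(\tau_k)$ diverge, these finitely many increments keep the trajectory inside a neighbourhood of the asymptotic regime (invoking $n_2(at) \sim n_2(t)$ from \ASSUMPTION{6}), so \ASSUMPTION{7-9} stay applicable over the window — this is where $\epsilon$ has to be taken small enough.

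Iterating the claim from $s = \tau_k$ upward yields, on $E_{\tau_k}$, the inclusion $\set{R_{\tau_k} = \cdots = R_{\tau_k+T-2} = 1} \subseteq \set{\forall i < T : A_{\tau_k+i} = 2}$. Hence, for any $\sigma(H_t)$-adapted sequence $(U_t)$,
$$
    \Pr\parens{\forall i < T : A_{\tau_k+i} = 2 \mid E_{\tau_k}, U_{\tau_k}}
    \ \ge\
    \Pr\parens{R_{\tau_k} = \cdots = R_{\tau_k+T-2} = 1 \mid E_{\tau_k}, U_{\tau_k}}
    \ \ge\ \mu_2^{T-1} \ \ge\ \mu_2^T,
$$
the middle inequality using that whichever arm is played at $\tau_k+i$ its reward is $1$ with probability at least $\mu_2$ (as $\mu_1 \ge \mu_2$), that $U_{\tau_k}$ constrains only the past, and that $E_{\tau_k}$ — being a statement about the \emph{asymptotic} regime — is insensitive to the values of any fixed-length block of rewards, so conditioning on it does not bias $R_{\tau_k}, \ldots, R_{\tau_k+T-2}$; this decoupling would be made rigorous exactly as in the proof of \cref{lemma:ucb non agnosticity}.

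The main obstacle I expect is the index bookkeeping of the deterministic core: one must check that the $\oh$- and $\sim$-errors of \ASSUMPTION{7-9}, guaranteed only in a neighbourhood of the asymptotic regime, can be chained over the $T$ consecutive steps of the window and remain uniformly dominated by the positive jump $\frac{(1-\rho)(1-\mu_2)}{n_2}\partial_{\mu_2} I_2$ — in particular that $\partial_{\mu_2} I_2 > 0$ (monotonicity) and that $\partial_t I_1$ is negligible in front of this jump (using $\rho < 1$ in \ASSUMPTION{8} and \ASSUMPTION{9}). The secondary, milder point is the measurability decoupling in the last display, which just repeats what is already done for UCB.
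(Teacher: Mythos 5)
Your proposal follows essentially the same route as the paper: instantiate \ASSUMPTION{7-9} along the asymptotic regime given by Lemma~\ref{lemma:index asymptotic regime}, show the deterministic ``poisoning'' step that maximal rewards from arm $2$ keep $I_2 - I_1 \ge 0$ (using \ASSUMPTION{1}, \ASSUMPTION{8} with $\rho < 1$, and \ASSUMPTION{9}), and then repeat the reward-conditioning argument of Lemma~\ref{lemma:ucb non agnosticity} to get the $\mu_2^T$ bound. The only adjustment needed is in your definition of $E_t$: it should be the concentration event at time $t$ only (hence $\sigma(H_t)$-measurable, as in the UCB proof), not the tail event ``for every $s \ge t$'', since conditioning on a future-measurable event could bias the rewards in the window; with that change your decoupling step is exactly the paper's.
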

    \begin{proof}
        By Lemma~\ref{lemma:index asymptotic regime}, we know that $(\hat \mu_1(t), \hat \mu_2(t), N_1(t), N_2(t))$ goes to the asymptotic regime $(\mu_1, \mu_2, t, n_2(t))$ almost surely, so \ASSUMPTION{7-9} can be instantiated to the random quantities. 
    Suppose that $t$ is large enough and is such that over the time-range $\set{t, \ldots, t+h-1}$, we have $A_s = 2$. 
    Then we can write:
    \begin{align*}
        & \parens{I_2(t+h) - I_1(t)} - \parens{I_2(t) - I_1(t)}
        \\
        & \sim 
        \frac{\sum_{i=0}^{h-1}(R_{t+i} - \mu_2)}{N_2(t)} \parens{\partial_{\mu_2} I_2(t) - \partial_{\mu_2} I_1(t)}
        + h \partial_n I_2(t) 
        + h \parens{\partial_t I_2(t) - \partial I_1(t)}
        & \ASSUMPTION{7}
        \\
        & \sim
        \frac{\sum_{i=0}^{h-1}(R_{t+i} - \mu_2)}{N/_2(t)} \partial_{\mu_2} I_2(t)
        + h \partial_n I_2(t) 
        + h \parens{\partial_t I_2(t) - \partial I_1(t)}
        & \ASSUMPTION{9}
        \\
        & \gtrsim
        \frac{\sum_{i=0}^{h-1}(R_{t+i} - \mu_2) - \rho(1- \mu_2)h}{N_2(t)} \partial_{\mu_2} I_2(t)
        + h \parens{\partial_t I_2(t) - \partial I_1(t)}
        .
        & \ASSUMPTION{8}
    \end{align*}
    Assume that $R_{t+i} = 1$ for all $i \in \set{0, \ldots, h-1}$. 
    We get
    \begin{align*}
        \parens{I_2(t+h) - I_1(t)} - \parens{I_2(t) - I_1(t)}
        & \gtrsim
        \frac{(1-\rho)(1- \mu_2)h}{N_2(t)} \partial_{\mu_2} I_2(t)
        + h \parens{\partial_t I_2(t) - \partial I_1(t)}
        \\
        & \sim
        \frac{(1-\rho)(1- \mu_2)h}{N_2(t)} \partial_{\mu_2} I_2(t)
        .
        & \ASSUMPTION{9}
    \end{align*}
    Since $A_t = 2$, we have $I_2(t) - I_1(t) \ge 0$. 
    By \ASSUMPTION{1}, $\partial_{\mu_2}(I_2(t)) > 0$ so $I_2(t+h) - I_1(t+h) > 0$, hence $A_{t+h} = 2$. 
        We have established that, in the asymptotic regime, if $A_s = 2$ for $s \in \set{t, \ldots, t+h-1}$ with $R_s=1$, then $A_{t+h}=2$ as well.
        This means that the index policy essentially behaves like UCB: 
        If the bad arm only yields optimal rewards, it is repeatedly pulled.
    \end{proof}

    It means that Lemma~\ref{lemma:ucb non agnosticity} extends to general indexes satisfying \ASSUMPTION{1-9}.
    Therefore, and with the same proof, so does \cref{theorem:ucb sliding}: Index policies pull the bad arm for arbitrary long time-window infinitely often.
    \cref{theorem:ucb regret of exploration} also generalizes, and the regret of an index policy at exploration episodes can be predicted. 
    It locally behaves like a random walk. 
    The accuracy of the prediction is experimentally measured in \cref{figure:regret of exploration}.

    \begin{theorem}
        \label{theorem:general indexes}
        Let $I(-)$ an index satisfying \ASSUMPTION{1-9}.
        Then:
        \begin{itemize}
            \item[(1)] {\rm Sliding Regret:}
                $\SliReg(I; T) = (\mu_1-\mu_2)T$.
            \item[(2)] 
                {\rm Regret of Exploration:}
                % Assume that in the asymptotic regime, $\partial_{\mu_2} I_2(t) / n_2(t)$ converges to a constant.
                Let $(X_t: t\ge 1)$ a sequence of i.i.d.~random variables with distribution ${\rm B}(\mu_2)$.
                Let $\sigma_T := T \wedge \inf\set{t\ge 1: - \rho(1-\mu_2) + \frac 1t \sum_{i=1}^t (X_t-\mu_2) \le 0}$.
                We have $\RegExp(I; T) \ge (\mu_1-\mu_2)\E[\sigma_T]$. 
        \end{itemize}
    \end{theorem}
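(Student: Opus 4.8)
The plan is to reduce both parts of Theorem~\ref{theorem:general indexes} to the corresponding statements already proved for UCB, via the lemma on the preceding page that generalizes Lemma~\ref{lemma:ucb non agnosticity} to indexes satisfying \ASSUMPTION{1-9}. For part (1), I would argue that the proof of \cref{theorem:ucb sliding} used \emph{only two facts} about UCB: that all arms are pulled infinitely often (so that the exploration episodes $(\tau_k)$ are almost surely finite and form an increasing sequence), and the pollution-resilient lower bound $\Pr(\forall i<T: A_{\tau_k+i}=2 \mid E_{\tau_k}, U_{\tau_k}) \ge \mu_2^T$ with $\Pr(\liminf_k E_{\tau_k})=1$. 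The first follows from Lemma~\ref{lemma:assumptions 1-3} together with \ASSUMPTION{2} (diverging in $t$ forces infinitely many pulls of each arm, hence sublinear regret and finiteness of $\tau_k$); the second is exactly the lemma just proved from \ASSUMPTION{1-9}. With these two inputs, the verbatim argument of \cref{theorem:ucb sliding} — splitting the shifted events $F_{\tau_{\ell T}}$ so they do not overlap (using $\tau_{k+T}>\tau_k+2T$), writing the product over $\ell$, applying the lemma to bound each factor by $1-(\mu_2-\epsilon)^T$, and invoking $\Pr(\liminf_k E_{\tau_{kT}})=1$ — yields $\Pr(\forall k,\exists\ell\ge k:\forall i<T, A_{\tau_\ell+i}=2)=1$, hence $\SliReg(I;T)=(\mu_1-\mu_2)T$.

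For part (2), the plan is to mimic the proof of \cref{theorem:ucb regret of exploration} (deferred to \cref{section:ucb exploration}) with the $\rho$-optimism constant in place of the specific UCB drift. The key local computation is the one in the proof of the preceding lemma: in the asymptotic regime, while $A_s=2$ over $\set{t,\dots,t+h-1}$,
\begin{equation*}
    \parens{I_2(t+h)-I_1(t+h)} - \parens{I_2(t)-I_1(t)}
    \gtrsim
    \frac{\sum_{i=0}^{h-1}(R_{t+i}-\mu_2) - \rho(1-\mu_2)h}{N_2(t)}\,\partial_{\mu_2}I_2(t).
\end{equation*}
Since $\partial_{\mu_2}I_2(t)>0$ by \ASSUMPTION{1} and $I_2(\tau_k)-I_1(\tau_k)\ge 0$ at an exploration episode, the arm $a=2$ continues to be pulled exactly as long as the normalized partial sum $\frac1h\sum_{i=0}^{h-1}(R_{\tau_k+i}-\mu_2)$ stays above $\rho(1-\mu_2)$ — equivalently, as long as $-\rho(1-\mu_2)+\frac1h\sum_{i=0}^{h-1}(R_{\tau_k+i}-\mu_2)>0$. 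In the asymptotic regime the rewards $R_{\tau_k+i}$ received from arm $2$ are i.i.d.\ ${\rm B}(\mu_2)$ (conditionally on $E_{\tau_k}$), so the number of consecutive pulls of arm $2$ starting at $\tau_k$ stochastically dominates the stopping time $\sigma_T=T\wedge\inf\set{t\ge1:-\rho(1-\mu_2)+\frac1t\sum_{i=1}^t(X_t-\mu_2)\le 0}$. Taking expectations, $\E[\Reg(\tau_k;\tau_k+T)]\ge(\mu_1-\mu_2)\E[\sigma_T]-o(1)$ as $k\to\infty$ (the $o(1)$ absorbing the probability of $E_{\tau_k}^\complement$, which vanishes since $\Pr(\liminf_k E_{\tau_k})=1$, and the truncation error from the $\sim$/$\gtrsim$ approximations), and passing to the $\limsup$ gives $\RegExp(I;T)\ge(\mu_1-\mu_2)\E[\sigma_T]$.

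The main obstacle is making the asymptotic-regime approximations uniform enough to carry the second part rigorously: the relation above is only an equivalence ``$\sim$'' up to the coordinate-wise-equivalence topology, valid once $(\hat\mu_1(t),\hat\mu_2(t),N_1(t),N_2(t))$ has entered a neighborhood of $(\mu_1,\mu_2,t,n_2(t))$, and one must check that the sign-of-the-index-gap argument survives the multiplicative errors hidden in ``$\gtrsim$'' — in particular that the error terms are $o\!\parens{\partial_{\mu_2}I_2(t)/N_2(t)}$ uniformly over the $h\le T$ steps of a single exploration burst, which is where \ASSUMPTION{9} (negligibility of $\partial_t I_1, \partial_t I_2, \partial_{\mu_2}I_1$) is doing the real work. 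Once the event $E_{\tau_k}$ is chosen to encode both the asymptotic regime of Lemma~\ref{lemma:index asymptotic regime} and the smallness of these error terms over a window of length $T$, the conditional i.i.d.\ structure of the rewards and a direct comparison with $\sigma_T$ finish the argument; everything else is the same bookkeeping as in the UCB case.
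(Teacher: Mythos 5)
Your proposal matches the paper's own argument: the paper proves the theorem exactly by establishing the generalized version of Lemma~\ref{lemma:ucb non agnosticity} under \ASSUMPTION{1-9} and then transporting the proofs of \cref{theorem:ucb sliding} and \cref{theorem:ucb regret of exploration} verbatim, with the random-walk drift $-\frac{\mu_1-\mu_2}{2}$ replaced by $-\rho(1-\mu_2)$ via the $\rho$-optimism condition, which is precisely your reduction for both parts. Your caveats about uniformity of the asymptotic-regime approximations (and the final $\epsilon\to 0$ step using that the partial sums take finitely many values for $t\le T$) are the same bookkeeping the paper handles in the UCB case, so the approach is essentially identical.
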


    \subsection{Examples and Experiments}

    Checking that an index satisfies the requirements \ASSUMPTION{1-9} is mostly computations.
    % We encourage the motivated reader to check \ASSUMPTION{1-9} for UCB which is a gentle start.
    \cref{example:check assumptions} details the checking process for IMED. 
    More examples are provided in \cref{table:asymptotic regimes}. 

    \begin{example}[IMED]
        \label{example:check assumptions}
        IMED from \cite{honda_non-asymptotic_2015} picks the arm maximizing:
        $$
            I_a(t) := \frac{\log(t)}{N_a(t)\kl(\hat \mu_a(t), \hat \mu^*(t)) + \log N_a(t)}.
        $$
        We have $I_{\rm max} = \infty$, and \ASSUMPTION{1-3} are obvious.
        When the arm $a=1$ is pulled linearly often, we have $I_1(t) = {\log(t)}/{\log N_1(t)} \sim 1$, so $I(\mu_1,\mu_2) = 1$.  
        We see that $n_2(t) := \frac{\log(t)}{\kl(\mu_2, \mu_1)}$, that depends continuously on $\mu_2, \mu_1$ so that \ASSUMPTION{4} is satisfied. 
        \ASSUMPTION{5-6} also immediately follow. 
        The last conditions are the ones that need more work, but they result from straight forward computations. 
        Asymptotically, for $\hat \mu_2 \equiv \hat \mu_2(t) < \hat \mu_1(t) \equiv \hat \mu_1$, we get:
        \begin{align*}
            I_1(t+h) - I_1(t) 
            & \sim \frac ht + \frac {N_1(t+h)-N_1(t)}{N_1(t)\log N_1(t)} \sim \frac ht,
            \\
            I_2(t+h) - I_2(t)
            & \sim 
            \frac{
                (\hat \mu_2(t+h) - \hat \mu_2(t))
                \log\parens{\frac{\hat \mu_1(1-\hat \mu_2)}{\hat \mu_2(1-\hat \mu_1)}}}{\kl(\hat \mu_2,\hat \mu_1)}
            - 
            \frac{N_2(t+h)-N_2(t)}{\frac{\log(t)}{\kl(\hat \mu_2,\hat \mu_1)}}
            + 
            \frac h{t\log(t)}.
        \end{align*}
        These two Taylor expansions are continuous in $\hat \mu_2$ and $\hat \mu_1$, so that \ASSUMPTION{5} is satisfied. 
        \ASSUMPTION{9} follows directly and \ASSUMPTION{8} can be checked numerically. 
        Following \cref{theorem:general indexes},
        $$
            \SliReg({\rm IMED}; T) = (\mu_1-\mu_2)T
        $$
        and its regret of exploration can be predicted via the random walk specified in \cref{theorem:general indexes}.2. 
    \end{example}

    \begin{table}
    %\hspace{-.1\linewidth}
    \resizebox{\linewidth}{!}{
    \begin{tabular}{c|c|ccc}
        Algorithm & Index & $n_2(t)$ & $\partial_{\mu_2} I_2$ & $\partial_n I_2$ 
        \\ \hline
        UCB 
        & $\hat \mu_a(t) + \sqrt{\frac{2\log(t)}{N_a(t)}}$ 
        & $\frac{2\log(t)}{(\mu_1-\mu_2)^2}$
        & $1$
        & $- \frac{\mu_1 - \mu_2}{2n_2(t)}$
        \\
        MOSS 
        & $\hat \mu_a(t) + \sqrt{\frac{\log\parens{\frac{t}{2N_a(t)}}}{N_a(t)}}$
        & $\frac{\log(t)}{(\mu_1-\mu_2)^2}$
        & $1$
        & $- \frac{\mu_1 - \mu_2}{2n_2(t)}$
        \\ \hline
        UCB-V
        & $\hat \mu_a(t) + \sqrt{\frac{2\hat \mu_a(t)(1-\hat\mu_a(t))\log(t)}{N_a(t)}} + \frac{3c\log(t)}{N_a(t)}$
        & $\frac{2\mu_2(1-\mu_2)}{(\mu_1-\mu_2)^2}\parens{1+\sqrt{1+\frac{6c(\mu_1-\mu_2)}{\mu_2(1-\mu_2)}}}^2\log(t)$
        % & $1+(1-2\mu_2)\sqrt{\frac{\log(t)}{2n_2(t)\mu_2(1-\mu_2)}}$
        & $(*)$
        & $(**)$
        % & $- \frac{\mu_1 - \mu_2}{2n_2(t)}$
%         \\
%         AdaUCB
%         & $\hat \mu_a(t) + \sqrt{\frac{2}{N_a(t)}\log\parens{\frac{t}{\sum_{a'}\min\set{N_a(t), \sqrt{N_a(t), N_{a'}(t)}}}}}$
%         & $\frac{2\log(t)}{(\mu_1-\mu_2)^2}$
%         & $1$
%         & $- \frac{\mu_1 - \mu_2}{2n_2(t)}$
        \\ \hline
        KL-UCB 
        & $\max\set{\mu : N_a(t) \kl(\hat \mu_a(t), \mu) \le \log(t)}$
        & $\frac{\log(t)}{\kl(\mu_2, \mu_1)}$
        & $\frac{\log\parens{\frac{\mu_1(1-\mu_2)}{\mu_2(1-\mu_1)}}}{\frac{\mu_1-\mu_2}{\mu_1(1-\mu_1)}}$
        & $-\frac{\kl(\mu_2, \mu_1)}{n_2(t)\frac{\mu_1-\mu_2}{\mu_1(1-\mu_1)}}$
        \\
        IMED 
        & $\frac{\log(t)}{N_a(t)\kl(\hat \mu_a(t), \hat \mu^*(t)) + \log N_a(t)}$
        & $\frac{\log(t)}{\kl(\mu_2,\mu_1)}$ 
        & $\frac{\log\parens{\frac{\mu_1(1-\mu_2)}{\mu_2(1-\mu_1)}}}{\kl(\mu_2,\mu_1)}$
        & $- \frac 1{n_2(t)}$
    \end{tabular}}
    \caption{
        \label{table:asymptotic regimes}
        Examples of asymptotic regimes.
        The missing entries of UCB-V are $(*)~\partial \mu_2 I_2 := 1 + (1-2\mu_2)\sqrt{\frac{\log(t)}{2n_2(t)\mu_2(1-\mu_2)}}$ and $(**)~\partial_n I_2 := - \frac {\log(t)}{n_2(t)} \parens{\sqrt{\frac{\mu_2(1-\mu_2)\log(t)}{2n_2(t)}} + \frac {6c\log(t)}{n_2(t)}}$.
        In this array, we can group algorithms in three families of algorithms with similar asymptotic regimes and identical ratios $n_2(t) \partial_{\mu_2} I_2/\partial_n I_2$, known to account for the regret of exploration: UCB and MOSS, UCB-V, KL-UCB and IMED.
    }
    \end{table}

    \begin{example}[Experiments]
        
        We extend the experiment of \cref{figure:regret of exploration} to other index policies. 
        We estimate the function $\RegExp'(t;T) := \E[\Reg(t; t+T)| \exists k, t = \tau_k]$ as a function of $t$. 
        To estimate this function, we repeatedly run the algorithm to obtain a family $S$ of samples of $(\tau_k, \Reg(\tau_k;\tau_k+T))$.
        Then, we approximate $\RegExp'(t; T)$ as the averages of $y$ for $(x,y) \in S$ such that $\abs{x - t} < W$ where $W$ is a parameter of the approximation. 
        In the experiments, we take $W = 128$ and $T = 100$.

        \begin{figure}
            \centering
            \includegraphics[width=.49\linewidth]{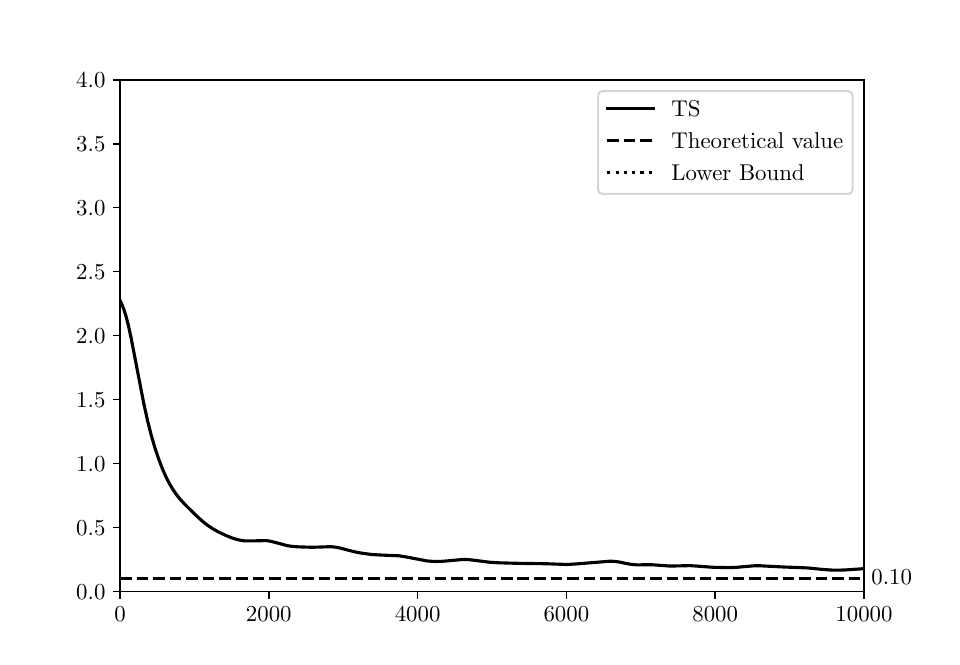}
            \includegraphics[width=.49\linewidth]{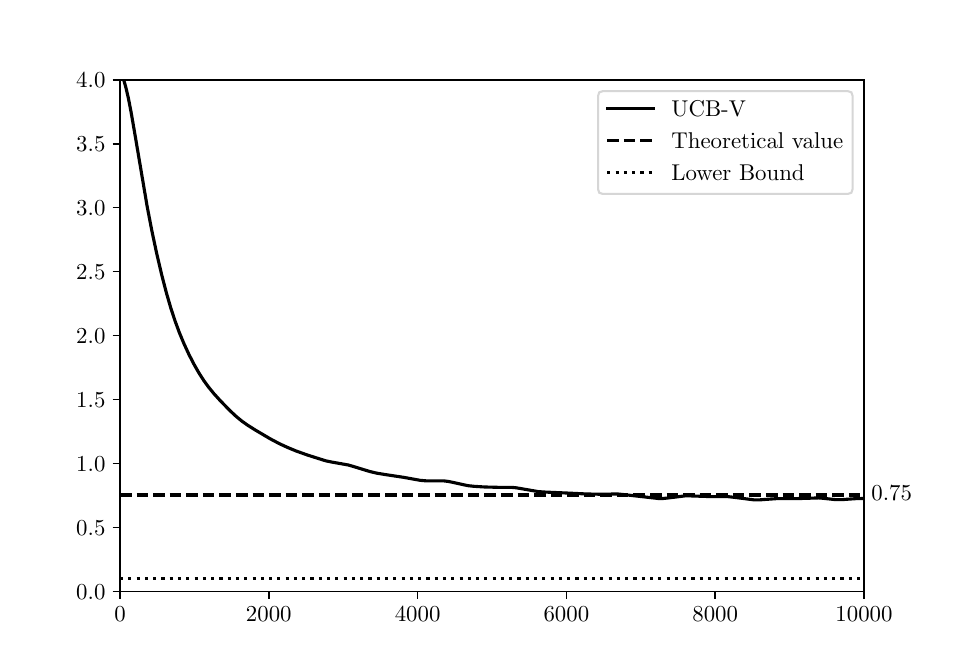}
            \includegraphics[width=.49\linewidth]{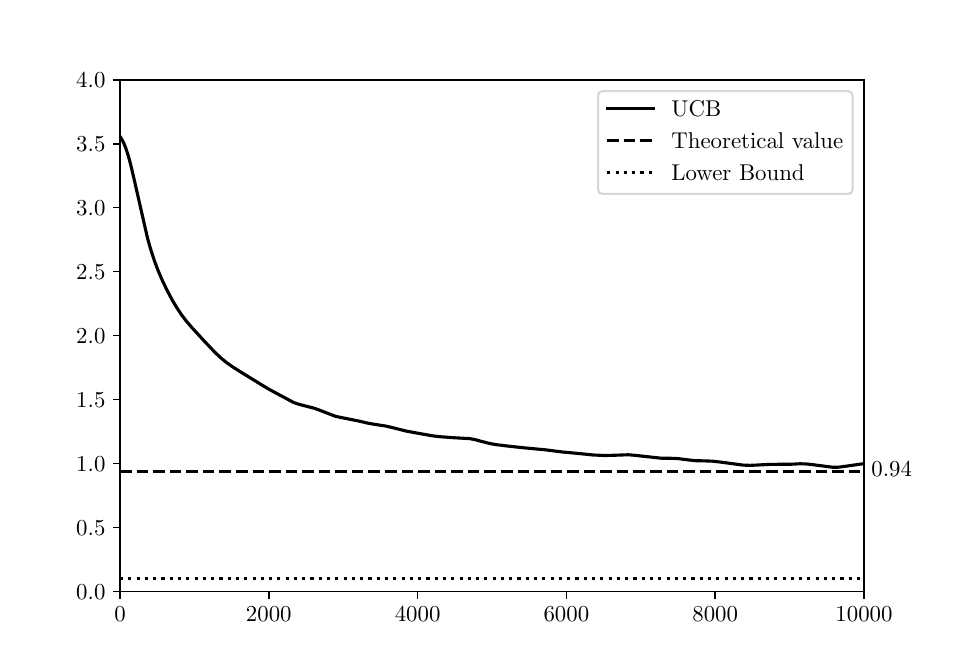}
            \includegraphics[width=.49\linewidth]{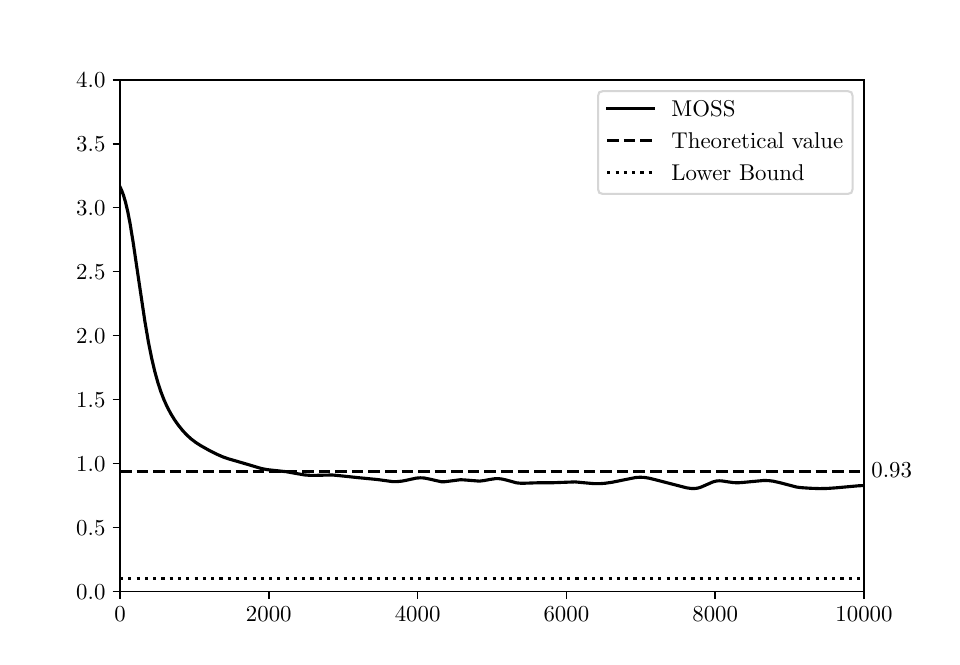}
            \includegraphics[width=.49\linewidth]{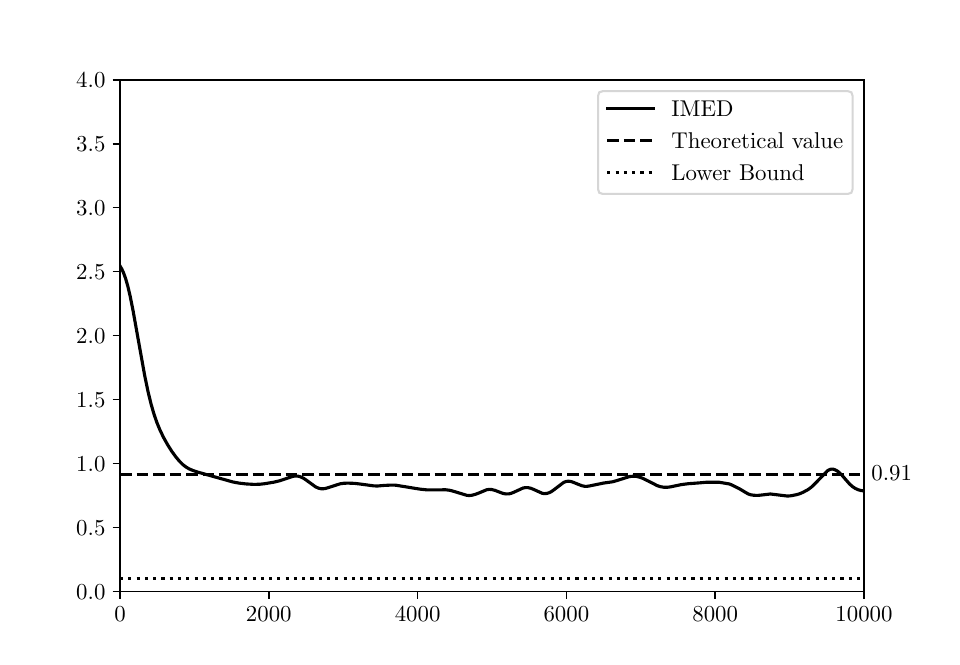}
            \includegraphics[width=.49\linewidth]{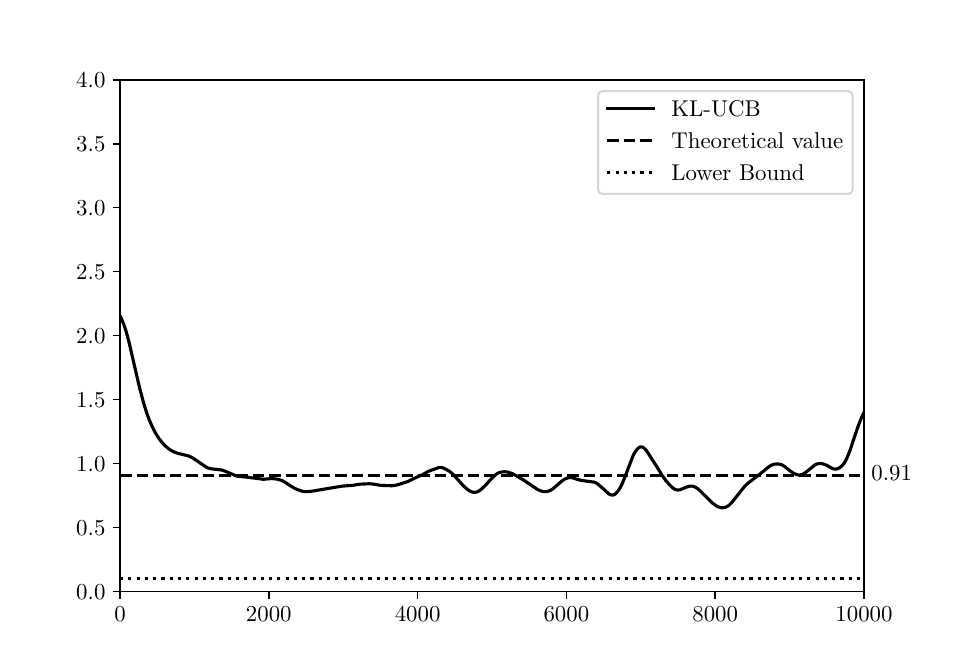}
            \caption{
                Estimated regret of exploration for various algorithms on the two-arm Bernoulli bandit $({\rm B}(0.9), {\rm B}(0.8))$. 
                The performance of TS and UCB from \cref{figure:regret of exploration} are compared to UCB-V, MOSS, IMED and KL-UCB. 
                The lower bound (dotted) of $\RegExp(-)$ is $0.1$. The theoretical value is reported to the right.
            }
        \end{figure}
        
        We overall observe a convergence to the predicted theoretical value (\cref{theorem:general indexes}.2). 
        Observing the precise rate of convergence of $\RegExp'(t; T)$ as a function of $t$ is rather difficult, especially for IMED and KL-UCB, because these algorithms are very aggressive and rarely pick the suboptimal arm, meaning that there a only a few exploration episodes during a run.
        The amount of data required to accurately estimate the curve increases exponentially with $t$. 
        Nonetheless, it seems that $\RegExp'(t; T)$ is slightly below the theoretical $(\mu_1-\mu_2)\E[\sigma_T]$ sometimes, see IMED for instance.
        This is due to two things.
        First, although we eventually have $|N_2(t) - n_2(t)| < \epsilon n_2(t)$ with $\epsilon$ as small as desired, for $t = 10000$, the correct $\epsilon$ may remain large.
        For instance, in IMED's index, the term $\log n_2(t)$ cannot be neglected in front of $n_2(t) \kl(\mu_2, \mu_1)$ even when $t = 10000$, implying that $N_2(t)$ and $n_2(t)$ are of the same order but still a bit far away.
        Second, the analysis assumes that the partial derivatives of the index stay approximately the same over $[t; t+T]$, which is quite imprecise when $t$ isn't large enough in front of $T$. 
    \end{example}

\clearpage
\section{Conclusion and Further Work}

In this paper, we design a new learning metric that allows to discriminate algorithms with comparable regrets: the sliding regret, that accounts for the local temporal behavior of the algorithm. 
We show that index algorithms (UCB, KL-UCB, IMED, etc.) inevitably have linear sliding regret while randomized algorithms such as TS and MED do not suffer from this issue.
The analysis of index algorithms underlines that such algorithms tend to locally behave poorly when exploration starts, i.e., when they switch to picking a suboptimal arm after a period of time they only picked optimal ones. 
This is why we introduce the regret of exploration, that quantifies the local regret at these critical time instants.
We show that index algorithms have suboptimal regret of exploration, in opposition to TS and MED. 

The study of the local temporal behavior of learning algorithms is far from over. 
While our results on the sliding regret are pretty conclusive, our analysis of the regret of exploration is still incomplete. 
Is our lower bound on the regret of exploration tight? 
While our experiment suggests that the answer is positive, this direction is yet to investigate. 
Another worthy direction is the analysis of EXP3 (\citealt{auer_gambling_1995}), whose behavior cannot be explained by our current proof techniques. 
When running EXP3, we observe that the one-shot pseudo-regret performance fit in between the dichotomous typical portraits pictured in \cref{figure:typical}. 
Hence, randomized algorithms do not typically behave like TS or MED in general. 

Another perspective is to extend this work beyond stochastic bandits.
While our proofs can be adapted to cover multi-arm bandits with non-Bernoulli reward distributions, they are specific to stochastic bandits. 
For instance, \cite{boone_regret_2023} initiated the design of the regret of exploration on Markov Decision Processes (MDP), but many things are still far from being understood. 
As a matter of fact, no algorithm with sublinear regret of exploration on general finite MDPs, nor ergodic MDPs, is known.
As far as the sliding regret is concerned, there is no existent result for MDPs.

% Acknowledgements and Disclosure of Funding should go at the end, before appendices and references

% \acks{All acknowledgements go at the end of the paper before appendices and references.
% Moreover, you are required to declare funding (financial activities supporting the
% submitted work) and competing interests (related financial activities outside the submitted work).
% More information about this disclosure can be found on the JMLR website.}

% Manual newpage inserted to improve layout of sample file - not
% needed in general before appendices/bibliography.

% \acks{}

% I would like to thank Bruno Gaujal for his advice

\newpage

\appendix
\section{Almost-sure Properties of Consistent Algorithms}

\label{section:consistent algorithms}

This section is dedicated to the proof of the following result.

\begin{proposition}
    \label{proposition:consistent policies}
    Consider a policy such that whatever the distributions ${\rm F}$ on arms, the expected regret grows linearly, i.e., $\E_{\rm F}[\Reg(T)] = \oh(T)$.
    Then all arms are visited infinitely often, that is, $\Pr(\forall n, \exists t: N_a(t) \ge n) = 1$.
\end{proposition}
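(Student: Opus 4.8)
The plan is to handle the optimal arm and the suboptimal arm separately, exploiting the exact identity $\Reg(T) = (\mu_1-\mu_2)N_2(T+1)$ for the former and a change-of-measure argument for the latter. First consider arm $1$, which is optimal under ${\rm F}$, and set $N_1(\infty) := \lim_t N_1(t)$. Suppose for contradiction that $\Pr_{\rm F}(N_1(\infty) < \infty) = p > 0$. Since the events $\{N_1(\infty) \le m\}$ increase to $\{N_1(\infty) < \infty\}$, there is an integer $m$ with $\Pr_{\rm F}(N_1(\infty) \le m) \ge p/2$. On that event and for $T \ge m$ one has $N_2(T+1) = T - N_1(T+1) \ge T - m$ (there are only two arms), so $\E_{\rm F}[\Reg(T)] = (\mu_1-\mu_2)\E_{\rm F}[N_2(T+1)] \ge (\mu_1-\mu_2)\tfrac p2 (T-m)$, which is not $\oh(T)$ — a contradiction. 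Hence $N_1$ is visited infinitely often almost surely, and, the same reasoning applying verbatim to any two-arm Bernoulli instance (up to relabeling arms), the unique optimal arm of any such instance is visited infinitely often almost surely.

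For arm $2$, which is suboptimal under ${\rm F}$, I would introduce the alternative instance ${\rm F}'$ that keeps arm $1$ unchanged and replaces arm $2$'s law by ${\rm B}(\mu_2')$ with $\mu_2' \in (\mu_1,1)$ fixed, so that arm $2$ is the unique optimal arm of ${\rm F}'$. Applying the previous step to ${\rm F}'$ (whose expected regret is also $\oh(T)$ by hypothesis) gives $\Pr_{{\rm F}'}(N_2(\infty) < \infty) = 0$, and it remains to transfer this to $\Pr_{\rm F}$. I would do so through the stack-of-rewards coupling: draw once and for all the policy's internal randomness $U$ together with two i.i.d.\ reward streams, the $n$-th pull of arm $a$ revealing the $n$-th entry of its stream. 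Under this coupling ${\rm F}$ and ${\rm F}'$ differ only through the law of arm $2$'s stream, namely ${\rm B}(\mu_2)$ versus ${\rm B}(\mu_2')$, which are mutually absolutely continuous since $\mu_2,\mu_2'\in(0,1)$. Writing $\{N_2(\infty) < \infty\} = \bigcup_m \{N_2(\infty) \le m\}$, the crucial point is that $\{N_2(\infty)\le m\}$ is measurable with respect to $\mathcal G_m := \sigma(U, \text{arm }1\text{'s stream}, \text{arm }2\text{'s first } m \text{ entries})$, because on that event only those first $m$ entries of arm $2$'s stream are ever consulted; on $\mathcal G_m$ the Radon--Nikodym derivative $\mathrm d\Pr_{{\rm F}'}/\mathrm d\Pr_{\rm F}$ is a product of at most $m$ Bernoulli likelihood ratios and is therefore bounded below by $c^m > 0$ with $c := \min\{\mu_2'/\mu_2,\, (1-\mu_2')/(1-\mu_2)\}$. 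Hence $\Pr_{\rm F}(N_2(\infty)\le m) \le c^{-m}\Pr_{{\rm F}'}(N_2(\infty)\le m) = 0$ for every $m$, and letting $m\to\infty$ yields $\Pr_{\rm F}(N_2(\infty)<\infty)=0$, which finishes the proof.

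The step I expect to be the main obstacle is the measurability claim $\{N_2(\infty)\le m\}\in\mathcal G_m$, where one must deal with the circular-looking dependence between the policy's actions and the reward streams. I would make it precise by first showing, by a straightforward induction on the time step, that for all finite $N,t_0$ the truncated event $\{N_2(t_0)\le m\}\cap\{A_s = 1 \text{ for all } t_0\le s\le N\}$ is measurable with respect to $\sigma(U, \text{arm }1\text{'s first } N \text{ entries}, \text{arm }2\text{'s first } m \text{ entries})$ --- because on it the first $N$ actions and rewards are a deterministic function of $U$ and those finitely many stream entries, and no entry of arm $2$'s stream beyond the $m$-th is read --- and then intersecting over $N$ and taking the union over $t_0$. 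Once this is granted, the two truncation arguments of the first two steps and the likelihood-ratio bound are entirely routine.
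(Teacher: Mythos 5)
Your proof is correct, and it rests on the same underlying mechanism as the paper's --- a change of measure to an alternative instance ${\rm F}'$ under which arm $2$ becomes the unique optimal arm, exploiting that on the event where arm $2$ is pulled at most $m$ times the likelihood ratio between ${\rm F}$ and ${\rm F}'$ is controlled by a factor $c^{m}$ --- but the logical packaging is genuinely different. The paper argues in the forward direction: it transports the assumed positive-probability event $(\forall t: N_a(t) < n)$ from ${\rm F}$ to ${\rm F}'$ using the change-of-measure identity applied to the finite-time, $\mathcal{F}_t$-measurable events $(N_a(t) < n)$ (taking limits of probabilities in $t$), and then derives a contradiction by showing that finitely many pulls of the now-optimal arm with positive probability forces linear expected regret under ${\rm F}'$; the case of the optimal arm under ${\rm F}$ is dispatched in one line (``the finitely-visited arm must be suboptimal''), which is exactly your first step made explicit. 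You instead run the argument backwards: you first establish the statement for the optimal arm directly from $\Reg(T) = (\mu_1-\mu_2)N_2(T+1)$, apply it under ${\rm F}'$ to conclude that $\{N_2(\infty) \le m\}$ is ${\rm F}'$-null, and pull that null set back to ${\rm F}$ via a stack-of-rewards coupling and a bounded Radon--Nikodym derivative on $\mathcal{G}_m$. The price of your route is the measurability claim $\{N_2(\infty)\le m\} \in \mathcal{G}_m$, which the paper's finite-time formulation never has to confront; your truncation-and-induction argument for it is the right one and does close that gap. What your version buys is economy --- the linear-regret contradiction under ${\rm F}'$ is not re-derived but simply reused from the optimal-arm step, and only one-sided absolute continuity (not the quantitative bound) is really needed --- so both proofs are sound, with the paper's being slightly more self-contained at the level of measure-theoretic bookkeeping and yours slightly more modular.
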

\begin{proof}
\STEP{1}
    Assume on the contrary that, for some distributions ${\rm F}$ on arms and for some arm $a$, $\Pr_{\rm F}(\forall t: N_a(t) < n) > 0$ where $n \ge 1$.  
    Because the expected regret is sublinear, $a$ has to be suboptimal. 
    Let ${\rm F}'$ any distribution on arms making $a$ the unique optimal arm, and such that ${\rm F}(a') = {\rm F}'(a')$ for all $a' \ne a$. 
    Denote the likelihood-ratio of the observations $(A_1, R_1, \ldots, A_{t-1}, R_{t-1})$ as 
    $$
        L_t \equiv L(A_1,R_1, \ldots, A_{t-1}, R_{t-1}) 
        := 
        \sum_b \sum_{s=1}^{t-1} 
        \indicator{A_s = b} 
        \frac{f_b(R_s)}{f'_b(R_s)}
    $$
    Denoting $\Fc_t := \sigma(A_1, R_1, \ldots, A_{t-1}, R_{t-1})$, it is known (see \citealt[Lemma~18]{kaufmann_complexity_2016}) that if $E$ is a $\Fc_t$-measurable event, then
    $$
        \Pr_{\rm F'}(E) = \E_{\rm F}\brackets{
            \indicator{E}
            \exp(-L_t)
        }.
    $$

    \STEP{2}
    Because ${\rm F}$ is non-degenerate with $0 < \mu_a < \mu^* < 1$, we can assume that $\mu'_a < 1$ and that there exists $c > 0$ such that for $r \in \set{0, 1}$, we have $\frac 1c \le \exp(f_a(r)/f'_a(r)) \le c$. 
    Then, 
    \begin{align*}
        \Pr_{\rm F'}
        \parens{
            \forall t: N_a(t) < n
        }
        & = \lim_{t \to \infty}
        \Pr_{\rm F'}
        \parens{
            N_a(t) < n
        }
        \\
        & = \lim_{t \to \infty}
        \E_{\rm F} \brackets{
            \indicator{N_a(t) < n}
            \exp\parens{
                -
                \sum_b
                \sum_{s=1}^{t-1}
                \indicator{A_s = b}
                \frac{f_b(R_s)}{f'_b(R_s)}
            }
        }
        \\
        & = \lim_{t \to \infty}
        \E_{\rm F} \brackets{
            \indicator{N_a(t) < n}
            \prod_{s=1}^{t-1} 
            \exp\parens{
                -
                \indicator{A_s = b}
                \frac{f_a(R_s)}{f'_a(R_s)}
            }
        }
        \\
        & \ge \lim_{t \to \infty}
        \E_{\rm F} \brackets{
            \indicator{N_a(t) < n}
            \parens{ \frac 1 c }^{N_a(t)}
        }
        \\
        & \ge c^{-n} > 0.     
    \end{align*}

    \STEP{3}
    Let $\Delta' > 0$ the gap between the optimal arm and the best suboptimal arm under ${\rm F'}$.
    We get
    \begin{align*}
        \E_{\rm F'}[N_a(T)] 
        &
        \le n \Pr_{\rm F'}(N_a(T) < n) + T \Pr_{\rm F'}(N_a(T) \ge n)
        \le n + T (1 - c^{-n}). 
    \end{align*}
    So:
    \begin{align*}
        \E_{\rm F'}[\Reg(T)]
        & \ge \Delta' \parens{
            T - \E_{\rm F'}[N_a(T)]
        }
        \\
        & \ge \Delta' (c^{-n} T - n) = \Omega(T).
    \end{align*}
    This comes in contradiction with $\E_{\rm F'}[\Reg(T)] = o(T)$. 
\end{proof}

\section{Analysis of Thompson Sampling}

\subsection{Preliminaries: Sanov's Theorem}

Our analysis of Thompson Sampling relies on a quantitative version of Sanov's Theorem. 

\begin{lemma}[Sanov's Theorem]
    \label{lemma:sanov}
    Let $q \in (0, 1)$ and $(X_n:n\ge 1)$ a family of i.i.d.~random variables with distribution ${\rm B}(q)$.
    Let $S_n := X_1 + \ldots + X_n$ and denote $\Pr_q(S_n \in \ldots)$ the induced probability distribution.
%     Then
%     $$
%         \lim_{n \to \infty} \frac 1n \log \Pr_q(S_n \le n (q-\epsilon)) 
%         =
%         -\kl(q-\epsilon, q). 
%     $$
    Then, for $\epsilon > \frac 1n$, 
    \begin{eqnarray}
        \label{equation:sanov lower}
        \tfrac 1{n+1} e^{-n\kl(q-\epsilon-\frac 1n, q)} 
        &\le~~
        \Pr_q(S_n \le n(q-\epsilon))
        &\le~~ 
        n e^{-n\kl\parens{q-\epsilon, q}},
        \\
        \label{equation:sanov upper}
        \tfrac 1{n+1} e^{-n\kl\parens{q-\epsilon, q}} 
        &\le~~
        \Pr_q(S_n \ge n(q+\epsilon))
        &\le~~
        n e^{-n\kl\parens{q-\epsilon+\frac 1n, q}}.
    \end{eqnarray}
\end{lemma}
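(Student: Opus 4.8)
The plan is to prove both double inequalities by the \emph{method of types}: control a single binomial term $\Pr_q(S_n = j) = \binom{n}{j} q^j (1-q)^{n-j}$, then sum over the relevant range and exploit the monotonicity of $p \mapsto \kl(p,q)$. First I would record the elementary two-sided estimate that, for every integer $0 \le j \le n$ and $p := j/n$,
\[
    \tfrac{1}{n+1}\, e^{nH(p)} \;\le\; \binom{n}{j} \;\le\; e^{nH(p)}, \qquad H(p) := -p\log p - (1-p)\log(1-p).
\]
The upper bound is just $\binom{n}{j} p^j(1-p)^{n-j} \le 1$; for the lower bound, under $\Binomial(n,p)$ with $p = j/n$ the integer $j$ is a mode, so $\binom{n}{j}p^j(1-p)^{n-j}$ is the largest of the $n+1$ terms summing to $1$, hence $\ge \frac{1}{n+1}$. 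Since $p^j(1-p)^{n-j} = e^{-nH(p)}$ and $H(p) + p\log q + (1-p)\log(1-q) = -\kl(p,q)$, multiplying through by $q^j(1-q)^{n-j}$ yields
\[
    \tfrac{1}{n+1}\, e^{-n\kl(j/n,\, q)} \;\le\; \Pr_q(S_n = j) \;\le\; e^{-n\kl(j/n,\, q)}.
\]

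For \eqref{equation:sanov lower}, I would write $\Pr_q(S_n \le n(q-\epsilon)) = \sum_{j=0}^{\lfloor n(q-\epsilon)\rfloor} \Pr_q(S_n = j)$. For the upper bound, use that $p \mapsto \kl(p,q)$ is decreasing on $[0,q]$, so each summand is $\le e^{-n\kl(q-\epsilon,\,q)}$ (as $j/n \le q-\epsilon < q$), and the number of summands is $\lfloor n(q-\epsilon)\rfloor + 1 \le n(q-\epsilon) + 1 < nq \le n$, where the hypothesis $\epsilon > \frac{1}{n}$ is exactly what makes the last step go through. For the matching lower bound, I would keep only the single term $j^\star := \lfloor n(q-\epsilon)\rfloor$, note that $q-\epsilon-\tfrac{1}{n} < j^\star/n \le q-\epsilon < q$, and apply monotonicity once more to get $\kl(j^\star/n, q) \le \kl(q-\epsilon-\tfrac{1}{n},\, q)$, hence $\Pr_q(S_n \le n(q-\epsilon)) \ge \frac{1}{n+1}\, e^{-n\kl(q-\epsilon-1/n,\, q)}$.

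For \eqref{equation:sanov upper} I would proceed in the same manner, either repeating the previous step directly — now summing over $j \ge n(q+\epsilon)$ and using that $p \mapsto \kl(p,q)$ is increasing on $[q,1]$ — or, more economically, using the reflection that $S_n$ equals $n - S_n'$ in distribution with $S_n' \sim \Binomial(n, 1-q)$, so that $\Pr_q(S_n \ge n(q+\epsilon)) = \Pr_{1-q}\big(S_n' \le n((1-q)-\epsilon)\big)$, and then feeding this into \eqref{equation:sanov lower} with $q$ replaced by $1-q$, rewriting the resulting exponents via the identity $\kl(1-a,1-b) = \kl(a,b)$.

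I do not expect any genuine obstacle here; the proof is entirely routine. The only points requiring a little care are (i) the mode computation behind the $\frac{1}{n+1}$ lower bound on $\binom{n}{j}$, (ii) checking that the rounded evaluation points $j/n$ stay inside the interval on which $\kl(\cdot, q)$ is monotone (outside it the relevant $\kl$ is $+\infty$ and the claimed bound is vacuous), and (iii) verifying that the number of terms in the summation is genuinely at most $n$, which is precisely where the standing assumption $\epsilon > \frac{1}{n}$ enters.
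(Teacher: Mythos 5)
Your proposal is correct and follows essentially the same route as the paper: the same $\frac{1}{n+1}e^{nH(k/n)} \le \binom{n}{k} \le e^{nH(k/n)}$ estimate (the paper's "largest term of a sum equal to one" argument is exactly your mode argument), the same pointwise bound $\frac{1}{n+1}e^{-n\kl(k/n,q)} \le \Pr_q(S_n=k) \le e^{-n\kl(k/n,q)}$, and the same summation-plus-monotonicity step with a single retained term for the lower bound. Your reflection trick for the upper tail is a harmless shortcut where the paper simply says the second pair of bounds is established similarly.
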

\begin{proof}
    Naming these inequalities ``Sanov's Theorem'' is a bit of an overstatement but is nonetheless very close to the original. 
    The proof is classic, but we write it below for the paper to be self-contained. 

\STEP{1}
        We start by a combinatorial lemma.
        Write $h(p) := - p \log(p) - (1 - p)\log(1-p)$ the Shannon entropy. 
        For all $n$ and $k \in \set{0, \ldots, n}$, we have
        $$
            \frac{e^{n h\parens{\frac kn}}}{n+1}
            \le 
            \binom nk
            \le
            e^{n h\parens{\frac kn}}. 
        $$
        To establish this, remark that $1 = \sum_{\ell} \binom n\ell (\frac kn)^\ell(1 - \frac kn)^{n - \ell}$. 
        The term for $\ell = k$ is equal to $e^{-n h(k/n)}$.
        In particular, we have $1 \ge \binom nk e^{-nh(k/n)}$, giving the upper bound above.
        But also, since the term for $\ell = k$ is the largest of the sum, we get $1 \le (n+1) \binom nk e^{-nh(k/n)}$, leading to the lower bound. 

    \STEP{2}
    Let $k \in \set{0, \ldots, n}$.
    We have
    \begin{align*}
        \Pr_q(S_n = k)
        & = \binom nk q^k(1-q)^{n-k} \\
        & = \binom nk \parens{q^{\frac kn}(1-q)^{1 - \frac kn}}^n \\
        & = \binom nk e^{-n h\parens{\frac kn}} \cdot e^{-n \kl\parens{\frac kn, q}}. 
    \end{align*}
    We therefore obtain
    $$
        \frac{e^{-n \kl\parens{\frac kn, q}}}{n+1}
        \le
        \Pr_q(S_n = k)
        \le 
        e^{-n \kl\parens{\frac kn, q}}. 
    $$

    \STEP{3}
    We establish the bounds for $\Pr_q(S_n \le n(q-\epsilon))$, see \eqref{equation:sanov lower}.
    $$
        \Pr_q(S_n \le n(q-\epsilon)) 
        = \sum\nolimits_{k = 0}^{\lfloor n(q-\epsilon)\rfloor} \Pr_q(S_n = k). 
    $$
    For the upper bound, remark that when over $\set{0, \ldots, \lfloor n(q-\epsilon)\rfloor}$, the function $k \mapsto e^{-n \kl\parens{\frac kn, q}}$ is decreasing, thus we get 
    $$
        \Pr_q(S_n \le n(q-\epsilon)) 
        \le \parens{n - \lfloor n(q-\epsilon)\rfloor} e^{-n\kl\parens{\frac{\lfloor n(q-\epsilon)\rfloor}{n}, q}}
        \le n e^{-n (\kl(q-\epsilon, q))}.
    $$
    Rearranging terms provides the upper bound part in Sanov's Theorem.
    For the lower bound, check that 
    $$
        \Pr_q(S_n \le n(q-\epsilon)) 
        \ge \frac{e^{-n\kl\parens{\frac{\lfloor n(q-\epsilon)\rfloor}{n}, q}}}{n+1}
        \ge \frac{e^{-n\kl(q-\epsilon-\frac 1n, q)}}{n+1}. 
    $$
    The bounds for $\Pr_q(S_n \ge n(q+\epsilon))$, see \eqref{equation:sanov upper}, are established similarly. 
\end{proof}

\subsection{The Almost-Sure Asymptotic Behavior of Thompson Sampling}

Starting from this section, we assume throughout that the bandit model is $({\rm B}(\mu_1), {\rm B}(\mu_2))$ with non-degenerate means $0 < \mu_2 < \mu_1 < 1$. 
We first bound the sampling rates of TS. 

\begin{lemma}
    \label{lemma:ts sampling rates}
    There exist a positive definite function $c : \R_+ \to \R$ and a family $(n_\epsilon : \epsilon > 0)$ such that for all $\epsilon > 0$, there exists a sequence of events $(G_t^\epsilon)$ with $\liminf G_t^\epsilon$ a.s., and such that:
    $$
        e^{-N_2(t)(1+c(\epsilon))\kl(\mu_2, \mu_1)}
        \le 
        \E\brackets{
            \indicator{A_t=2}
            \mid
            G_t^\epsilon, 
            N_2(t), N_2(t) \ge n_\epsilon
        }
        \le 
        e^{-N_2(t)(1-c(\epsilon))\kl(\mu_2, \mu_1)}. 
    $$
\end{lemma}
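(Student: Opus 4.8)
The plan is to express the conditional sampling probability $\Pr(A_t = 2 \mid \mathcal F_t)$ exactly and then bound both the numerator and denominator of the relevant ratio using Sanov-type estimates. Conditionally on $\mathcal F_t$, Thompson Sampling picks arm $2$ iff $\theta_2(t) > \theta_1(t)$ with $\theta_a(t)\sim\Beta(1+S_a(t), 1+N_a(t)-S_a(t))$ independent. Integrating out $\theta_2$, this probability equals $\E_{\theta_1}[\,\overline F^{\Beta}_2(\theta_1)\,]$ where $\overline F^{\Beta}_2$ is the survival function of $\theta_2$'s posterior; the Beta--Bernoulli trick of \cite{agrawal_analysis_2012} rewrites this as a probability involving a binomial tail, turning the integral into something of the form $\Pr_{\mu_1}\!\big(\Binomial \text{ against } N_2(t) \text{ successes count}\big)$, up to $\pm 1$ shifts in the parameters. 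This is where the heavy lifting lies: carefully bookkeeping the Beta--Bernoulli identity so that the resulting binomial-tail probabilities are of the form $\Pr_q(S_n \le n(q - \epsilon))$ or $\Pr_q(S_n \ge n(q+\epsilon))$ with $n$ comparable to $N_1(t)$ and the deviation governed by $\hat\mu_2(t)$ versus $\mu_1$.

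The next step is to introduce the good event $G_t^\epsilon$. It should force (i) $|\hat\mu_a(t) - \mu_a| < \epsilon$ for $a = 1,2$, which holds eventually by \cref{lemma:assumptions 1-3}-style arguments (sublinear regret $\Rightarrow$ all arms visited infinitely often $\Rightarrow$ empirical means converge), and (ii) $N_1(t) \ge t^b$ for a fixed $b > 0$, which holds eventually almost surely by \citep[Proposition~1]{kaufmann_thompson_2012} together with Borel--Cantelli. On $G_t^\epsilon$ the binomial tail probabilities from the previous paragraph can be sandwiched: the dominant term in $\log \Pr_{\mu_1}(S_n \le n(\mu_1 - (\mu_1 - \hat\mu_2(t)) \pm o(1)))$ is $-n \cdot \kl(\hat\mu_2(t), \mu_1)$, and since $|\hat\mu_2(t) - \mu_2| < \epsilon$ and $\kl(\cdot,\mu_1)$ is continuous, this is $-n(1 \pm c(\epsilon))\kl(\mu_2,\mu_1)$ for a suitable positive definite $c$. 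The only subtlety is that $n$ here is of order $N_1(t)$, not $N_2(t)$; one has to track through the Beta--Bernoulli identity that what actually controls the exponent is the number of successes/failures needed, which scales like $N_2(t)$, while the $\frac{1}{n+1}$ and $n$ polynomial prefactors from Sanov's Theorem (\cref{lemma:sanov}) are absorbed into $c(\epsilon)$ because $N_1(t) \ge t^b$ makes $\log n = O(\log t)$, dominated by $N_2(t)\kl(\mu_2,\mu_1)$ once $N_2(t) \gtrsim \log t$. The threshold $n_\epsilon$ is chosen so that, for $N_2(t) \ge n_\epsilon$, these polynomial corrections are genuinely swallowed by the $c(\epsilon)$-slack and the lower bound $\epsilon > 1/n$ required by \cref{lemma:sanov} is met.

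The expected main obstacle is the exact manipulation of the Beta--Bernoulli identity to land precisely on Sanov-friendly binomial tails with the correct identification of $n$ and the deviation parameter — in particular keeping the $\pm 1/n$ and integer-floor corrections under control so that they contribute only to $c(\epsilon)$ and not to the leading exponent. Once that identification is clean, feeding it into \cref{lemma:sanov} and collapsing prefactors using $N_1(t)\ge t^b$ on $G_t^\epsilon$ is routine. I would organize the write-up as: \textbf{Step 1}, set up $G_t^\epsilon$ and recall $\hat\mu_a(t)\to\mu_a$ and $N_1(t)\ge t^b$ eventually; \textbf{Step 2}, the Beta--Bernoulli reduction of $\Pr(A_t=2\mid\mathcal F_t)$ to binomial tails; \textbf{Step 3}, apply Sanov and absorb prefactors, defining $c(\epsilon)$ and $n_\epsilon$ along the way.
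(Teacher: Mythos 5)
Your toolkit (Beta--Binomial trick, Sanov, an asymptotically sure event containing empirical-mean concentration and $N_1(t)\ge t^b$) is the same as the paper's, but there is a genuine gap in how you handle the optimal arm's sample. You plan to integrate out $\theta_1(t)$, writing $\Pr(A_t=2\mid \mathcal F_t)=\E_{\theta_1}[\overline F_2(\theta_1)]$ and then effectively replacing $\theta_1$ by $\mu_1$. For the \emph{upper} bound this replacement leaves an additive remainder $\Pr(\theta_1(t)\le \mu_1-\epsilon)$, which under your $G_t^\epsilon$ is only of order $e^{-\Theta(N_1(t))}$, i.e.\ possibly as large as $e^{-Ct^b}$. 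You would need this to be dominated by $e^{-N_2(t)(1-c(\epsilon))\kl(\mu_2,\mu_1)}$, which requires $N_2(t)=O(t^b)$ --- but no upper bound on $N_2(t)$ is available at this stage (the estimate $N_2(t)\sim \log t/\kl(\mu_2,\mu_1)$ is proved \emph{later}, in Lemma~\ref{lemma:ts almost sure rates}, using the present lemma, so assuming it here would be circular). Indeed, conditionally on your $G_t^\epsilon$ and on a value of $N_2(t)$ much larger than $t^b$ (an event of positive probability), the chance of picking arm $2$ is driven by the lower tail of $\theta_1$ and genuinely exceeds $e^{-N_2(t)(1-c(\epsilon))\kl(\mu_2,\mu_1)}$, so the stated inequality cannot be derived from your event. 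The paper's fix is precisely to prove first (its STEP 2: Beta--Binomial trick, Sanov, $N_1(t)\ge t^b$, Borel--Cantelli) that $|\theta_1(t)-\mu_1|<\epsilon$ holds eventually almost surely, and to put this sample-concentration event \emph{inside} $G_t^\epsilon$; then $\{A_t=2\}$ is sandwiched between $\{\theta_2(t)>\mu_1+\epsilon\}$ and $\{\theta_2(t)\ge\mu_1-\epsilon\}$, and both bounds reduce to the posterior tail of arm $2$ alone.

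A secondary but telling confusion is your identification of the binomial sample size: the Beta--Binomial identity for arm $2$'s posterior tail at level $y\approx\mu_1$ produces a binomial with $N_2(t)+1$ trials and success probability $y$, evaluated at $S_2(t)$, so the exponent is $(N_2(t)+1)\,\kl\bigl(S_2(t)/(N_2(t)+1),\,y\bigr)\approx N_2(t)\kl(\mu_2,\mu_1)$; $N_1(t)$ never enters the main term (it only enters through the concentration of $\theta_1$). Consequently the Sanov prefactors to absorb are polynomial in $N_2(t)$, and they are swallowed by the slack $c(\epsilon)$ as soon as $N_2(t)\ge n_\epsilon$ for a \emph{constant} $n_\epsilon$; your requirement ``$N_2(t)\gtrsim\log t$'' is both unavailable under the stated conditioning and unnecessary once the reduction is done with the correct trial count.
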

\begin{proof}
    \STEP{1}
    Denote $F^{\rm Beta}_{\alpha, \beta}$ (respectively $F^{\rm Bin}_{n, p}$) the c.d.f.~of a Beta distribution ${\rm Beta}(\alpha, \beta)$ (respectively a Binomial distribution ${\rm Binom}(n, p)$). 
    Using the Beta-Binomial trick and \cref{lemma:sanov}, we obtain:
    \begin{align*}
        & \E\brackets{
            \indicator{\theta_a(t) \le \hat \mu_a(t) - \epsilon} 
            \mid \hat \mu_a(t), N_a(t)
        }
        \\
        & = 
        F^{\rm Beta}_{1+S_a(t), 1+N_a(t)-S_a(t)}(\hat \mu_a(t)-\epsilon)
        \\
        & = 
        1 - F^{\rm Bin}_{N_a(t)+1, \hat \mu_a(t)-\epsilon}(S_a(t))
        & (\text{Beta-Binomial Trick})
        \\
        & \le \E \brackets{
            (N_a(t)+1) e^{-(N_a(t) + 1) \kl\parens{\frac{S_a(t)}{N_a(t)+1}, \hat \mu_a(t) - \epsilon}}
            \mid \hat \mu_a(t), N_a(t)
        }
        & \text{(Sanov, Lemma~\ref{lemma:sanov})}
        \\
        & =
        (N_a(t)+1) e^{-(N_a(t) + 1) \kl\parens{\frac{S_a(t)}{N_a(t)+1}, \hat \mu_a(t) - \epsilon}}.
    \end{align*}
    We can similarly derive a bound for $\indicator{\theta_a(t) \ge \hat \mu_a(t)+\epsilon}$, showing that:
    $$
        \E\brackets{
            \indicator{\theta_a(t) \ge \hat \mu_a(t) + \epsilon} 
            \mid \hat \mu_a(t), N_a(t)
        }
        \ge 
        \frac{
            e^{-(N_a(t) + 1) \kl\parens{\frac{S_a(t)}{N_a(t)+1}, \hat \mu_a(t) + \epsilon}}
        }{
            N_a(t)+2
        }.
    $$

    \STEP{2}
    Introduce the events $F_t := (\abs{\hat \mu_1(t) - \mu_1} < \epsilon/3)$ and $E_t := (N_1(t) > t^b)$ that are such that both $\liminf F_t$ and $\liminf E_t$ are almost-sure (see \citealt{kaufmann_thompson_2012}, Proposition 1 for $E_t$). 
    We have
    \begin{align*}
        \Pr(\forall t, \exists s \ge t : \theta_1(s) \le \mu_1 - \epsilon) 
        & = \lim_{t \to \infty}
        \Pr \parens{
            \exists s \ge t:
            \theta_1(s) \le \mu_1 - \epsilon, F_s^\epsilon, E_s
        }
        \\
        & \le \lim_{t \to \infty}
        \sum\nolimits_{s \ge t}
        \Pr \parens{
            \theta_1(s) \le \hat \mu_1(s) - \tfrac{2\epsilon}3, 
            F_s^\epsilon, E_s
        }
        \\
        & \le \lim_{t \to \infty}
        \sum\nolimits_{s \ge t}
        \E \brackets{
            \indicator{ \theta_1(s) \le \hat \mu_1(s) - \tfrac{2\epsilon}3}
            \mid
            F_s^\epsilon, E_s, N_1(s), \hat \mu_1(s)
        }
        \\
        & \le \lim_{t \to \infty}
        \sum\nolimits_{s \ge t}
        (t^b+1)e^{-(t^b+1) \kl\parens{\mu_1-\tfrac \epsilon3, \mu_1-\tfrac{2\epsilon}3}}
        \\
        & = 0.
    \end{align*}
    Accordingly, $\Pr(\exists t, \forall s \ge t : \theta_1(s) > \mu_1 - \epsilon) = 1$.
    Similarly one can show that $\Pr(\exists t, \forall s \ge t: \theta_1(s) < \mu_1 + \epsilon) = 1$. 
 
    \STEP{3}
    Following \STEP{2}, we see that in the asymptotic regime, the suboptimal arm $a = 2$ cannot be picked unless $\theta_2(t) \ge \mu_1 - \epsilon$.
    And conversely, if $\theta_2(t) \ge \mu_1 + \epsilon$, then arm $2$ is pulled. 
    Introduce the asymptotically almost sure event:
    $$
        G^\epsilon_t := (\abs{\theta_1(t) - \mu_1} < \epsilon) 
        \cap (\abs{\hat \mu_2(t) - \mu_2} < \epsilon). 
    $$
    % The sequence $(G_t^\epsilon)$ is obviously $\Fbf$-measurable. 
    The probability to pick $A_t = 2$ conditionally on $G_t^\epsilon$ is bounded accordingly:
    $$
        \frac{e^{-(N_2(t)+1)\kl(\mu_2-\epsilon, \mu_1+\epsilon)}}{N_2(t)+2}
        \le 
        \E\brackets{
            \indicator{A_t=2}
            \mid G_t^\epsilon, N_2(t)
        }
        \le (N_2(t)+1) e^{-(N_2(t)+1)\kl(\mu_2+\epsilon, \mu_1-\epsilon)}. 
    $$
    Therefore, there exists a positive definite function $c(\epsilon)$ such that, when $N_2(t)$ is large enough relatively to $\epsilon$, say $N_2(t) \ge n_\epsilon$, we have:
    $$
        e^{-N_2(t)(1+c(\epsilon))\kl(\mu_2, \mu_1)}
        \le 
        \E\brackets{
            \indicator{A_t=2}
            \mid G_t^\epsilon, N_2(t), N_2(t) \ge n_\epsilon
        }
        \le 
        e^{-N_2(t)(1-c(\epsilon))\kl(\mu_2, \mu_1)},
    $$
    establishing the claim. 
\end{proof}

The second result of this section provides a precise description of TS's visit rates at infinity. 
The visit rate of the suboptimal, $N_2(t)$, will be later called the \emph{asymptotic regime} of TS. 

\begin{lemma}
    \label{lemma:ts almost sure rates}
    For all $\delta > 0$, 
    $$
        \Pr \parens{
            \exists t,
            \forall s \ge t:
            \abs{N_2(s) - \frac{\log(s)}{\kl(\mu_2,\mu_1)}}
            \le \delta \cdot \frac{\log(s)}{\kl(\mu_2, \mu_1)}
        }
        = 1.
    $$
\end{lemma}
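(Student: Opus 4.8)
The plan is to show that, for every $\delta > 0$, eventually $(1-\delta)\tfrac{\log t}{\kl(\mu_2,\mu_1)} \le N_2(t) \le (1+\delta)\tfrac{\log t}{\kl(\mu_2,\mu_1)}$ almost surely, by choosing the parameter $\epsilon$ of \cref{lemma:ts sampling rates} small enough; the multiplicative slacks $c(\epsilon)$ in that lemma are exactly what blocks a one-shot argument and forces sending $\epsilon \to 0$ at the end. Throughout I work on the almost-sure event on which all arms are drawn infinitely often (\cref{proposition:consistent policies}), $\hat\mu_a(t)\to\mu_a$, $N_1(t) > t^b$ eventually (as used in the proof of \cref{lemma:ts sampling rates}), and the good event $G_t^\epsilon$ holds for all $t$ past some a.s.-finite time with $N_2(t) \ge n_\epsilon$; on this event the per-round probability of drawing arm $2$ lies in the band $e^{-N_2(t)(1\pm c(\epsilon))\kl(\mu_2,\mu_1)}$, with $c(\epsilon)\to 0$ as $\epsilon\to 0$.

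For the upper bound (no overshoot) I would use the first Borel--Cantelli lemma. Whenever $N_2(s) \ge (1+\delta)\tfrac{\log s}{\kl(\mu_2,\mu_1)}$, \cref{lemma:ts sampling rates} bounds the probability of $\{A_s = 2\}\cap G_s^\epsilon\cap\{N_2(s)\ge n_\epsilon\}$ by $e^{-N_2(s)(1-c(\epsilon))\kl(\mu_2,\mu_1)} \le s^{-(1+\delta)(1-c(\epsilon))}$; choosing $\epsilon$ small so that $(1+\delta)(1-c(\epsilon)) > 1$, the sum over $s$ of these probabilities converges, so almost surely there is a time past which arm $2$ is never drawn while $N_2(s) \ge (1+\delta)\tfrac{\log s}{\kl(\mu_2,\mu_1)}$. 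Since $N_2$ is nondecreasing and only increments when arm $2$ is drawn, this forces $N_2(t) \le (1+\delta)\tfrac{\log t}{\kl(\mu_2,\mu_1)} + \OH(1)$, hence $\le (1+2\delta)\tfrac{\log t}{\kl(\mu_2,\mu_1)}$, for all large $t$.

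The lower bound (no lag) is the main obstacle, since it needs arm $2$ to be drawn \emph{often enough}. I would pass to the draw times $\sigma_n := \tau_n$ of arm $2$ and establish $\limsup_n \tfrac1n\log\sigma_n \le (1+c(\epsilon))\kl(\mu_2,\mu_1)$ almost surely. On the good event, during the phase $[\sigma_m+1,\sigma_{m+1}]$ on which $N_2\equiv m$, each round draws arm $2$ with conditional probability at least $e^{-m(1+c(\epsilon))\kl(\mu_2,\mu_1)}$, so the phase length is stochastically dominated by a geometric variable with that parameter; composing these dominations across phases (past the a.s.-finite time at which the asymptotic regime sets in) bounds $\sigma_n$ stochastically by a sum of independent geometrics of mean $\Theta(e^{n(1+c(\epsilon))\kl(\mu_2,\mu_1)})$, and Markov/Chebyshev plus Borel--Cantelli give $\sigma_n \le n^2 e^{n(1+c(\epsilon))\kl(\mu_2,\mu_1)}$ eventually. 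Taking $m(t) := \lceil (1-\delta)\tfrac{\log t}{\kl(\mu_2,\mu_1)}\rceil$ and $\epsilon$ small enough (given $\delta$) that $(1+c(\epsilon))(1-\delta) < 1$, this yields $\sigma_{m(t)} < t$ for all large $t$, i.e.\ $N_2(t) \ge (1-\delta)\tfrac{\log t}{\kl(\mu_2,\mu_1)}$. Combining the two sides and letting $\epsilon$ (hence $\delta$) go to zero proves the lemma.

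The subtlety I expect to be most delicate, besides the stopping-time bookkeeping around the random burn-in time, is measurability: the event $G_t^\epsilon$ of \cref{lemma:ts sampling rates} involves the freshly sampled $\theta_1(t)$ and is not $\sigma(H_t)$-measurable, so to lower bound the \emph{conditional} draw probability $\Pr(A_t = 2\mid H_t)$ phase by phase one must first replace $G_t^\epsilon$ by an $\sigma(H_t)$-measurable event (such as $\{|\hat\mu_1(t)-\mu_1| \text{ small}\}\cap\{N_1(t) > t^b\}$) on which one still certifies, via the Beta--Binomial trick and \cref{lemma:sanov} exactly as in the proof of \cref{lemma:ts sampling rates}, that the posterior of arm $1$ is concentrated below $\mu_1+\epsilon$ with probability bounded away from $0$, and hence that $\Pr(A_t = 2\mid H_t)$ indeed lies in the required band.
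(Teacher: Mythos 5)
Your proposal is correct in substance, and its key half takes a genuinely different route from the paper. For the overshoot direction you apply Borel--Cantelli to single draws: each event $\{A_s=2\}$ occurring while $N_2(s)\ge(1+\delta)\log(s)/\kl(\mu_2,\mu_1)$ has probability $\le s^{-(1+\delta)(1-c(\epsilon))}$ on the good event, which is summable; the paper instead argues that an overshoot at time $t$ forces at least $c\lambda_t$ draws of arm $2$ above the threshold within $[(1+2c)\lambda_t,t]$ and bounds the product of these per-draw probabilities together with a combinatorial union bound. Your version is shorter and avoids the binomial factors. For the lag direction (the harder half) the paper again counts within a window: if $N_2(s)<\lambda_s$ then arm $1$ is drawn at roughly $s/2$ rounds of $[\tfrac12 s,s]$ while $N_2$ is small, each with probability $\le 1-e^{-(1+c)\log(s)/(1+2c)}$, giving a doubly-exponentially small summand; you instead pass to the inter-draw waiting times of arm $2$, dominate each phase at $N_2\equiv m$ by a geometric of parameter $e^{-m(1+c(\epsilon))\kl(\mu_2,\mu_1)}$, conclude $\sigma_n\le n^2e^{n(1+c(\epsilon))\kl(\mu_2,\mu_1)}$ eventually, and invert. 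Both are sound; the waiting-time route gives a more quantitative handle on the draw times, at the price of the bookkeeping you yourself flag: the domination must be set up through conditional tail bounds given $\sigma(H_t)$ (e.g.\ by intersecting with ``the good event holds for all $t\ge T_0$'' for deterministic $T_0$ and letting $T_0\to\infty$), since the phases are neither independent nor started at deterministic times. Your measurability remark is also on point: $G_t^\epsilon$ of \cref{lemma:ts sampling rates} involves the fresh sample $\theta_1(t)$, so the per-round lower bound on $\Pr(A_t=2\mid H_t)$ must be re-derived on an $\sigma(H_t)$-measurable event via the Beta--Binomial trick and \cref{lemma:sanov}, exactly as you describe; the paper's window-counting argument faces the same conditioning issue and handles it in the same spirit. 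With that fix and the $\epsilon\to0$ step you already include, your argument establishes the lemma.
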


\begin{proof}
    Let $\delta > 0$.
    Denote $c \equiv c(\epsilon)$ and $k_0 \equiv \kl(\mu_2, \mu_1)$ for short, and choose $\epsilon$ small enough so that $\frac 1{1+2c(\epsilon)} > 1-\delta$ and $3c(\epsilon) < \delta$. 

    \STEP{1}
    We show that the event $(N_2(t) \ge \frac{\log(t)}{(1+2c)k_0})$ holds eventually. 
    We proceed by considering the complementary event and denote $\lambda_t := \frac{\log(t)}{(1+2c)k_0}$.
    From Lemma~\ref{lemma:ts sampling rates} we also know that 
    $$
        \E\brackets{
            \indicator{A_s \ne 2} 
            \mid
            G_s^\epsilon, 
            n_\epsilon \le N_2(s) \le \lambda_t
        }
        \le 
        1 - e^{-\frac{(1+c)\log(t)}{1+2c}}.
    $$
    Denote $E_s^\epsilon := G_s^\epsilon \cap \parens{n_\epsilon \le N_2(s) \le \lambda_s}$ for short.
    Remark that if $N_2(s) < \lambda_s$, then the arm $a = 2$ has been sampled less than $\lambda_s$ times over $[\tfrac 12 s, s]$ with $N_2(u) < \lambda_s$ each time. 
    Said differently, there exists $\Lambda$ a subset of $[\tfrac 12s, s]$ with at least $\tfrac s2 - \lambda_s$ elements (we write $\Lambda \subseteq_{s/2 - \lambda_s} [\tfrac 12s, s]$) such that for all $i \in \Lambda$, $A_i = 1$.
    Therefore, where $F_j^\epsilon$ below is a shorthand for $(A_j=1, N_2(j) < \lambda_s, E_j^\epsilon)$, we have:
    \begin{align*}
        &\Pr\parens{
            \forall t, \exists s \ge t:
            N_2(s) < \lambda_s
        }
        \\
        & = \lim_{t \to \infty}
        \Pr \parens{
            \exists s \ge t:
            N_2(s) < \lambda_s
        }
        \\
        & \le 
        \lim_{t \to \infty}
        \Pr \parens{
            \exists s \ge t, \exists \Lambda \subseteq_{s/2 - \lambda_s} [\tfrac 12s, s],
            \forall i \in \Lambda:
            A_i = 1, N_2(i) < \lambda_s
        }
        \\
        & =
        \lim_{t \to \infty}
        \Pr \parens{
            \exists s \ge t, \exists \Lambda \subseteq_{s/2 - \lambda_s} [\tfrac 12s, s],
            \forall i \in \Lambda:
            A_i = 1, N_2(i) < \lambda_s, E_i^\epsilon
        }
        \\
        & = 
        \lim_{t \to \infty}
        \sum\nolimits_{s \ge t}
        \sum\nolimits_{I \subseteq_{s/2-\lambda_s} [\frac 12s, s]}
        \prod\nolimits_{i \in I}
        \Pr \parens{
            A_i = 1, N_2(i) < \lambda_s, E_i^\epsilon
            \mid \forall j < i \in I: F_j^\epsilon
        }
        \\
        & \le 
        \lim_{t \to \infty}
        \sum\nolimits_{s \ge t}
        \sum\nolimits_{I \subseteq_{s/2-\lambda_s} [\frac 12s, s]}
        \prod\nolimits_{i \in I}
        \Pr \parens{
            A_i = 1
            \mid N_2(i) < \lambda_s, E_i^\epsilon, (\forall j < i \in I: F_j^\epsilon)
        }
        \\
        & \le 
        \lim_{t \to \infty}
        \sum\nolimits_{s \ge t}
        \sum\nolimits_{I \subseteq_{s/2-\lambda_s} [\frac 12s, s]}
        \parens{1 - e^{- \frac{(1+c)\log(t)}{1+2c}}}^{\frac t2 - \frac{\log(t)}{(1+2c)k_0}}
        \\
        &\le
        \lim_{t \to \infty}
        \sum\nolimits_{s \ge t}
        \binom{t/2}{\tfrac{\log(t)}{(1+2c)k_0}}
        \parens{1 - e^{- \frac{(1+c)\log(t)}{1+2c}}}^{\frac t2 - \frac{\log(t)}{(1+2c)k_0}}.
    \end{align*}
    Using standard equivalents, the summand happens to be asymptotically upper bounded by
    \begin{align*}
        e^{C \log^2(t)}
        \cdot
        e^{-C t^{1 - \frac{1+c}{1+2c}}} 
        & \lesssim
        e^{-C' t^{\frac{c}{1+2c}}} 
    \end{align*}
    where $C > C' > 0$. 
    This term has finite sum. 
    We conclude has follows:
    \begin{align*}
        \Pr\parens{\forall t, \exists s \ge t: N_2(t) < \tfrac{\log(s)}{(1+2c)k_0}}
        & \le 
        \lim_{t \to \infty} \sum\nolimits_{s \ge t} e^{-C's \frac{c}{1+2c}} = 0.
    \end{align*}

    \STEP{2}
    We show that the event $(N_2(t) \le \frac{(1+3c)\log(t)}{k_0})$ holds eventually. 
    Again, we consider the complementary event and denote $\lambda_t := \frac{1}{k_0}\log(t)$. 
    By Lemma~\ref{lemma:ts sampling rates},
    $$
        \E \brackets{
            \indicator{A_s = 2}
            \mid 
            G_s^\epsilon, 
            n_\epsilon \le N_2(s),
            (1+2c)\lambda_t < N_2(s)
        }
        \le
        e^{-(1+c-2c^2)\log(t)}.
    $$
    Let $E_s^\epsilon := (G_s^\epsilon) \cap (n_\epsilon \le N_2(s)) \cap ((1+2c)\lambda_t < N_2(s))$ for short.
    Note that if $N_2(t) > (1+3c)\lambda_t$, then it has been sampled at least $c \lambda_t)$ times with $N_2(s) > (1+2c)\lambda_t$ over the time interval $[(1+2c)\lambda_t, t]$.
    So, there exists $\Lambda$ a subset of $[(1+2c)\lambda_t, t]$ of size at most $c\lambda_t$ (we write $\Lambda \subseteq_{c\lambda_t} [(1+2c)\lambda_t, t]$) such that for all $i \in \Lambda$, $A_i = 2$. 
    Therefore, and where $F_j^\epsilon$ below is a shorthand for $(A_j=2, N_2(j)>(1+2c)\lambda_j, E_j^\epsilon)$, we have
    \begin{align*}
        & \Pr \parens{
            \forall t, \exists s \ge t:
            N_2(s) > (1+3c)\lambda_s
        }
        \\
        & =
        \lim_{t \to \infty}
        \Pr \parens{
            \exists s \ge t:
            N_2(s) > (1+3c)\lambda_s
        }
        \\
        & \le
        \lim_{t \to \infty}
        \Pr \parens{
            \exists s \ge t,
            \exists \Lambda \subseteq_{c\lambda_s} [(1+2c)\lambda_s, s],
            \forall i \in \Lambda:
            A_i = 2,
            N_2(i) > (1+2c)\lambda_s
        }
        \\
        & =
        \lim_{t \to \infty}
        \Pr \parens{
            \exists s \ge t,
            \exists \Lambda \subseteq_{c\lambda_s} [(1+2c)\lambda_s, s],
            \forall i \in \Lambda:
            A_i = 2,
            N_2(i) > (1+2c)\lambda_s,
            E_i^\epsilon
        }
        \\
        & = 
        \lim_{t \to \infty}
        \sum\nolimits_{s \ge t}
        \sum\nolimits_{I \subseteq_{c\lambda_s}[(1+2c)\lambda_s, s]}
        \prod\nolimits_{i \in I}
        \Pr \parens{
            A_i = 2,
            N_2(i) > (1+2c)\lambda_s,
            E_i^\epsilon
            \mid 
            \forall j < i \in I: F_j^\epsilon
        }
        \\
        & = 
        \lim_{t \to \infty}
        \sum\nolimits_{s \ge t}
        \sum\nolimits_{I \subseteq_{c\lambda_s}[(1+2c)\lambda_s, s]}
        \prod\nolimits_{i \in I}
        \Pr \parens{
            A_i = 2
            \mid
            N_2(i) > (1+2c)\lambda_s,
            E_i^\epsilon,
            (\forall j < i \in I: F_j^\epsilon)
        }
        \\
        & \le
        \lim_{t \to \infty}
        \sum\nolimits_{s \ge t}
        \binom{s}{\tfrac{c\log(s)}{k_0}}
        e^{-(1+c-2c^2)\log(s)\cdot \frac{c}{k_0}\log(s)}.
    \end{align*}
    Using standard equivalents, the summand is asymptotically upper bounded by
    $$
        e^{\parens{\frac{c}{k_0} + o(1)}\log^2(t)}
        \cdot
        e^{-(1 + c - 2c^2)\frac{c}{k_0}\log^2(t)} 
        =
        e^{-(c - 2c^2 + o(1))\frac c{k_0}\log^2(t)}.
    $$
    Again, this has finite sum. 
    We conclude:
    \begin{align*}
        \Pr \parens{
            \forall t, \exists s \ge t:
            N_2(s) > (1+3c)\lambda_s
        }
        & = 0.
    \end{align*}
    This concludes the proof. 
\end{proof}

\subsection{Proof of \cref{theorem:ts sliding regret}}

\begin{proof}[Proof of \cref{theorem:ts sliding regret}]
    We conclude that Thompson Sampling has optimal sliding regret. 
    Fix $T \ge 1$.
    Combining Lemma~\ref{lemma:ts sampling rates} and Lemma~\ref{lemma:ts almost sure rates}, we see that for all $\epsilon > 0$, there exists a sequence of events $(E_t^\epsilon)$ with $\Pr(\liminf E_t^\epsilon)=1$, and such that:
    $$
        \Pr\parens{A_t = 2 \mid E_t^\epsilon} \le e^{-(1-\epsilon)\log(t)} = \frac 1{t^{1- \epsilon}}.
    $$
    Since all arms are visited infinitely often, eventually $T$ is negligible in front of $N_2(t)$, meaning that for all partial history $H_{t:t+h} = h_{t:t+h}$ over $[t, t+h]$ (with $h \le T$), we will have
    $$
        \Pr \parens{A_{t+h} = 2 \mid E_t^\epsilon, H_{t:t+h}=h_{t:t+h}} \le \frac 1{t^{1-\epsilon}}. 
    $$
    Conclude with \cref{theorem:agnosticity}.
\end{proof}

\section{Analysis of UCB}

\label{section:ucb analysis}

\subsection{The Asymptotic Regime of UCB}

\textbf{Proposition~\ref{proposition:asymptotic ucb}}
{\it 
    For all $\epsilon > 0$ and when running UCB, both of the following hold:
    \begin{itemize}
        \item[(1)] 
            $\Pr(\exists t, \forall s \ge t: \forall a, \abs{\mu_a(s) - \mu_a(s)} < \epsilon) = 1$;

        \item[(2)]
            $\Pr\parens{
                \exists t, \forall s \ge t:
                \abs{N_2(t) - 2\parens{\tfrac 1{\mu_1-\mu_2}}^2\log(t)}
                < \epsilon \cdot 2\parens{\tfrac 1{\mu_1-\mu_2}}^2\log(t)
            } = 1.$
    \end{itemize}
}

\begin{proof}[Proof of Proposition~\ref{proposition:asymptotic ucb}]
    Because UCB has sublinear expected regret, all arms are visited infinitely often by Proposition~\ref{proposition:consistent policies}, hence by the Strong Law of Large numbers, the empirical estimates of every arm converge to their true means. 
    This proves Proposition~\ref{proposition:asymptotic ucb}.1. 
    We will denote $E_t^\epsilon := (\forall a: |\hat \mu_a(t) - \mu_a| < \epsilon)$. 
    We now focus on the proof of Proposition~\ref{proposition:asymptotic ucb}.2.
    Denote $\lambda_t := \frac2{(\mu_1-\mu_2)^2}\log(t)$ the theoretical visit rate of arm $2$

    \STEP{1}
    Let $\epsilon > 0$. 
    We show that the event $(N_2(t) > (1-\epsilon)\lambda_t)$ holds eventually. 
    As usual, we proceed by considering the complementary event. 
    Let $\delta > 0$.
    Remark that if the arm $a=2$ has been visited less than $(1-\epsilon)\lambda_s$ times, then the other arm $a=1$ must have been pulled within the time range $\set{s - \lambda_s - 1, s}$, hence within $[\tfrac 12s, s]$ provided that $s$ is large enough. 
    Since $\lambda_s = o(s)$, we can in addition assume that when $a=1$ is pulled, $N_1(s) \ge \tfrac 12s$. 
    Accordingly, and denoting $F_u^\epsilon := (N_2(u) \le (1-\epsilon)\lambda_s) \cap (N_1(u) \ge \tfrac 12 s)$ for short, we have:
    \begin{align*}
        &\Pr\parens{
            \forall t, \exists s \ge t:
            N_2(s) \le (1-\epsilon)\lambda_s
        }
        \\
        & =
        \lim_{t \to \infty}
        \Pr \parens{
            \exists s \ge t:
            N_2(s) \le (1-\epsilon)\lambda_s,
        }
        \\
        & \le
        \lim_{t \to \infty}
        \Pr \parens{
            \exists s \ge t, \exists u \in [\tfrac 12s, s]:
            N_2(u) \le (1-\epsilon)\lambda_s,
            N_1(u) \ge \tfrac 12s, A_u = 1
        }
        \\
        & =
        \lim_{t \to \infty}
        \Pr \parens{
            \exists s \ge t, \exists u \in [\tfrac 12s, s]:
            F_u^\epsilon,
            E_u^\delta,
            \hat \mu_2(u) + \sqrt{\frac{2\log(u)}{N_2(u)}}
            \le
            \hat \mu_1(u) + \sqrt{\frac{2\log(u)}{N_1(u)}}
        }
        \\
        & \le
        \lim_{t \to \infty}
        \Pr \parens{
            \exists s \ge t, \exists u \in [\tfrac 12s, s]:
            \mu_2 - \delta + \sqrt{\frac{2\log(\tfrac 12s)}{\frac{2(1-\epsilon)}{(\mu_1-\mu_2)^2}\log(s)}}
            \le
            \mu_1 + \delta + \sqrt{\frac{2\log(s)}{\tfrac 12s}}
        }
        \\
        & \le
        \lim_{t \to \infty}
        \indicator{
            \frac{\mu_1-\mu_2}{\sqrt{1-\epsilon}} 
            \cdot
            \sqrt{\frac{\log(\tfrac 12t)}{\log(t)}}
            \le
            \mu_1 - \mu_2 + 3\delta
        }.
    \end{align*}
    In the above, $\delta > 0$ can be chosen arbitrarily small. 
    Since $\sqrt{1-\epsilon} < 1$, we see that by choosing $\delta$ small regarding $\epsilon$, we obtain $\Pr(\forall t, \exists s \ge t: N_2(s) \le (1-\epsilon)\lambda_s) = 0$. 

    \STEP{2}
    Let $\epsilon > 0$. 
    We now show that the event $(N_2(t) < (1+\epsilon)\lambda_t)$ holds eventually.
    Let $\delta > 0$.
    Observe that if $N_2(s) \ge (1+\epsilon)\lambda_s$, then arm $2$ must have been pulled within the time range $\set{(1+\epsilon)\lambda_s, \ldots, s}$ with $N_2(u) \ge (1+\epsilon)\lambda_s$.  
    Following this idea, we obtain:
    \begin{align*}
        & \Pr \parens{
            \forall t, \exists s \ge t:
            N_2(s) \ge (1+\epsilon)\lambda_s
        }
        \\
        & \le
        \lim_{t \to \infty}
        \Pr \parens{
            \exists s \ge t, \exists u \in [(1+\epsilon)\lambda_s, s]:
            N_2(u) \ge (1+\epsilon) \lambda_s, A_u = 2
        }
        \\
        & =
        \lim_{t \to \infty}
        \Pr \parens{
            \exists s \ge t, \exists u \in [(1+\epsilon)\lambda_s, s]:
            E_u^\delta,
            N_2(u) \ge (1+\epsilon) \lambda_s, A_u = 2
        }
        \\
        & \le
        \lim_{t \to \infty}
        \Pr \parens{
            \exists s \ge t, \exists u \in [(1+\epsilon)\lambda_s, s]:
            \mu_2 + \delta + (\mu_1-\mu_2)\sqrt{\frac{\log(u)}{(1+\epsilon)\log(s)}}
            \ge \mu_1 - \delta
        }
        \\
        & \le
        \lim_{t \to \infty}
        \indicator {
            \frac{\mu_1-\mu_2}{\sqrt{1+\epsilon}}
            \ge 
            \mu_1 - \mu_2 - 2 \delta
        }.
    \end{align*}
    The above indicator is asymptotically $0$ when $\delta$ is small enough. 
\end{proof}

\subsection{The Sliding Regret of UCB: Proof of Lemma~\ref{lemma:ucb non agnosticity}}

    As given by Proposition~\ref{proposition:asymptotic ucb}, the asymptotic regime is denoted 
    $$
        E^\epsilon_t 
        := 
        (\forall a: |\hat \mu_a(t) - \mu_a| < \epsilon)
        \cap 
        \parens{
            N_2(t) = \tfrac{2\cdot(1\pm\epsilon)}{(\mu_1-\mu_2)^2}\log(t)
        }
    $$
    Denote $I_a(t) := \hat \mu_a(t) + \sqrt{2\log(t)/N_a(t)}$ UCB's index of arm $a$. 

\begin{lemma}
    \label{lemma:ucb key deviations}
    Let $G_{t:t+h} := (\forall i < h, A_{t+i}=2)$.
    For all $\epsilon > 0$ and $h \ge 1$, there exists $\delta, T > 0$ such that, for all $t > T$ and on $E_t^\delta \cap G_{t:t+h}$, we have:
    $$
        \abs{
            \parens{I_1(t+h) - I_2(t+h)}
            -
            \parens{
                I_1(t) - I_2(t)
                - \frac{\sum_{i<h} (R_{t+i} - \mu_2 - \frac{\mu_1-\mu_2}2)}{N_2(t)}
            }
        }
        \le \frac{2h\epsilon}{N_2(t)}.
    $$
\end{lemma}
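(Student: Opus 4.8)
The plan is to expand $I_1(t+h)-I_1(t)$ and $I_2(t+h)-I_2(t)$ explicitly on $G_{t:t+h}$, and then verify that every deviation from the claimed first-order formula is $\OH(\epsilon h/N_2(t))$ once $\delta$ is small and $t$ is large. The bookkeeping is immediate: on $G_{t:t+h}$ arm $1$ is never pulled over $\set{t,\ldots,t+h-1}$, so $N_1(t+h)=N_1(t)$ and $\hat\mu_1(t+h)=\hat\mu_1(t)$, whereas $N_2(t+h)=N_2(t)+h$ and $\hat\mu_2(t+h)=(S_2(t)+\sum_{i<h}R_{t+i})/(N_2(t)+h)$. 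Hence $I_1(t+h)-I_1(t)$ is just the change of the single term $\sqrt{2\log(\cdot)/N_1(t)}$; using $\sqrt{\log(t+h)}-\sqrt{\log t}\le\frac{h/t}{2\sqrt{\log t}}$ together with the fact that on $E_t^\delta$ one has $N_2(t)=\OH(\log t)$ (Proposition~\ref{proposition:asymptotic ucb}), hence $N_1(t)\ge t/2$ for $t$ large, I would get $0\le I_1(t+h)-I_1(t)=\OH\!\left(\frac{h}{t^{3/2}\sqrt{\log t}}\right)$, which is $\oh(1/N_2(t))$, so arm $1$'s index barely moves.

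The substance is the expansion of $I_2(t+h)-I_2(t)$, split into the empirical-mean change and the confidence-radius change. For the former, the identity $\hat\mu_2(t+h)-\hat\mu_2(t)=\dfrac{\sum_{i<h}R_{t+i}-\hat\mu_2(t)h}{N_2(t)+h}$ lets me replace $N_2(t)+h$ by $N_2(t)$ at the cost of $\OH(h^2/N_2(t)^2)$ and $\hat\mu_2(t)$ by $\mu_2$ at the cost of $\OH(\delta h/N_2(t))$ on $E_t^\delta$, producing the term $\frac1{N_2(t)}\sum_{i<h}(R_{t+i}-\mu_2)$. For the latter, I would vary $N_2$ and the time argument one at a time: varying $N_2(t)\mapsto N_2(t)+h$ yields $-\frac{h}{2N_2(t)}\sqrt{2\log t/N_2(t)}+\OH(h^2/N_2(t)^2)$, and on $E_t^\delta$ the factor $\sqrt{2\log t/N_2(t)}$ lies between $(\mu_1-\mu_2)/\sqrt{1+\delta}$ and $(\mu_1-\mu_2)/\sqrt{1-\delta}$, so it equals $(\mu_1-\mu_2)(1+\OH(\delta))$ because $N_2(t)=\frac{2(1\pm\delta)}{(\mu_1-\mu_2)^2}\log t$ there; varying $\log t\mapsto\log(t+h)$ contributes only $\OH(h/(t\log t))$. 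Summing, $I_2(t+h)-I_2(t)=\frac1{N_2(t)}\sum_{i<h}\!\left(R_{t+i}-\mu_2-\frac{\mu_1-\mu_2}{2}\right)+\OH\!\left(\frac{\delta h}{N_2(t)}\right)+\OH\!\left(\frac{h^2}{N_2(t)^2}\right)+\OH\!\left(\frac{h}{t\log t}\right)$, with constants depending only on $\mu_1,\mu_2$.

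Combining via $(I_1-I_2)(t+h)-(I_1-I_2)(t)=(I_1(t+h)-I_1(t))-(I_2(t+h)-I_2(t))$, the first-order part of $I_2(t+h)-I_2(t)$ is exactly the quantity already subtracted in the lemma statement, so the residual inside the absolute value equals $(I_1(t+h)-I_1(t))$ minus the three error terms above. Given $\epsilon$ and $h$, I would first pick $\delta$ small enough that the $\OH(\delta h/N_2(t))$ term is at most $\epsilon h/N_2(t)$, and then pick $T$ large enough that for $t>T$ on $E_t^\delta$ each of $\OH(h^2/N_2(t)^2)$, $\OH(h/(t\log t))$ and $I_1(t+h)-I_1(t)=\OH(h/(t^{3/2}\sqrt{\log t}))$ is at most $\frac13\cdot\frac{\epsilon h}{N_2(t)}$ --- legitimate since $N_2(t)\to\infty$ and $N_2(t)=\Theta(\log t)$, which is $\oh(t\log t)$, on $E_t^\delta$. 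The residual is then $\le 2\epsilon h/N_2(t)$, as required.

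The main obstacle is the clean conversion of the confidence-radius change into the drift $-\frac{\mu_1-\mu_2}{2}\cdot\frac{h}{N_2(t)}$: this is exactly where the sharp asymptotic $N_2(t)\sim\frac{2}{(\mu_1-\mu_2)^2}\log t$ of Proposition~\ref{proposition:asymptotic ucb} is needed (merely $N_2(t)=\Theta(\log t)$ would not pin down the constant), and one must be disciplined about which residuals are $\oh(1/N_2(t))$, hence killable by enlarging $t$, versus genuinely $\OH(\delta/N_2(t))$, hence killable only by shrinking $\delta$.
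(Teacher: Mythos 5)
Your proposal is correct and follows essentially the same route as the paper's proof: isolate the negligible variation of arm $1$'s index (only its $\log$ term moves since $N_1(t+h)=N_1(t)$ on $G_{t:t+h}$), expand arm $2$'s empirical-mean change and its bonus change separately, use $E_t^\delta$ with $N_2(t)=\tfrac{2(1\pm\delta)}{(\mu_1-\mu_2)^2}\log t$ to convert the bonus decrement into the drift $-\tfrac{(\mu_1-\mu_2)h}{2N_2(t)}$, and then tune $\delta$ and $T$ to absorb all residuals into $\tfrac{2h\epsilon}{N_2(t)}$. Your bookkeeping of which error terms are $\OH(\delta/N_2(t))$ versus $\oh(1/N_2(t))$ is in fact slightly more explicit than the paper's.
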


\begin{proof}
    Fix $\epsilon > 0$.
    (\textbf{STEP 1})
    The time variations of UCB's indexes are given by:
    $$
        I_a(t+h) - I_a(t)
        = 
        \parens{\hat\mu_a(t+h) - \hat\mu_a(t)}
        + 
        \parens{\sqrt{\frac{2\log(t+h)}{N_a(t+h)}} - \sqrt{\frac{2\log(t)}{N_a(t)}}}
    $$
    This is split into two terms. 
    There is the variation of the empirical estimate, and the variation of the optimistic bonus. 
    Considering arm $1$, since $N_1(t+h) = N_1(t)$ on $G_{t:t+h}$, we get, when $t\to\infty$, 
    \begin{align*}
        I_1(t+h) - I_1(t)
        & = 
        \sqrt{\frac 2{N_1(t)}}
        \parens{\sqrt{\log(t+h)} - \sqrt{\log(t)}}
        \\
        & \sim
        \frac{\frac ht}{\sqrt{2\log(t)N_1(t)}}
        = \oh\parens{\frac{h}{t\sqrt{N_1(t)}}}.
    \end{align*}
    This will appear to be negligible in comparison to $I_2(t+h) - I_2(t)$. 

    (\textbf{STEP 2})
    We know bound the variations of the empirical estimates of arm $2$. 
    % Denote $U_i := R_{\tau + i}$, which is $1$ on $G_{t, h}$.
    We have:
    \begin{align*}
        \hat \mu_2(t+h) - \hat \mu_2(t)
        & = \frac{\sum_{i < h} R_{t+i} - h \hat \mu_2(t)}{N_2(t)+h}
        = \frac{\sum_{i<h} (R_{t+i}-\mu_2)}{N_2(t)+h} + \frac{h(\mu_2 - \hat \mu_2(t))}{N_2(t)+h}.
    \end{align*}
    Because arms are visited infinitely often, we have $\mu_2 - \mu_2(t) < \epsilon$ eventually, with $\epsilon > 0$ fixed. 
    Since, $N_2(t) + h \sim N_2(t)$, hence, when $t \to \infty$ and for $\delta > 0$ small regarding $\epsilon$, on $E_t^\delta \cap G_{t:t+h}$, we have:
    $$
        \abs{
            \hat \mu_2(t+h) - \hat \mu_2(t) 
            -
            \frac{\sum_{i<h} R_{t+i} - \mu_2}{N_2(t)}
        }
        \le 
        \frac{h\epsilon}{N_2(t)}
        \quad\text{a.s.}
    $$

    (\textbf{STEP 3})
    We now bound the variation of the optimistic bonus of arm $2$, 
    \begin{align*}
        \sqrt{\frac{2\log(t+h)}{N_2(t+h)}} - \sqrt{\frac{2\log(t)}{N_2(t)}}
        & = 
        \sqrt{\frac{2\log(t+h)}{N_2(t+h)}} - \sqrt{\frac{2\log(t)}{N_2(t+h)}}
        + 
        \sqrt{\frac{2\log(t)}{N_2(t+h)}} - \sqrt{\frac{2\log(t)}{N_2(t)}}
        \\
        & \sim 
        \oh \parens{\frac h{tN_2(t)}} +
        \frac{h}{2N_2(t)}\sqrt{\frac{2\log(t)}{N_2(t)}}.
    \end{align*}
    Provided that $\delta > 0$ is small enough, we have on $E_t^\delta$:
    $$
        \abs{
            \sqrt{\frac{2\log(t+h)}{N_2(t+h)}} - \sqrt{\frac{2\log(t)}{N_2(t)}}
            -
            \frac{h(\mu_1 - \mu_2)}{2N_2(t)}
        }
        \le
        \frac{h\epsilon}{N_2(t)}
    $$

    (\textbf{STEP 4})
    All together, on $E_t^\delta \cap G_{t:t+h}$, we have:
    $$
        I_1(t+h) - I_2(t+h)
        =
        I_1(t) - I_2(t)
        - \frac{\sum_{i<h} (R_{t+i} - \mu_2 - \frac{\mu_1-\mu_2}2)}{N_2(t)}
        \pm \frac{2h\epsilon}{N_2(t)}.
    $$
    This proves the claim. 
\end{proof}

We prove Lemma~\ref{lemma:ucb non agnosticity} as a corollary below. 
\bigskip

\noindent\textbf{Lemma~\ref{lemma:ucb non agnosticity}}
{
    \it
    Consider running UCB, and fix $h > 0$. 
    There exists a sequence of events indexed by exploration episodes $(E_{\tau_k})$ with $\Pr(\liminf_k E_{\tau_k}) = 1$, such that, for all sequence $(U_t: t\ge1)$ of $\sigma(H_t)$-measurable events:
$$
    \Pr \parens{\forall i < h: A_{\tau_k+i} = 2 \mid E_{\tau_k}, U_{\tau_k}}
    \ge 
    \mu_2^h.
$$
}
\begin{proof}
    Fix $h > 0$ and let $\delta(h), T(h) > 0$ as given by Lemma~\ref{lemma:ucb key deviations} for some arbitrary $\epsilon > 0$. 
    Let $G'_{t:t+h} := (\forall i < h, R_{t+i}=1)$, stating that every arm pull over $[t, t+h)$ provides full reward.
    On $E_t^{\delta(h)} \cap G_{t:t+h} \cap G'_{t:t+h}$ with $t \ge T(h)$, we have:
    $$
        I_1(t+h) 
        \le 
        I_2(t+h) 
        + \parens{I_1(t) - I_2(t)}
        + \frac{\sum_{i<h} \parens{\mu_2 + \frac{\mu_1-\mu_2}{2} + \epsilon - 1}}{N_2(t)}
        .
    $$
    For $t \equiv \tau_k$ an exploration episode with $\tau_k$, as $I_1(\tau_k) \le I_2(\tau_k)$ (by definition), we obtain:
    $$
        I_1(\tau_k+h) 
        \le 
        I_2(\tau_k+h) 
        + \frac{\sum_{i<h} \parens{\mu_2 + \frac{\mu_1-\mu_2}{2} + \epsilon - 1}}{N_2(\tau_k)}
        .
    $$
    We see that taking $\epsilon < \frac{\mu_1-\mu_2}2$, the summand is always negative, and as a consequence, $I_1(\tau_k+h) \le I_2(\tau_k+h)$.
    So on $\indicator{\tau_k>T(h)} \cap E_{\tau_k}^{\delta(h)}$, if every pull of the suboptimal arm $a=2$ over $[\tau_k, \tau_k+h)$ provides a full reward $R_t=1$, then $A_{\tau_k+h}=1$. 
    More formally:
    $$
        \indicator{\tau_k \ge T(h)} \cap E_{\tau_k}^{\delta(h)} \cap G_{\tau_k:\tau_k+h} \cap G'_{\tau_k:\tau_k+h}
        =
        \indicator{\tau_k \ge T(h)} \cap E_{\tau_k}^{\delta(h)} \cap G_{\tau_k:\tau_k+h+1} \cap G'_{\tau_k:\tau_k+h}
    $$
    Now choose $\delta := \min_{i \le h} \delta(i)$ and $T := \max_{i \le h} T(i)$. 
    The event $E_{\tau_k} := E_{\tau_k}^\delta \cap \indicator{\tau_k\ge T}$ is $\sigma(H_{\tau_k})$-measurable, and we see that:
    \begin{align*}
        &\Pr(\forall i < h: A_{\tau_k+i} = 2 \mid E_{\tau_k}, U_{\tau_k})
        \\
        &\ge
        \Pr(\forall i < h: A_{\tau_k+i} = 2, R_{\tau_k+i}=1 \mid E_{\tau_k}, U_{\tau_k})
        \\
        & =
        \prod\nolimits_{i<h}
        \Pr \parens{
            R_{\tau_k+i}=1
            \mid A_{\tau_k+i}=2
        }
        \Pr \parens{
            A_{\tau_k+i}=2
            \mid 
            G_{\tau_k:\tau_k+i}, G'_{\tau_k:\tau_k+i},
            E_{\tau_k}, U_{\tau_k}
        }
        \\
        & =
        \prod\nolimits_{i<h}
        \Pr \parens{
            R_{\tau_k+i}=1
            \mid A_{\tau_k+i}=2
        }
        \\
        & = \mu_2^h. 
    \end{align*}
    Moreover, because $\tau_k < \tau_{k+1}$, the event $(\tau_k>T)$ is eventually true as $k \to \infty$, meaning that $\Pr(\liminf_k E_{\tau_k}) = 1$. 
    This establishes the claim. 
\end{proof}

\subsection{Waiting for UCB to Fail: Proof of Proposition~\ref{proposition:ucb monkey}}

\label{section:ucb fails}

We recall the statement below.
\bigskip

\noindent
\textbf{Proposition~\ref{proposition:ucb monkey}}
{
    \it
    Fix $h > 0$ and assume that we are running UCB.
    There exists an increasing sequence of almost-surely finite stopping times $(\sigma_k : k \ge 1)$ s.t.,
    $$
        \Pr(\Reg(\sigma_k; \sigma_k+h) \ge (\mu_1-\mu_2)h) = 1.
    $$
}

\begin{proof}
    Fix $h \ge 0$.
    Let $\ell > \lceil\frac{\mu_1 h}{1-\mu_1}\rceil$ and $\epsilon < \frac{\mu_1-\mu_2}2$.
    Consider $\tau_k$ an exploration episode.
    Assume that $F_{\tau_k:\tau_k+\ell} := (\forall i < \ell: A_{\tau_k+i}=2, R_{\tau_k+i}=1)$ holds, which is of probability at least $\mu_2^\ell$ on the event $E_{\tau_k}$ given by Lemma~\ref{lemma:ucb non agnosticity}. 
    From Lemma~\ref{lemma:ucb key deviations}, almost surely, we have:
    \begin{align*}
        I_1(\tau_k+\ell+i) 
        & \le
        I_2(\tau_k+\ell+i) 
        - \frac{\ell(1-\mu_1)}{N_2(t)}
        + \frac{h\mu_1}{N_2(t)}
        < I_2(\tau_k+\ell+i).
    \end{align*}
    Thus $E_{\tau_k} \cap F_{\tau_k:\tau_k+\ell} \subseteq (\forall i < h: A_{\tau_k+\ell+i} = 2)$ almost surely, and in particular $\Pr(\Reg(\tau_k+\ell;\tau_k+\ell+h) \ge (\mu_1-\mu_2)h \mid E_{\tau_k}, F_{\tau_k:\tau_k+\ell}) = 1$.
    Since $\Pr(F_{\tau_k:\tau_k+\ell} \mid E_{\tau_k}) \ge \mu_2^\ell$ by Lemma~\ref{lemma:ucb non agnosticity}, we deduce by Borel-Cantelli's Lemma that $\Pr(\limsup_k (E_{\tau_k} \cap F_{\tau_k; \tau_k+\ell})) = 1$.
    Hence, define 
    $$
        \sigma_1 := \inf \set{\tau_k+\ell : E_{\tau_k} \cap F_{\tau_k:\tau_k+\ell}},
        \quad
        \sigma_{n+1} := \inf \set{\tau_k+\ell > \sigma_n : E_{\tau_k} \cap F_{\tau_k:\tau_k+\ell}}.
    $$
    We see that $\sigma_n$ is a stopping time.
    Moreover, we have $\Pr(\sigma_n < \infty)$ and $\Pr(\Reg(\sigma_n;\sigma_n+h) \ge (\mu_1-\mu_2)h) = 1$ by construction. 
\end{proof}

\subsection{The Regret of Exploration of UCB: Proof of \cref{theorem:ucb regret of exploration}}

\label{section:ucb exploration}

This section is dedicated to a proof of:
\bigskip

\noindent
\textbf{\cref{theorem:ucb regret of exploration}}
{
    \it
    Let $(X_t:t\ge 1)$ a sequence of i.i.d.~random variables with distribution ${\rm B}(\mu_2)$. 
    Let $\sigma_T$ the stopping time
    $
        T \wedge
        \inf\set{ 
            t \ge 1: 
            -\frac{\mu_1 - \mu_2}2 + \frac 1t \sum_{i=1}^t (X_t - \mu_2) \le 0
        }.
    $
    For all $T\ge 1$, we have $\RegExp({\rm UCB}; T) = \lim_{k \to \infty} \E[\Reg(\tau_k;\tau_k+T)] \ge (\mu_1-\mu_2)\E[\sigma_T]$. 
}

\bigskip
\noindent
    Again, as given by Proposition~\ref{proposition:asymptotic ucb}, the asymptotic regime is denoted 
    $$
        E^\epsilon_t 
        := 
        (\forall a: |\hat \mu_a(t) - \mu_a| < \epsilon)
        \cap 
        \parens{
            N_2(t) = \tfrac{2\cdot(1\pm\epsilon)}{(\mu_1-\mu_2)^2}\log(t)
        }
    $$
    Denote $I_a(t) := \hat \mu_a(t) + \sqrt{2\log(t)/N_a(t)}$ UCB's index of arm $a$, similarly to previous sections. 
    We begin by establishing a variant of Lemma~\ref{lemma:ucb key deviations} for time-periods when only the optimal arm is being pulled. 

\begin{lemma}
    \label{lemma:ucb auxilary key deviations}
    Let $F_{t:t+h} := (\forall i < h, A_{t+i}=1)$.
    Fix arbitrary $\epsilon > 0$ and $h \ge 1$. 
    There exists $\delta, T > 0$ such that, whenever $t > T$ and for all $\ell<h$, on $E_t^\delta \cap F_{t:t+\ell}$ we have:
    $$
        \abs{
            \parens{I_1(t+\ell) - I_2(t+\ell)}
            -
            \parens{
                I_1(t) - I_2(t)
                + \frac{\sum_{i < \ell} (R_{t+i}-\mu_1)}{t}
            }
        }
        \le \frac{2\ell\epsilon}{t}.
    $$
\end{lemma}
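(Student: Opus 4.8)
The plan is to follow the structure of the proof of Lemma~\ref{lemma:ucb key deviations}, but with the roles of the two arms exchanged: during $F_{t:t+\ell}$ it is $N_1$ that grows by one per step while $N_2$ stays frozen, and we work in the asymptotic regime given by Proposition~\ref{proposition:asymptotic ucb}, where $N_1(t) = t-1-N_2(t) \sim t$ and $N_2(t) \sim \frac{2}{(\mu_1-\mu_2)^2}\log(t)$. First I would write, for each arm $a$ and each $\ell < h$,
$$
    I_a(t+\ell) - I_a(t) = \parens{\hat\mu_a(t+\ell) - \hat\mu_a(t)} + \parens{\sqrt{\tfrac{2\log(t+\ell)}{N_a(t+\ell)}} - \sqrt{\tfrac{2\log(t)}{N_a(t)}}},
$$
and bound the empirical-mean piece and the optimistic-bonus piece separately, exactly as in Steps~1--3 of Lemma~\ref{lemma:ucb key deviations}.

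For arm $2$: on $F_{t:t+\ell}$ one has $N_2(t+\ell) = N_2(t)$ and $\hat\mu_2(t+\ell) = \hat\mu_2(t)$, so $I_2(t+\ell) - I_2(t) = \sqrt{2\log(t+\ell)/N_2(t)} - \sqrt{2\log(t)/N_2(t)}$; a first-order expansion bounds this by $\OH\parens{\frac{\log(t+\ell)-\log(t)}{\sqrt{\log(t)\,N_2(t)}}} = \OH\parens{\frac{\ell}{t\log(t)}}$, which is negligible in front of $\ell\epsilon/t$ for $t$ large, using Proposition~\ref{proposition:asymptotic ucb}.2. For arm $1$: the empirical-mean increment decomposes as
$$
    \hat\mu_1(t+\ell) - \hat\mu_1(t) = \frac{\sum_{i<\ell}(R_{t+i}-\mu_1)}{N_1(t)+\ell} + \frac{\ell\parens{\mu_1-\hat\mu_1(t)}}{N_1(t)+\ell};
$$
on $E_t^\delta$ we have $N_1(t)+\ell = t(1\pm\oh(1))$, so the first term equals $\frac{\sum_{i<\ell}(R_{t+i}-\mu_1)}{t}$ up to an error $\OH\parens{\ell\log(t)/t^2} = \oh(\ell/t)$, while the second term is at most $\frac{\ell\delta}{N_1(t)+\ell} \le \frac{2\ell\delta}{t}$, hence $\le \frac{\ell\epsilon}{2t}$ once $\delta$ is chosen small enough. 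Finally, the optimistic-bonus increment of arm $1$, split into its $\log$-driven and its $N_1$-driven parts, is $\OH\parens{\sqrt{\log(t)}\,\ell/t^{3/2}} = \oh(\ell/t)$ since $N_1(t)\sim t$.

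Adding the four contributions and absorbing all the $\oh(\ell/t)$ remainders into $\frac{\ell\epsilon}{t}$, I obtain $\parens{I_1(t+\ell)-I_2(t+\ell)} - \parens{I_1(t)-I_2(t)} = \frac{\sum_{i<\ell}(R_{t+i}-\mu_1)}{t} \pm \frac{2\ell\epsilon}{t}$ for all $t > T$, which is the claim. The main obstacle is purely bookkeeping: one must verify that each leftover term really is $\oh(\ell/t)$ --- in particular the replacement of the denominator $N_1(t)+\ell$ by $t$ in the main term, and the two-step expansions of the bonuses --- and that the thresholds $\delta$ and $T$ can be chosen uniformly over the finitely many values $\ell < h$, so that the bound $2\ell\epsilon/t$ holds simultaneously for all of them.
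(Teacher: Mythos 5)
Your proposal is correct and follows essentially the same route as the paper: the paper's proof simply says to redo the expansion of Lemma~\ref{lemma:ucb key deviations} with the roles of the arms exchanged, using $N_1(t)\sim t$ and $N_2(t)\sim \tfrac{2}{(\mu_1-\mu_2)^2}\log(t)$ to identify $\tfrac{1}{t}\sum_{i<\ell}(R_{t+i}-\mu_1)$ as the dominant term and absorb the arm-2 bonus drift of order $\ell/(t\log t)$ and the other remainders into the $2\ell\epsilon/t$ error. Your version just carries out explicitly the bookkeeping the paper leaves implicit (denominator replacement, bias term bounded via $\delta$, uniformity over $\ell<h$), which is exactly what "quantifying the equivalents with $\epsilon>0$" amounts to.
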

\begin{proof}
    The proof is essentially similar to the one of Lemma~\ref{lemma:ucb key deviations}: Approximate $I_a(t+\ell) - I_a(t)$ using equivalents in the asymptotic regime. 
    Using that $N_1(t) \sim t$ and $N_2(t) \sim \frac 2{(\mu_1-\mu_2)^2}\log(t)$, we find that the dominant term in the variations of $I_1(t+\ell) - I_2(t+\ell)$ with respect to $\ell$ is the one coming from the variations of the best arm's empirical estimate. 
    \begin{align*}
        I_1(t+\ell) - I_1(t)
        & = 
        \frac{\sum_{i<\ell}(R_{t+i} - \mu_1)}{t} 
        % + \frac \ell t\sqrt{\frac{\log(t)}{2t}}
        + \oh\parens{\frac \ell t} 
        \\
        I_2(t+\ell) - I_2(t)
        & =
        \frac{\ell(\mu_1-\mu_2)}{2t\log(t)}
        + \oh\parens{\frac{\ell}{t\log(t)}}.
    \end{align*}
    Quantifying the equivalents with $\epsilon > 0$, we obtain the statement of Lemma~\ref{lemma:ucb auxilary key deviations}. 
\end{proof}

\begin{proof}[Proof of \cref{theorem:ucb regret of exploration}]
    Recall that $(X_t: t \ge 1)$ denotes a sequence of i.i.d.~random variables with distribution ${\rm B}(\mu_2)$.
    Fix $h \ge 1$ and denote the \emph{exploitation episodes} $(\tau'_k)$ as:
    $$
        \tau'_k := \inf\set{t > \tau_k: A_t = 1, A_{t-1}=2}.
    $$
    It is obvious that $\E[\Reg(\tau_k; \tau_k+h)] \le (\mu_1-\mu_2)\E[\min(\tau'_k - \tau_k, h)]$, hence we are ought to bound $\E[\min(\tau'_k - \tau_k, h)]$ which is related to the expected duration of the $k$-th exploration episode clipped to $[0, h]$. 
    From Lemma~\ref{lemma:ucb auxilary key deviations} follows that at the beginning of an exploration episode $\tau_k$ and on $E_{\tau_k}^\epsilon$ for $\epsilon$ (resp.~$\tau_k$) small enough (resp.~large enough), we have:
    $$
        0
        \le 
        I_2(\tau_k) - I_1(\tau_k)
        \le 
        \frac 2{\tau_k}.
    $$
    Furthermore, if $A_{\tau_k+\ell} = 1$, then $I_2(\tau_k+\ell) - I_1(\tau_k+\ell) \le 0$, so combined with Lemma~\ref{lemma:ucb key deviations} and denoting $\nu_0 := \frac{\mu_1+\mu_2}2$, it implies that
    $$
        \sum_{i<\ell} (R_{\tau_k+i} - \nu_0)
        \le 
        2\ell\epsilon + \frac{2N_2(\tau_k)}{\tau_k}
    $$
    where $R_{\tau_k+i} \sim {\rm B}(\mu_2)$.
    Provided that $\tau_k$ is large enough (i.e., that $k$ is large enough), this in particular implies that $\sum_{i<\ell} (R_{\tau_k+i} - \nu_0) \le 3h\epsilon$.
    Since $\epsilon$ can be chosen arbitrarily close to $0$, we deduce that, for all $\epsilon > 0$, 
    $$
        \liminf_{k \to \infty} \E\brackets{\Reg(\tau_k;\tau_k+h)} 
        \ge
        (\mu_1-\mu_2) 
        \E\brackets{
            \inf\set{t \le h: \sum\nolimits_{i<t} (X_t - \nu_0) \le \epsilon}
        }
    $$
    Since $\sum_{i < t} (X_t-\nu_0)$ takes finitely many values when $t \le h$, we have:
    $$
        \inf_{\epsilon > 0}
        \E\brackets{
            \inf\set{t \le h: \sum\nolimits_{i<t} (X_t - \nu_0) \le \epsilon}
        }
        =
        \E\brackets{
            \inf\set{t \le h: \sum\nolimits_{i<t} (X_t - \nu_0) \le 0}
        }
    $$
    This proves the result. 
\end{proof}

\clearpage
\section{General Index Theory}

We write $X_a(t)$ any data relative to the arm $a$, and $X_{-a}(t)$ any data relative to the other arm. 
The index of arm $a$ is thus denoted $I_a$, while $I_{-a}$ denotes the one of the other arm.

\subsection{Proof of Lemma~\ref{lemma:assumptions 1-3}}

\textbf{Lemma~\ref{lemma:assumptions 1-3}}
\textit{
    Assume that $I(-)$ satisfies \ASSUMPTION{1-3}.
    Then, for $a =1,2$, $\hat \mu_a(t) \to \mu_a$ a.s. 
}

\begin{proof}[Proof of Lemma~\ref{lemma:assumptions 1-3}]
\STEP{1}
    We start by showing that both arms are visited infinitely often, that is, for $a = 1, 2$ and for a fixed arbitrary $n$, $\Pr(\exists t: N_a(t) \ge n) = 1$. 
    By the Strong Law of Large Number (SLLN, or just time-uniform concentration inequalities), the result will follow. 
    Consider the complementary event.
    Remark that if $N_a(t) < n$, there must be $s \in \set{t-n, \ldots, t}$ such that $A_s \ne a$. 
    So, we have, for $\delta > 0$ small enough, 
    \begin{align*}
        & \Pr \parens{\forall t: N_a(t) < n}
        \\
        & \le
        \Pr \parens{\forall t, \exists s \ge t - n: N_a(s) < n, A_s \ne a}
        \\
        & = \lim_{t \to \infty}
        \Pr \parens{
            \exists s \ge t-n:
            N_a(s) < n,
            I_a(\hat \mu(s), N_a(s), s) \le I_{-a}(\hat \mu(s), N_{-a}(s), s)
        }
        \\
        & = \lim_{t \to \infty}
        \Pr \parens{
            \exists s \ge t-n:
            I_a(\hat \mu(s), n, s) \le I_{-a}(\hat \mu(s), s-n, s)
            , N_{-a}(s) \ge s - n
        }
        & \ASSUMPTION{1}
        \\
        & \le \lim_{t \to \infty}
        \Pr \parens{
            \exists s \ge t - n:
            I_a(\hat \mu(s), n, s) \le I_{-a}(\hat \mu(s), s-n, s),
            \abs{\hat \mu_{-a}(s) - \mu_{-a}} \le \delta
        }
        & \text{(SLLN)}
        \\
        & \le \lim_{t \to \infty}
        \Pr \parens{
            \exists s \ge t - n:
            I_a(0, \mu_{-a}-\delta, n, s) \le I_{-a}(\mu_{-a}+\delta, 1, s-n, s)
        }
        & \ASSUMPTION{1}
        \\
        & \le \lim_{t \to \infty}
        \Pr \parens{
            \exists s \ge t - n:
            I_a(0, \mu_{-a}-\delta, n, s) \le I(\mu_1, 1) + \oh_{\delta \to 0}(1)
        }
        & \ASSUMPTION{3}
        \\
        & = 0.
        & \ASSUMPTION{2}
    \end{align*}

    \STEP{2}
    Since all arms are pulled infinitely often, the empirical estimates must converge to the mean values by the Strong Law of Large Numbers. 
\end{proof}

\subsection{Proof of Lemma~\ref{lemma:index asymptotic regime}}

\textbf{Lemma~\ref{lemma:index asymptotic regime}}
\textit{
    If $I(-)$ satisfies \ASSUMPTION{1-6}, then $(\hat \mu_1(t), \hat \mu_2(t), N_1(t), N_2(t)) \sim (\mu_1, \mu_2, t, n_2(t))$ a.s. 
    The sequence $t \mapsto (\mu_1, \mu_2, t, n_2(t))$ will be called the \emph{asymptotic regime}.
}

\begin{proof}[Proof of Lemma~\ref{lemma:index asymptotic regime}]
    Since $n_2(t)$ is sublinear, we only have to show the property on $N_2(t)$ and everything will follow. 

    \STEP{1}
    Let $\epsilon > 0$ and focus on $a = 2$ the suboptimal arm. 
    For conciseness, denote $c := 1 - \epsilon$.
    Similarly to the previous point, remark that if $N_2(s) < cn_2(s)$, there must be some $u \in \set{s - cn_2(s), \ldots, s}$ when $A_u \ne 2$. 
    Let $F_t^\delta := (\forall a, \abs{\hat \mu_a(t) - \mu_a} < \delta)$ the concentration event, proved to hold eventually, as given by Lemma~\ref{lemma:assumptions 1-3}. 
    We then have:
    \begin{align*}
        & \Pr \parens{\forall t, \exists s \ge t: N_2(s) < c n_2(s)}
        \\
        & \le
        \lim_{t \to \infty}
        \Pr \parens{
            \exists s \ge t, \exists u \ge s - c n_2(s):
            N_2(s) \le c n_2(s), 
            I_2(u) \le I_1(u)
        }
        \\
        & \le \lim_{t \to \infty}
        \Pr \parens{
            \exists s \ge t, \exists u \ge s - c n_2(s):
            N_2(s) \le c n_2(s), 
            I_2(u) \le I_1(u),
            F_s^\delta
        }
        & (\text{Lem.~\ref{lemma:assumptions 1-3}})
        \\
        & \le 
        \lim_{t \to \infty}
        \Pr \parens{
            \exists s \ge t, \exists u \ge s - c n_2(s):
            \begin{array}{c} I(\mu_2-\delta, \mu_1+\delta, cn_2(s), u) \\ \le \\ I(\mu_1+\delta, \mu_2-\delta, u-cn_2(s), u) \end{array}
        }
        & \ASSUMPTION{1}
        \\
        & \le 
        \lim_{t \to \infty}
        \Pr \parens{
            \exists s \ge t, \exists u \ge s - c n_2(s):
            \begin{array}{c}
                I(\mu_2-\delta, \mu_1+\delta, cn_2(s), u)
                \\ \le \parens{1 + \underset{\delta \to 0}\oh(1) + \underset{t \to \infty}\oh(1)} I(\mu_1, \mu_2)
            \end{array}
        }
        & \ASSUMPTION{3,6}
        \\
        & \le 
        \lim_{t \to \infty}
        \Pr \parens{
            \exists s \ge t, \exists u \ge s - c n_2(s):
            \begin{array}{c}
                I(\mu_2-\delta, \mu_1+\delta, (1-\epsilon)n_2(s), u)
                \\ \le \parens{1+\oh(1)} 
                I(\mu_2-\delta, \mu_1+\delta, n_2(s), s)
            \end{array}
        }
        & \ASSUMPTION{4}
        \\
        & \le \lim_{t \to \infty}
        \Pr \parens{
            \exists s \ge t, \exists u \ge s - c n_2(s):
            \begin{array}{c}
                (1+\ell(\epsilon))
                I(\mu_2-\delta, \mu_1+\delta, n_2(s), s)
                \\ \le \parens{1 + \oh(1)} 
                I(\mu_2-\delta, \mu_1+\delta, n_2(s), s)
            \end{array}
        }
        & \ASSUMPTION{5}
        \\
        & \le \lim_{t \to \infty}
        \Pr \parens{
            \exists s \ge t, \exists u \ge s - c n_2(s):
            1 + \ell(\epsilon)
            \le 1 + \underset{\delta \to 0}\oh(1) + \underset{t \to \infty}\oh(1)
        }
        \\
        & = 0.
        & (\delta \to 0)
    \end{align*}

    \STEP{2}
    Let $\epsilon > 0$ and focus on $a = 2$ the suboptimal arm.
    This time, denote $c := 1 + \epsilon$. 
    The analysis is mostly similar, but the initial decomposition starts differently.
    Remark that if $N_2(s) > c n_2(s)$, then there must be $u \in \set{c n_2(s), \ldots, s}$ such that $A_u = 2$. 
    So, 

    \begin{align*}
        & \Pr (\forall t, \exists s \ge t: N_2(s) \ge c n_2(s))
        \\
        & \le \lim_{t \to \infty}
        \Pr \parens{
            \exists s \ge t, \exists u \in [cn_2(s), s]:
            N_2(u) \ge c n_2(s), 
            A_u = 2
        }
        \\
        & \le \lim_{t \to \infty}
        \Pr \parens{
            \exists s \ge t, \exists u \in [cn_2(s), s]:
            N_2(u) \ge c n_2(s), 
            I_2(u) \ge I_1(u)
        }
        \\
        & \le \lim_{t \to \infty}
        \Pr \parens{
            \exists s \ge t, \exists u \in [cn_2(s), s]:
            I_2(\hat \mu(u), c n_2(u), u) \ge I_1(\hat \mu(u), u, u)
        }
        & \ASSUMPTION{1}
        \\
        & \le \lim_{t \to \infty}
        \Pr \parens{
            \exists s \ge t, \exists u \in [cn_2(s), s]:
            I_2(\hat \mu(u), c n_2(u), u) \ge I_1(\hat \mu(u), u, u),
            F_u^\delta
        }
        % & \text{(Lem.~\ref{lemma:assumptions 1-3})}
        \\
        & \le \lim_{t \to \infty}
        \Pr \parens{
            \exists s \ge t, \exists u \in [cn_2(s), s]:
            I_2(\hat \mu(u), c n_2(u), u) \ge (1 + \oh(1)) I(\mu_1, \mu_2),
            F_u^\delta
        }
        & \ASSUMPTION{3}
        \\
        & \le \lim_{t \to \infty}
        \Pr \parens{
            \exists s \ge t, \exists u \in [cn_2(s), s]:
            F_u^\delta,
            \begin{array}{c}
                I_2(\hat \mu(u), (1+\epsilon) n_2(u), u) \\
                \ge \\ (1 + \oh(1)) I_2(\hat \mu(u), n_2(u), u)
            \end{array}
        }
        & \ASSUMPTION{4}
        \\
        & \le \lim_{t \to \infty}
        \Pr \parens{
            \exists s \ge t, \exists u \in [cn_2(s), s]:
            1 - \ell(\epsilon)
            \ge 1 + \underset{\delta \to 0}o(1) + \underset{t\to \infty}\oh(1)
        }
        & \ASSUMPTION{5}
        \\
        & = 0.
        & (\delta \to 0)
    \end{align*}
    So, $t\mapsto N_2(t)$ converges to $t \mapsto n_2(t)$ almost-surely for the asymptotic topology.
\end{proof}

\bibliography{bibliography}

\end{document}